\newtheorem{theorem}{Theorem}[section]
\newtheorem{lemma}{Lemma}[section]
\newtheorem{ass}{Assumption}[section]
\newtheorem{col}{Corollary}[section]
\newcommand{\pr}{\mathrm{P}}
\newcommand{\cov}{\operatorname{cov}}
\newcommand{\E}{\mathrm{E}}
\newcommand{\X}{\mathbf{X}}
\newcommand{\x}{\mathbf{x}}
\newcommand{\vthe}{\boldsymbol{\theta}}
\newcommand{\vbeta}{\boldsymbol{\beta}}
\newcommand{\Y}{\mathbf{Y}}
\newcommand{\bm}[1]{\mathbf{#1}}
\newcommand{\y}{\mathbf{y}}
\newcommand{\e}{\mathrm{e}}
\newcommand{\var}{\operatorname{var}}
\newcommand{\fdr}{\operatorname{FDR}}
\newcommand{\fdp}{\operatorname{FDP}}
\DeclareRobustCommand\widecheck[1]{{\mathpalette\@widecheck{#1}}}
\def\@widecheck#1#2{%
    \setbox\z@\hbox{\m@th$#1#2$}%
    \setbox\tw@\hbox{\m@th$#1%
       \widehat{%
          \vrule\@width\z@\@height\ht\z@
          \vrule\@height\z@\@width\wd\z@}$}%
    \dp\tw@-\ht\z@
    \@tempdima\ht\z@ \advance\@tempdima2\ht\tw@ \divide\@tempdima\thr@@
    \setbox\tw@\hbox{%
       \raise\@tempdima\hbox{\scalebox{1}[-1]{\lower\@tempdima\box
\tw@}}}%
    {\ooalign{\box\tw@ \cr \box\z@}}}
\newcommand{\algorithmicinput}{{Input:}}
\newcommand{\INPUT}{\item[\algorithmicinput]}
\def\aconstskip{\par\hrule height 0pt \nobreak\vskip -\baselineskip
   \noindent \hbox to \hsize{\hfill \vrule width 0pt depth \baselineskip}}
\title{
High-dimensional Inference and FDR Control for Simulated Markov Random Fields}
\author[1,+]{Haoyu Wei}
\affil[1]{Department of Economics, University of California San Diego, La Jolla, USA.}
\author[2,+]{Xiaoyu Lei}
\affil[2]{Department of Statistics, University of Wisconsin–Madison, Madison, USA.}
\author[3]{Yixin Han}
\affil[3]{Department of Mathematical and Statistical Sciences, University of Alberta, Canada.}
\author[4,5*]{Huiming Zhang}
\affil[4]{Institute of Artificial Intelligence, Beihang University, Beijing, China.}
\affil[5]{Zhuhai UM Science \& Technology Research Institute, Zhuhai, China.}
\affil[*]{zhanghuiming@buaa.edu.cn}
\affil[+]{these authors contributed equally to this work}
\begin{document}

\flushbottom
\maketitle
\begin{abstract}
     %This article explores statistical inference of the simulated Markov random fields in high-dimensional settings. We propose an approach based on Markov Chain Monte Carlo Maximum Likelihood Estimation (MCMC-MLE) with Elastic-net regularization. Under mild conditions on the MCMC method, we show that our penalized MCMC-MLE method achieves $\ell_{1}$-consistency. We construct a decorrelated score test and prove its asymptotic normality and then develop a one-step estimator and establish its asymptotic normality and confidence interval. Further, we provide false discovery rate (FDR) control in multiple testing via both p-values and e-values. Extensive numerical simulations demonstrate the theoretical validity of the proposed methods.

     Identifying important features linked to a response variable is a fundamental task in various scientific domains. 
     %In many scientific fields, it is an important task to identify pertinent features that are linked to a given response variable. 
     This article explores statistical inference for simulated Markov random fields in high-dimensional settings. We introduce a methodology based on Markov Chain Monte Carlo Maximum Likelihood Estimation (MCMC-MLE) with Elastic-net regularization. Under mild conditions on the MCMC method, our penalized MCMC-MLE method achieves $\ell_{1}$-consistency. We propose a decorrelated score test, establishing both its asymptotic normality and that of a one-step estimator, along with the associated confidence interval. %We construct a decorrelated score test with its asymptotic normality and establish a one-step estimator's asymptotic normality and confidence interval. based on the proposed asymptotic confidence interval,
     Furthermore, we construct two false discovery rate control procedures via the asymptotic behaviors for both p-values and e-values. Comprehensive numerical simulations confirm the theoretical validity of the proposed methods.
    
\end{abstract}

% * <john.hammersley@gmail.com> 2015-02-09T12:07:31.197Z:
%
%  Click the title above to edit the author information and abstract
%
\thispagestyle{empty}

\section{Introduction}\label{sec_ind}
\subsection{Backgrounds and Related Work}

The probabilistic graphical model is a framework comprising various probability distributions that decompose based on the architecture of an associated graph \cite{wainwright2008graphical}. This model adeptly encapsulates the intricate interdependencies among random variables, enabling the construction of extensive multivariate statistical models. These models have found extensive applications across diverse research domains. Notably, they are utilized in hierarchical Bayesian models \cite{gyftodimos2002hierarchical}, and in the analysis of contingency tables \cite{edwards1983analysis,wermuth1982graphical}, a key aspect of categorical data analysis \cite{agresti2003categorical,fienberg2000contingency}. Within these graphical models, undirected graphical models stand out, characterized by a probability distribution that factorizes based on functions defined over the graph's cliques. These undirected graphical models typically play a critical role in constraint satisfaction problems \cite{cook1971complexity,chandrasekaran2012complexity}, and have significant applications in language and speech processing \cite{blei2003latent,brown1968}. Their utility extends to image processing \cite{4767341,4767596,hassner1981use} and more broadly in the realm of spatial statistics \cite{besag1974spatial}, demonstrating their versatility and importance in contemporary research. These models are widely employed in diverse fields such as statistical physics \cite{ising1925beitrag}, natural language processing \cite{manning1999foundations}, image analysis \cite{woods1978markov}, and spatial statistics \cite{ripley1984spatial}. 

Our focus narrows specifically to undirected graphical models conceptualized as exponential families. This broad category of probability distributions has been extensively examined in statistical literature \cite{barndorff2014information,efron1978geometry}. The characteristics of exponential families forge insightful links between inference methodologies and convex analysis \cite{borwein2010convex,hiriart2013convex}. Notably, many renowned models are perceived as exponential families within undirected graphical models, including the Ising model \cite{ising1925beitrag,baxter2016exactly}, Gaussian random fields \cite{speed1986gaussian}, and latent Dirichlet allocation \cite{blei2003latent}. These models are all special instances of Markov Random Fields (MRFs) framed within exponential families \cite{rue2005gaussian}. Consequently, our paper will concentrate on this particular aspect of MRFs.

In the realm of graphical models, two crucial aspects are structure learning and parameter estimation \cite{wainwright2008graphical,jalali2011learning,miasojedow2018sparse,chen2023systematic}. The significance of Markov Random Fields (MRFs) with exponential families necessitates effective solutions tailored to this specific framework. A primary challenge in learning the structure of such models is the computationally daunting normalizing constant. In a scenario where a graphical model comprises $d$ vertices, with each variable at a vertex assuming values from a discrete state space of $r$ distinct elements, the normalizing constant escalates to a sum of $d^{r}$ terms, rendering computation impractical. To address this challenge, numerous methods have been proposed. A prominent approach is the pseudo-likelihood method \cite{besag1974spatial}, which substitutes the likelihood involving the normalizing constant with the product of conditional probabilities that do not include this constant. 
%This strategy has been extensively utilized in research focusing on model selection in high-dimensional graphical models \cite{xue2012nonconcave,jalali2011learning,hofling2009estimation}. 
However, its effectiveness is contingent on the pseudo-likelihood being a close approximation to the actual likelihood, which typically occurs in graphs with simpler structures. Another notable strategy to tackle the intractable normalizing constant is the Markov Chain Monte Carlo (MCMC) method \cite{gilks2005m,robert2013monte}. Here, the normalizing constant is approximated through a path integral over a Markov chain. A significant advantage of the MCMC method is that its computational cost does not depend on the complexity of the graph being estimated. This implies that sufficiently large sample sizes can effectively approximate the normalizing constant, offering a viable solution for handling more complex graphical structures.

In the evolving landscape of graphical models, a notable trend is their increasing dimensionality, often resulting in high-dimensional settings \cite{meinshausen2006high,hastie2015statistical,kurtz2015sparse}. In such scenarios, where the number of parameters ($p$) exceeds the number of independent samples ($n$), the maximum likelihood estimation becomes problematic. To address this, the Lasso method, a $\ell_1$ penalty, is commonly applied \cite{tibshirani1996regression}, effectively constraining parameters to prevent overfitting. While Lasso can zero out certain parameters, ridge regression ensures that no parameter is entirely excluded. The Elastic-net method, a combination of Lasso ($\ell_1$-)penalty and ridge ($\ell_2$-)penalty \cite{zou2005regularization}, will be used in our study. This approach, often outperforming Lasso in practice, can select multiple correlated variables simultaneously, demonstrating a group effect and handling multicollinearity effectively. Another challenge in a high-dimensional graphical model is discerning the most influential factors, crucial in fields like genomics for identifying genes related to specific diseases. The false discovery rate (FDR), as proposed by \cite{benjamini1995controlling}, is preferred over the family-wise error rate (FWER) in these scenarios due to its suitability for large-scale hypothesis testing. The Benjamini-Hochberg (BH) procedure \cite{benjamini1995controlling} is commonly used for controlling the FDR, especially effective with asymptotically normal estimators in high dimensions. Additionally, \textit{e-values} \cite{vovk2021values} have emerged as a novel and mathematically convenient tool for multiple testing, gaining prominence in statistical analysis \cite{vovk2019true,cui2021directional}.
%There are many approaches trying to overcome the second obstacle that is the intractable norming constant.

In high-dimensional graphical models, a significant hurdle is the estimation of the intractable normalizing constant from the Markov Chain Monte Carlo (MCMC) component. In low-dimensional cases, the normalizing constant is often linked to MCMC-MLE, the minimization of the MCMC approximation of the likelihood, as discussed in various studies \cite{geyer1992markov, geyer1994convergence, banerjee2008model}. However, the high-dimensional scenario introduces the complexity of numerous nuisance parameters, rendering traditional partial-likelihood-based inferences impractical. To address this, some researchers have explored penalized MCMC approximations of the likelihood. For example, \cite{miasojedow2018sparse} applied an $\ell_1$ penalty to the MCMC approximation for the Ising model, while \cite{geng2018stochastic} implemented an $\ell_1$ penalty for discrete MRFs. Additionally, \cite{bogdandiscussion} discussed the potential of this approach in Bayesian logic regression using arbitrary priors from MCMC-MLE. Despite these advancements, there remains a big gap in the literature regarding the analysis of general MRFs within the exponential family for high-dimensional settings.

\subsection{Our Contributions}
~~~\\
In our study, we bridge a significant gap by integrating $\ell_1$- and $\ell_2$-penalties in the MCMC-MLE for general MRFs within the exponential family. Our contributions are three-fold. First, we derive an oracle inequality for the MCMC-MLE under Elastic-net penalty, establishing the $\ell_{1}$-consistency of the estimator under conditions such as the compatibility condition \cite{van2008high,buhlmann2013statistical}; see Appendix \ref{app_ConIneq_MC}.  

%{\color{red}[Could you please use cross cite?]}

%We notice to obtain the $\ell_{1}$ consistency, we need further assume $\sqrt{\frac{\log p}{m}}= o(1)$ besides the conventional assumption $\sqrt{\frac{\log p}{n}}= o(1)$ without introducing Monte Carlo method.
To address the issue of high-dimensional nuisance parameters, we construct a decorrelated score function by projecting the score function of the parameter onto the space spanned by the nuisance parameters. This approach yields a corresponding decorrelated score test statistic \cite{ning2017general,shi2021statistical,chernozhukov2023high} and facilitates the proposal of an asymptotically unbiased one-step estimator. %inspired by Newton's method. 
%The asymptotic normality and the confidence interval are also established. We notice that our work agrees with the existing work for high dimensional inferential statistics where a stronger assumption $\frac{\log p}{\sqrt{n}}= o(1)$ is needed, where $n$ is the sample size, and $p$ is the dimension of the model parameter. To get the desired asymptotic normality when introducing the Monte Carlo method with $m$ simulated samples, one needs to accelerate the convergence of the Monte Carlo method and assume that $\frac{m}{n}\asymp\log p$, which is a theoretical new finding and a remarkable difference from existing work.\\
We establish asymptotic normality and confidence intervals in our work, aligning with high-dimensional inferential statistics. with a stronger assumption ${\log p}/{\sqrt{n}}= o(1)$, where $n$ denotes the sample size, and $p$ signifies the dimension of the model parameter. Achieving the desired asymptotic normality with the introduction of the Monte Carlo method and $m$ simulated samples necessitates the acceleration of its convergence. This requires the assumption $\frac{m}{n}\asymp\log p$, marking a theoretical breakthrough and a notable departure from current literature.

%In the high dimensional background, the classical profile partial score function for the low dimensional setting does not yield a tractable limiting distribution due to the existence of a large number of nuisance parameters. However, the newly constructed decorrelated score function is asymptotically normal even in high dimensions, overcoming the problem by handling the effect of nuisance parameters. 
Finally, leveraging this one-step estimator, we propose algorithms for controlling the FDR in both e-value \cite{vovk2021values,2020False} and p-value frameworks, which enhances the applicability of our approach to high-dimensional MRFs. 

%{\color{blue}{\color{red}[This is disadvantage not contribution]}
%Despite considerable efforts devoted to advancing the statistical inferential theory of high-dimensional statistical models, our knowledge gap persists regarding the construction of statistics for testing hypotheses and FDR control in simulated MRFs.

%Our work focuses on the Markov chain Monte Carlo maximum likelihood estimation (MCMC-MLE)\cite{geyer1994convergence} for the undirected graphical models in an exponential form, where the normalizing constant in the original likelihood is replaced by the approximation of sampling technique using MCMC method.

\subsection{Outline and Notations}

The paper is outlined as follows. 
The paper is outlined as follows.
Section \ref{sec-2} provides background: it gives an overview of graphical models, the MCMC-MLE method, Elastic-net penalized estimation for high-dimensional data, and concentration inequalities for Markov chains.
%In Section \ref{sec-3}, we derive the oracle inequality and demonstrate the $\ell_{1}$ consistency of the Elastic-net-penalized MCMC-MLE. 
Section \ref{sec-4} constructs the decorrelated score function and corresponding test statistic, proving their asymptotic normality. This section also introduces an asymptotically unbiased one-step estimator used to formulate a confidence interval for a single parameter of interest.
Section \ref{sec-6} presents two distinct procedures for controlling the FDR, utilizing p-values and e-values. Finally, Section \ref{sec_sim} conducts simulations to validate the theoretical aspects discussed.

Throughout the paper, we define the following notations. Let $X_{n}$ be a series of random variables and $a_{n}$ a series of constants. The notation $X_{n}=o_{p}(a_{n})$ is used to denote that $\frac{X_{n}}{a_{n}}$ converges to zero in probability as $n$ tends to infinity. Conversely, $X_{n}=O_{p}(a_{n})$ indicates that the set of random variables $\{\frac{X_{n}}{a_{n}}\}_{n\ge 1}$ is stochastically bounded. For two series of constants $a_{n}$ and $b_{n}$, the notation $a_{n}\lesssim b_{n}$ implies the existence of a constant $c > 0$ such that $a_{n} \le c b_{n}$ for all $n$. Analogously, $a_{n}\gtrsim b_{n}$ indicates the lower bound. Let $\rightsquigarrow$ denote weak convergence. The notation $a_{n}\asymp b_{n}$ is used when both $a_{n} \lesssim b_{n}$ and $a_{n}\gtrsim b_{n}$ hold true.  

\section{Preliminaries}\label{sec-2}

\subsection{Graphical Models within Exponential Families}

Define graph model $G=(V, E)$, where $V=\{1,2,\cdots,d\}$ is a set of vertices and $E\subset V\times V$ is a set of edges. %Each edge, comprising a unique pair of vertices $s,t\in E$, can be either directed or undirected. 
For each vertex $s\in V$, assign a random variable $X_{s}$
To establish a probability distribution associated with the graph. %, we assign a random variable $X_{s}$ to each vertex $s\in V$, where $X_{s}$ assumes values in the state space $\mathcal{X}_{s}$. 
In our study, we focus on the finite discrete state space $\mathcal{X}_{s}=\{1,2,\cdots,r\}$. We denote the Cartesian product of the state spaces $\{\mathcal{X}_{s} : s\in V\}$—where the random vector $\X=(X_{s},s\in V)$ takes values—by $\prod_{s\in V}\mathcal{X}{s}$. The notation $x{s}$ represents a specific element in $\mathcal{X}_{s}$, and $\x=(x_{1},\cdots,x_{d})^{\top}$ signifies a specific value within the space $\prod_{s\in V}\mathcal{X}_{s}$. For any subset $A$ of the vertex set $V$, we define $\X_{A}:=(X_{s},s\in A)$ as the sub-vector of the random vector and $\x_{A}:=(x_{s},s\in A)$ as a specific element of the random sub-vector.
 
The undirected graphical models are factorized according to a function defined over the graph's cliques. A clique $C$ is identified as a fully connected subset of the vertex set, satisfying $(s,t)\in E$ for every $s,t\in C$. For each clique $C$, we introduce a compatibility function $\psi_{C}:\prod_{s\in C}\mathcal{X}_{s} \rightarrow \mathbb{R}_{+}$. Let $\mathcal{C}$ represent a set of the graph's cliques. The undirected graphical model also referred to as the Markov random field (MRF, \cite{kindermann1980markov}), is expressed as
\[
    p(\x):=\frac{1}{Z}\prod_{C\in \mathcal{C}}\psi_{C}(\x_{C}),\quad\text{where}\quad Z:=\sum_{\x\in \prod_{s\in V}\mathcal{X}_{s}}\prod_{C\in \mathcal{C}}\psi_{C}(\x_{C}),
\]
with $Z$ is the constant chosen to normalize the distribution. %As discussed previously, here our focus is the MRFs within the exponential family, specifically 
Concentrating on the canonical exponential family.  %The extensions to the curved exponential family are straightforward. Thus, 
the probability mass function for $(\mathcal{X}_{s},s\in V)$ is given by
\[ 
    p (\x \mid \vthe) = \frac{1}{C(\vthe)} \e^{\vthe^{\top} \varphi(\x)}, ~~\text{ where }~~ C(\vthe) = \sum_{\x \in \prod_{s \in V} \mathcal{X}_s} \e^{\vthe^{\top} \varphi(\x)},
\]
and $\vthe=(\theta_{1},\cdots,\theta_{p})^{\top}$ is the unknown parameters of interest. Denote ${\mathcal{X}}=\prod_{s\in V}\mathcal{X}_{s}$. %Utilizing the properties of exponential families, 
Let $\nabla \log C(\vthe)$ and $\nabla^{2} \log C(\vthe)$ be the expectation and covariance of the random vector $\varphi(\X)$, respectively.
% \begin{equation}\label{2-3}
% \operatorname{E}_{\vthe}(\varphi(\x))=\nabla \log(C(\vthe))
% \text{ and }
% \operatorname{Cov}_{\vthe}(\varphi(\x)):=
% \operatorname{E}_{\vthe}((\varphi(\x)-\operatorname{E}_{\vthe}(\varphi(\x)))^{\otimes 2})=\nabla^{2} \log(C(\vthe)),
% \end{equation}
%where for a column vector $\bm{r}$, $\bm{r}^{\otimes 2}=\bm{r}\bm{r}^{\top}$. 
Specifically, by taking $\varphi_i (\x) = \varphi_{(j, k)}(\x) = x_j x_k$ where $x_j \in \{-1, +1\}$, the model evolves into the Ising model in \cite{miasojedow2018sparse}. Further, constraining $\mathcal{X}_s =\{0, 1 \}$ for any $s \in V$ transforms the model into the discrete MRFs, also discussed in \cite{geng2018stochastic}. Consequently, our model is considerably more general than those in the existing literature.

\subsection{Penalized Markov Chain Monte Carlo Likelihood }

Suppose $\x_{1},\cdots,\x_{n} \in \mathbb{R}^d$ are independent and identically distributed from distribution $p(\x \mid \vthe^*)$, where $\vthe^*$ is the true parameter. We construct the negative log-likelihood function as follows:
\begin{equation}\label{2-4}
    \mathcal{L}_{n}(\vthe;\x_{1},\cdots,\x_{n})=
    -\frac{1}{n}\sum_{i=1}^{n}\log(p(\x_{i} \mid \vthe))=
    -\frac{1}{n}\sum_{i=1}^{n}\vthe^{\top}\varphi(\x_{i})+\log(C(\vthe)).
\end{equation}
We then employ the MCMC method for approximation of the normalized constant $C(\vthe)$. Consider $\Y_{1},\cdots,\Y_{m}$ as a Markov chain with its stationary distribution on the product space $\bm{\mathcal{X}}$ being $h(\Y)$. By the ergodic property of the Markov chain, $C(\vthe)$ can be approximated through a path integral
\begin{equation}\label{2-5}
    C(\vthe)=\operatorname{E}_{\Y\sim h(\Y)}\frac{\e^{\vthe^{\top}\varphi(\Y)}}{h(\Y)}
    \approx \frac{1}{m}\sum_{i=1}^{m}\frac{\e^{\vthe^{\top}\varphi(\Y_{i})}}{h(\Y_{i})}.
\end{equation}
Combining \eqref{2-4} and \eqref{2-5}, the MCMC log-likelihood function $\mathcal{L}_{n}^{m}$ is
\begin{equation*}
    \mathcal{L}_{n}^{m}(\vthe;\x_{1},\cdots,\x_{n})=
    -\frac{1}{n}\sum_{i=1}^{n}\vthe^{\top}\varphi(\x_{i})+\log\left(\frac{1}{m}\sum_{i=1}^{m}\frac{\e^{\vthe^{\top}\varphi(\Y_{i})}}{h(\Y_{i})}\right).
\end{equation*}
% In the low dimensional setting, the Markov chain Monte Carlo maximum likelihood estimation $\widehat{\vthe}_{n}^{m}$ is the parameter which minimizes the surrogate negative log-likelihood function $\mathcal{L}_{n}^{m}(\vthe)$, i.e. 
% \begin{equation*}
%     \widehat{\vthe}_{n}^{m}=\mathop{\arg\min}_{\vthe} \mathcal{L}_{n}^{m}(\vthe).
% \end{equation*}
% In many cases, we use the MCMC-MLE to construct the confidence intervals and the hypothesis test statistics.
Denote
$w_{i}(\vthe):=\frac{\e^{\vthe^{\top}\varphi(\Y_{i})}}{h(\Y_{i})} $ and $\overline{\varphi}(\vthe):=\frac{\sum_{i=1}^{m}w_{i}(\vthe)\varphi(\Y_{i})}{\sum_{i=1}^{m}w_{i}(\vthe)}.$ Then the gradient $\nabla \mathcal{L}_{n}^{m}$ and the Hessian $\nabla^{2} \mathcal{L}_{n}^{m}$ can be re-expressed as
\begin{equation}\label{2-19}
    \nabla \mathcal{L}_{n}^{m}(\vthe)=-\frac{1}{n}\sum_{i=1}^{n}\varphi(\x_{i}) + \frac{\sum_{i=1}^{m}w_{i}(\vthe)\varphi(\Y_{i})}{\sum_{i=1}^{m}w_{i}(\vthe)}=-\frac{1}{n}\sum_{i=1}^{n}\varphi(\x_{i})+\overline{\varphi}(\vthe),
\end{equation}
% where $\varphi(\theta) := \frac{\sum_{i=i}^{m}w_{i}(\vthe)\varphi(\Y_{i})}{\sum_{i=1}^{m}w_{i}(\vthe)}$
% and
\begin{equation}\label{2-20}
    \nabla^{2} \mathcal{L}_{n}^{m}(\vthe)=\frac{\sum_{i=1}^{m}w_{i}(\vthe)\varphi(\Y_{i})^{\otimes 2}}{\sum_{i=1}^{m}w_{i}(\vthe)}-\left(\frac{\sum_{i=1}^{m}w_{i}(\vthe)\varphi(\Y_{i})}{\sum_{i=1}^{m}w_{i}(\vthe)}\right)^{\otimes 2}=\frac{\sum_{i=1}^{m}w_{i}(\vthe)(\varphi(\Y_{i})-\overline{\varphi}(\vthe))^{\otimes 2}}{\sum_{i=1}^{m}w_{i}(\vthe)}.
\end{equation}
It is clear that $\nabla^{2} \mathcal{L}_{n}^{m}(\vthe)$ is convex. Denote $H^{*} := \cov_{\vthe^{*}}(\varphi(\bm{X}))$ as the covariance matrix of the function $\varphi(\X)$ under the true value $\vthe^*$.
%\subsection{The Elastic-net penalty in the high dimension}

In high-dimensional cases where the number of parameters significantly exceeds the number of samples ($p \gg n$), the minimum of $\mathcal{L}_{n}^{m}(\vthe)$ is not unique, resulting in an ill-posed minimizer of MCMC log-likelihood. To address this issue, we consider deploying a penalty to the likelihood function in this article. We employ the Elastic-net penalty, which combines the benefits of both the Lasso method and ridge regression. The Elastic-net-penalized MCMC-MLE we propose is formulated as
\begin{equation}\label{2-8}
    \widehat{\vthe} := \widehat{\vthe}_{n}^{m}(\lambda_{1},\lambda_{2})=\mathop{\arg\min}_{\vthe \in \mathbb{R}^d} \mathcal{L}_{n}^{m}(\vthe)+\lambda_{1}\left\|\vthe\right\|_{1}+\lambda_{2}\left\|\vthe\right\|_{2}^{2},
\end{equation}
where $\lambda_{1},\lambda_{2} > 0$ are two tuning parameters. The terms $\left\|\vthe\right\|_{1}:=\sum_{i=1}^{p}|\theta_{i}|$ and $\left\|\vthe\right\|_{2}:=(\sum_{i=1}^{p}\theta_{i}^{2})^{\frac{1}{2}}$ represent the $\ell_{1}$-norm and $\ell_{2}$-norm of the vector $\vthe$, respectively. 

Under certain regularity conditions, we establish the non-asymptotic oracle inequality for $\widehat{\vthe}$ in Theorem~\ref{thm-1} in Appendix and it leads to convergence rate for  $\widehat{\vthe}$ that can be served for deriving decorrelated score test and constructing confidence interval for the component of $\widehat{\vthe}$; see Theorem \ref{5-2}.

\section{Decorrelated Score Test and Confidence Interval}\label{sec-4}

%\subsection{Decorrelated score test}

In high-dimensional inference, testing a specific element $\alpha^*$ within the true vector $\vthe^*$ presents some challenges. Unlike low-dimensional scenarios, the presence of a high-dimensional nuisance parameter set $\vthe^* \setminus \{ \alpha^* \}$ complicates the development of valid inferential methods. Traditional approaches like partial-likelihood-based inference \cite{ning2017general,fang2017testing,shi2021statistical}, become impractical in this context due to the intractability of the limiting distribution and the complexity introduced by numerous nuisance parameters.
% Suppose we want to test a specific interested element $\alpha^* \in \mathbb{R}$ in the vector $\vthe^*$. Different from the low-dimensional case, the main challenge for developing valid inferential methods is due to the presence of a high
% dimensional nuisance parameter $\vthe^* / \{ \alpha^* \}$, which makes the existing partial-likelihood-based inference infeasible .  
Motivated from \cite{fang2017testing}, we propose a decorrelated score test to address the hypothesis testing of $H_{0}:\alpha^{}=\alpha_{0}$ versus $H_{1}:\alpha^{}\neq\alpha_{0}$. This approach is analogous to the traditional score test but modified to suit high-dimensional contexts. In such settings, a direct extension of the profile partial score test becomes infeasible due to the intractable limiting distribution caused by the large number of nuisance parameters. We address this challenge by adopting a decorrelated method inspired by a projection technique, which helps to mitigate the influence of these nuisance parameters.

%we propose a decorrelated score test for the testing of hypothesis $H_{0}:\alpha^{*}=\alpha_{0}$ versus $H_{1}:\alpha^{*}\neq\alpha_{0}$, which can be seen as the analog of the conventional score test. However, the direct extension of the profile partial score test from the low-dimension case is infeasible because the limiting distribution is intractable because of the large number of nuisance parameters. To overcome the difficulty, we propose a decorrelated method to eliminate the influence of nuisance parameters inspired by a projection method.

Without loss of generality, consider $\vthe$ as $(\alpha,\vbeta^{\top})^{\top}$, with $\alpha$ being the first scalar component. The gradients and Hessian are partitioned as $ \nabla \mathcal{L}_{n}^{m}=(\nabla_{\alpha} \mathcal{L}_{n}^{m},(\nabla_{\vbeta}\mathcal{L}_{n}^{m})^{\top})^{\top} $ and $\nabla^{2} \mathcal{L}_{n}^{m}= \left(
    \begin{array}{cc}
  \nabla^{2}_{\alpha\alpha} \mathcal{L}_{n}^{m}   & (\nabla^{2}_{\alpha\vbeta} \mathcal{L}_{n}^{m})^{\top} \\
   \nabla^{2}_{\alpha\vbeta} \mathcal{L}_{n}^{m}  & \nabla^{2}_{\vbeta\vbeta} \mathcal{L}_{n}^{m}
\end{array}
\right)$. Similarly, we denote the blocks of $H^{*} = \cov_{\vthe^{*}} \big(\varphi(\x) \big)$ as
$
H^{*}=
\left(
\begin{array}{cc}
  H^{*}_{\alpha\alpha}   & (H^{*}_{\alpha\vbeta})^{\top} \\
    H^{*}_{\alpha\vbeta}  & H^{*}_{\vbeta\vbeta}
\end{array}
\right)$. To test the hypothesis $H_{0}:\alpha^{*}=\alpha_{0}$, we first estimate the nuisance parameters $\vbeta$ by $\widehat{\vbeta}$ using the Elastic-net-penalized MCMC-MLE defined in \eqref{2-8}. Then, we approximate $\nabla_{\alpha}\mathcal{L}_{n}^m(\vthe^{})$ using the partial score function $\nabla_{\vbeta}\mathcal{L}_{n}^m (\vthe^{*})$ in terms of expectation:
\[
    \bm{w}^{*}=\mathop{\arg\min}_{\bm{w}\in\mathbb{R}^{p-1}}\operatorname{E} \big[\nabla_{\alpha}\mathcal{L}_{n}(\vthe^{*})-\bm{w}^{\top}\nabla_{\vbeta}\mathcal{L}_{n}(\vthe^{*}) \big]^{2} = (H^{*}_{\vbeta\vbeta})^{-1}H^{*}_{\alpha\vbeta},
\]
where $\bm{w}^{\top}\nabla_{\vbeta}\mathcal{L}_{n}^m(\vthe^{*})$ is the projection of $\nabla_{\alpha}\mathcal{L}_{n}^m(\vthe^{*})$ onto the linear space spanned by $\nabla_{\vbeta}\mathcal{L}_{n}^m(\vthe^{*})$ elements. In high-dimension case, direct computation of $\bm{w}^{*}$ from sample data is problematic, so we estimate $\bm{w}^{*}$ using a lasso-type estimator $\widehat{\bm{w}}$:
\begin{equation*}
    \widehat{\bm{w}}=\mathop{\arg\min}_{\bm{w}\in\mathbb{R}^{p-1}}\frac{1}{2}\bm{w}^{\top}\nabla^{2}_{{\vbeta\vbeta}}\mathcal{L}_{n}^{m}(\widehat{\vthe})\bm{w}-\bm{w}^{\top}\nabla^{2}_{\alpha\vbeta}\mathcal{L}_{n}^{m}(\widehat{\vthe})+\lambda^{\prime}\|\bm{w}\|_{1},
\end{equation*}
where $\lambda^{\prime} > 0$ is another tuning parameter. Building upon this, we propose a decorrelated score function:
\begin{equation}\label{4-7}
    \widehat{U}(\alpha,\widehat{\vbeta}):= \nabla_{\alpha}\mathcal{L}_{n}^{m}(\alpha,\widehat{\vbeta}) - \widehat{\bm{w}}^{\top}\nabla_{\vbeta}\mathcal{L}_{n}^{m}(\alpha,\widehat{\vbeta}).
\end{equation}
Intuitively, this decorrelated score function removes the effects of the high-dimensional nuisance parameters \cite{ning2017general}.
Then to establish the limiting distribution of $\widehat{U}(\alpha,\widehat{\vbeta})$ under the null hypothesis $H_{0}:\alpha^{*}=\alpha_{0}$, the following assumptions regarding the properties of the covariance matrix $H^{*}$ and the weight vector $\bm{w}^{*}$ are required.
\begin{ass}\label{4-5}
    The eigenvalues of the covariance matrix $H^{*}$ are lower and upper bounded: $\lambda_{\min} \le \lambda_{\min}(H^{}) \le \lambda_{\max}(H^{*}) \le \lambda_{\max}$ for some $0 < \lambda_{\min} < \lambda_{\max}$.
\end{ass}
\begin{ass}\label{4-6}
    The $\ell_{\infty}$-norm of $\bm{w}^{*}$ is bounded: $\|\bm{w}^{*}\|_{\infty} \le D$ for some $D > 0$.
\end{ass}
Define the $\ell_0$-norm as $\| \bm{v} \|_0 := \# \{ j \in [p] : v_j \neq 0\} $ for any vector $\bm{v} \in \mathbb{R}^p$, and let $s :=\|\vthe^{*}\|_{0} $ and  $s^{\prime}:=\|\bm{w}^{*}\|_{0}$. The following lemmas establish the asymptotic normality of $\nabla\mathcal{L}_{n}^{m}(\vthe^{*})$ and the consistency of the estimator $\widehat {\bm w}$.
\begin{lemma}\label{4-1}
Suppose Assumptions \ref{3-4} and \ref{4-5}
hold. If $\frac{m}{n} \gtrsim \log p$ ,  for any vector $\bm{v}\in\mathbb{R}^{p}$ with $\|\bm{v}\|_{0} = O(1)$, it holds that
\begin{equation*}
    \frac{\sqrt{n}\bm{v}^{\top}\nabla\mathcal{L}_{n}^{m}(\vthe^{*})}{\sqrt{\bm{v}^{\top}H^{*}\bm{v}}} \, \rightsquigarrow \, \mathcal{N}(0,1).
\end{equation*}
\end{lemma}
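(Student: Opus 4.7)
The plan is to split $\sqrt{n}\,\bm{v}^{\top}\nabla\mathcal{L}_{n}^{m}(\vthe^{*})$ into an i.i.d.\ sample-average term governed by the classical CLT and a Monte-Carlo remainder that must be shown to be asymptotically negligible. Using the identity $\nabla\log C(\vthe^{*}) = \E_{\vthe^{*}}\varphi(\X)$ together with the representation \eqref{2-19}, I would write
$$
\sqrt{n}\,\bm{v}^{\top}\nabla\mathcal{L}_{n}^{m}(\vthe^{*}) \;=\; \underbrace{-\frac{1}{\sqrt{n}}\sum_{i=1}^{n} \bm{v}^{\top}\bigl(\varphi(\x_{i}) - \E_{\vthe^{*}}\varphi(\X)\bigr)}_{S_{n}} \;+\; \underbrace{\sqrt{n}\,\bm{v}^{\top}\bigl(\overline{\varphi}(\vthe^{*}) - \E_{\vthe^{*}}\varphi(\X)\bigr)}_{R_{n,m}}.
$$

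The term $S_{n}$ is a normalised sum of i.i.d., mean-zero, bounded random variables (since $\varphi$ lives on the finite product space $\mathcal{X}$) with common variance $\bm{v}^{\top}H^{*}\bm{v}$. Because $\bm{v}$ has bounded support ($\|\bm{v}\|_{0} = O(1)$), the Lindeberg CLT applies directly and yields $S_{n} \rightsquigarrow \mathcal{N}(0,\bm{v}^{\top}H^{*}\bm{v})$. After dividing by $\sqrt{\bm{v}^{\top}H^{*}\bm{v}}$, which is bounded below by $\sqrt{\lambda_{\min}}\,\|\bm{v}\|_{2}$ thanks to Assumption~\ref{4-5}, the standardised sum has a standard normal limit.

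The bulk of the work lies in showing $R_{n,m} = o_{p}\bigl(\sqrt{\bm{v}^{\top}H^{*}\bm{v}}\bigr)$. Recall that $\overline{\varphi}(\vthe^{*})$ is a ratio of path averages whose numerator $m^{-1}\sum_{i} w_{i}(\vthe^{*})\varphi(\Y_{i})$ has stationary expectation $C(\vthe^{*})\E_{\vthe^{*}}\varphi(\X)$ under $h$ and whose denominator $m^{-1}\sum_{i} w_{i}(\vthe^{*})$ has stationary expectation $C(\vthe^{*})$. Applying the Markov-chain concentration inequality of Assumption~\ref{3-4} to the two scalar test functions $w\,\bm{v}^{\top}\varphi$ and $w$ (whose sup-norms are controlled by $\|\bm{v}\|_{1}\lesssim\|\bm{v}\|_{2}$ since $\|\bm{v}\|_{0}=O(1)$), and then linearising the ratio around its limit, I would obtain $\bm{v}^{\top}\bigl(\overline{\varphi}(\vthe^{*}) - \E_{\vthe^{*}}\varphi(\X)\bigr) = O_{p}(\|\bm{v}\|_{2}/\sqrt{m})$. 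Consequently $R_{n,m} = O_{p}(\|\bm{v}\|_{2}\sqrt{n/m})$, and the chain-length assumption $m/n \gtrsim \log p$ (with $p\to\infty$) forces $\sqrt{n/m}=O(1/\sqrt{\log p}) \to 0$, so that $R_{n,m}/\sqrt{\bm{v}^{\top}H^{*}\bm{v}}=o_{p}(1)$. A Slutsky argument then combines this with the CLT limit for $S_{n}$ to give the claim.

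The principal obstacle is the control of $R_{n,m}$: the numerator and denominator defining $\overline{\varphi}(\vthe^{*})$ are correlated averages along the \emph{same} Markov chain, so the linearisation of their ratio has to be justified uniformly over the high-probability event that the denominator stays bounded away from $C(\vthe^{*})$, and Assumption~\ref{3-4} must deliver the sharp $m^{-1/2}$ rate (without any $\log p$ factor, which is available here only because $\bm{v}^{\top}\varphi$ is a single bounded test function rather than a union bound over $p$ coordinates). The condition $m/n\gtrsim\log p$ is calibrated precisely so that the resulting Monte-Carlo error is of smaller order than the $n^{-1/2}$ scale produced by the CLT.
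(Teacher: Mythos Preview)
Your proposal is correct and follows essentially the same two-term decomposition as the paper (i.i.d.\ CLT term plus Monte-Carlo remainder, combined via Slutsky), but your control of the remainder $R_{n,m}$ is genuinely different from the paper's. The paper bounds $|B|\le \sqrt{n\|\bm v\|_0/\lambda_{\min}}\cdot\|\overline{\varphi}(\vthe^{*})-\E_{\vthe^{*}}\varphi(\X)\|_{\infty}$ and then reuses the coordinate-wise concentration from Lemma~\ref{3-9}, which carries a factor $p$ from a union bound over the $p$ coordinates; the condition $m/n\gtrsim\log p$ is used precisely to kill that factor, via $2p\cdot e^{-c\,m/n}\le 2p^{1-r_3}\to 0$. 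You instead apply the Markov-chain Bernstein inequality directly to the single bounded scalar $w(\Y)\,\bm v^{\top}\varphi(\Y)$, avoiding the union bound altogether; your argument therefore actually works under the weaker hypothesis $m/n\to\infty$. The paper's route has the advantage of recycling Lemma~\ref{3-9} verbatim, while yours is more tailored and slightly sharper. One small slip: the concentration you invoke is not ``Assumption~\ref{3-4}'' (which only asserts $\|\varphi\|_\infty\le K$) but the Bernstein-type inequality of Lemma~\ref{2-17}; also, note that the denominator event $\{m^{-1}\sum_i w_i(\vthe^{*})\ge C(\vthe^{*})/2\}$ is exactly the event $D^{c}$ handled in the proof of Lemma~\ref{3-9}, so your ratio linearisation is fully justified on that high-probability set.
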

%\noindent Lemma \ref{4-1} suggests the need for an additional assumption, $\frac{m}{n} \gtrsim \log p$, to ensure the asymptotic normality of $\nabla\mathcal{L}_{n}^{m}(\vthe^{*})$. This implies that a higher sample size is required for the MCMC method to converge more rapidly. Another critical lemma pertains to the consistency of the estimator $\widehat{\bm{w}}$.

% \begin{rem}
%     From the Lemma \ref{4-1} we know that, to ensure the asymptotic normality of $\nabla\mathcal{L}_{n}^{m}(\vthe^{*})$, we need to further assume $\frac{m}{n}\asymp\log p$ as the extra assumption. This means we need more samples to accelerate the convergence of MCMC method.
% \end{rem}

%Another lemma below shows the consistency of the estimator $\widehat{\bm{w}}$, which is essential in the later proof.

\begin{lemma}\label{4-4}
    Suppose Assumptions \ref{3-4}-\ref{3-6}, \ref{4-5}- \ref{4-6} hold. Given $\lambda_{1} \asymp \bigg( \sqrt{\frac{\log p }{n}} + \sqrt{\frac{\log p}{m}} + \frac{\log p}{m} \bigg)$, $\lambda_{1} \asymp \lambda_{2} \asymp \lambda^{\prime} = o(1)$, and $s = s^{\prime} = O(1)$, we have
    \begin{equation*}
        \|\widehat{\bm{w}}-\bm{w}^{*}\|_{1}=O_{p}\left((s+s^{\prime})\left(\sqrt{\frac{\log p}{n}}+\sqrt{\frac{\log p}{m}} + \frac{\log p}{m}\right)\right).
    \end{equation*}
\end{lemma}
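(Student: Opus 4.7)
The plan is to treat the program defining $\widehat{\bm w}$ as a Lasso / node-wise regression and run the standard basic-inequality plus compatibility analysis, adapted to the fact that the "design" is the plug-in MCMC Hessian $\nabla^{2}_{\vbeta\vbeta}\mathcal{L}_{n}^{m}(\widehat\vthe)$ and the "response" is the MCMC cross-Hessian $\nabla^{2}_{\alpha\vbeta}\mathcal{L}_{n}^{m}(\widehat\vthe)$. First I would write the KKT optimality: there is $\widehat{\bm z}\in\partial\|\widehat{\bm w}\|_{1}$ with $\nabla^{2}_{\vbeta\vbeta}\mathcal{L}_{n}^{m}(\widehat\vthe)\widehat{\bm w}-\nabla^{2}_{\alpha\vbeta}\mathcal{L}_{n}^{m}(\widehat\vthe)+\lambda'\widehat{\bm z}=\bm 0$. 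Subtracting the population identity $H^{*}_{\vbeta\vbeta}\bm w^{*}=H^{*}_{\alpha\vbeta}$ yields
\[
\nabla^{2}_{\vbeta\vbeta}\mathcal{L}_{n}^{m}(\widehat\vthe)(\widehat{\bm w}-\bm w^{*})=\bm R-\lambda'\widehat{\bm z},\qquad \bm R:=\big[H^{*}_{\vbeta\vbeta}-\nabla^{2}_{\vbeta\vbeta}\mathcal{L}_{n}^{m}(\widehat\vthe)\big]\bm w^{*}+\big[\nabla^{2}_{\alpha\vbeta}\mathcal{L}_{n}^{m}(\widehat\vthe)-H^{*}_{\alpha\vbeta}\big].
\]
From here the proof splits naturally into two tasks: (i) bounding $\|\bm R\|_{\infty}$, and (ii) establishing a compatibility/restricted eigenvalue condition for $\nabla^{2}_{\vbeta\vbeta}\mathcal{L}_{n}^{m}(\widehat\vthe)$ on $s'$-sparse vectors.

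For (i) I would insert $\nabla^{2}\mathcal{L}_{n}^{m}(\vthe^{*})$ and split entrywise into a plug-in term and an MCMC-concentration term. The plug-in term $\nabla^{2}\mathcal{L}_{n}^{m}(\widehat\vthe)-\nabla^{2}\mathcal{L}_{n}^{m}(\vthe^{*})$ is controlled by local smoothness of $\vthe\mapsto\nabla^{2}\mathcal{L}_{n}^{m}(\vthe)$ (a smooth function of $\vthe$ through the self-normalized weights $w_{i}(\vthe)$) combined with the $\ell_{1}$-consistency $\|\widehat\vthe-\vthe^{*}\|_{1}=O_{p}\!\big(s(\sqrt{\log p/n}+\sqrt{\log p/m}+\log p/m)\big)$ from Theorem~\ref{thm-1}; the product with $\bm w^{*}$ is absorbed by $\|\bm w^{*}\|_{\infty}\le D$ (Assumption~\ref{4-6}) together with $\|\bm w^{*}\|_{0}=s'$. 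The MCMC term $\nabla^{2}\mathcal{L}_{n}^{m}(\vthe^{*})-H^{*}$ is a self-normalized average of bounded functionals along the Markov chain, so Bernstein-type concentration for Markov chains under Assumptions~\ref{3-4}--\ref{3-6}, union-bounded over $O(p^{2})$ entries, gives an $\ell_{\infty}$-deviation of order $\sqrt{\log p/m}+\log p/m$. Under $m/n\gtrsim\log p$ this is no larger than the i.i.d. piece, so $\|\bm R\|_{\infty}=O_{p}\!\big(\sqrt{\log p/n}+\sqrt{\log p/m}+\log p/m\big)$, matching the prescribed scale of $\lambda'\asymp\lambda_{1}$.

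For (ii), Assumption~\ref{4-5} gives $\lambda_{\min}(H^{*}_{\vbeta\vbeta})\ge\lambda_{\min}>0$, and the entrywise perturbation bound from (i) shows $\|\nabla^{2}_{\vbeta\vbeta}\mathcal{L}_{n}^{m}(\widehat\vthe)-H^{*}_{\vbeta\vbeta}\|_{\max}=o_{p}(1)$; under $s'=O(1)$ a standard sparse-spectrum transfer preserves a compatibility constant $\phi>0$ on the relevant sparse cone with high probability. Inserting $(\widehat{\bm w}-\bm w^{*})$ as a test vector into the quadratic identity, taking $\ell_{1}/\ell_{\infty}$ duality on the right-hand side, and combining with the compatibility estimate yields $\|\widehat{\bm w}-\bm w^{*}\|_{1}\lesssim s'(\lambda'+\|\bm R\|_{\infty})/\phi^{2}$; the $s$-dependence in the bound comes through the plug-in contribution $\|\widehat\vthe-\vthe^{*}\|_{1}$ to $\bm R$, producing the $(s+s')$ factor stated.

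The main obstacle I anticipate is step (ii): the Hessian $\nabla^{2}_{\vbeta\vbeta}\mathcal{L}_{n}^{m}(\widehat\vthe)$ is a ratio of importance-weighted sums along the Markov chain evaluated at the random point $\widehat\vthe$, not an i.i.d.\ empirical Gram, so transferring the population compatibility constant requires uniform spectral control over a data-dependent neighborhood of $\vthe^{*}$. This is precisely where the scaling $m/n\gtrsim\log p$ is essential: a coarser MCMC sample size would allow the weighted-sum fluctuation to destroy the sparse eigenvalue inherited from $H^{*}_{\vbeta\vbeta}$, breaking the compatibility step and hence the $\ell_{1}$-rate claimed.
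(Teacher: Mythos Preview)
Your approach is essentially the same as the paper's: both run a standard Lasso analysis on the node-wise program, using the population identity $H^{*}_{\vbeta\vbeta}\bm w^{*}=H^{*}_{\alpha\vbeta}$ to generate a remainder, bounding that remainder, and closing with a compatibility condition inherited from $H^{*}$. The paper starts from the basic inequality $M(\widehat{\bm w})\le M(\bm w^{*})$ rather than the KKT system, but these are equivalent entry points.

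There is one technical difference worth flagging. For the plug-in pieces $\nabla^{2}_{\alpha\vbeta}\mathcal{L}_{n}^{m}(\widehat\vthe)-\nabla^{2}_{\alpha\vbeta}\mathcal{L}_{n}^{m}(\vthe^{*})$ and $\bigl(\nabla^{2}_{\vbeta\vbeta}\mathcal{L}_{n}^{m}(\widehat\vthe)-\nabla^{2}_{\vbeta\vbeta}\mathcal{L}_{n}^{m}(\vthe^{*})\bigr)\bm w^{*}$, you propose an entrywise $\ell_{\infty}$ bound via local Lipschitzness of the Hessian. The paper instead keeps these terms contracted with $\widehat{\boldsymbol\Delta}=\widehat{\bm w}-\bm w^{*}$ and, by a Cauchy--Schwarz manipulation of the self-normalized weighted sums, bounds them by $\|\widehat\vthe-\vthe^{*}\|_{1}\sqrt{\widehat{\boldsymbol\Delta}^{\top}\nabla^{2}_{\vbeta\vbeta}\mathcal{L}_{n}^{m}(\vthe^{*})\,\widehat{\boldsymbol\Delta}}$. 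This produces the plug-in contribution directly in the quadratic-form scale appearing on the left of the basic inequality, so the paper can finish via compatibility at $\vthe^{*}$ and defer the endgame to Lemma~1 of \cite{fang2017testing}. Your route requires a separate entrywise Lipschitz bound for the Hessian (the paper proves exactly this later, inside Lemma~\ref{B.3-1}) and compatibility at $\widehat\vthe$ rather than $\vthe^{*}$. Both routes give the stated rate.

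One small correction: the scaling $m/n\gtrsim\log p$ is \emph{not} a hypothesis of this lemma and is not needed here. Your step~(ii) only requires $\|\nabla^{2}\mathcal{L}_{n}^{m}(\widehat\vthe)-H^{*}\|_{\infty}=o_{p}(1)$, which already follows from $\lambda_{1}=o(1)$ (equivalently $\sqrt{\log p/m}+\log p/m\to 0$) combined with $\|\widehat\vthe-\vthe^{*}\|_{1}=o_{p}(1)$ and $s'=O(1)$. The ratio condition $m/n\gtrsim\log p$ enters only later, for the CLT in Lemma~\ref{4-1} and Theorem~\ref{4-11}, not for the $\ell_{1}$-rate of $\widehat{\bm w}$.
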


\noindent Lemma \ref{4-4} implies that the sparsity parameters of $\bm{w}^{*}$ %, %denoted as $s'$, and the true parameter $\vthe^{*}$, represented by $s$, 
are essential for the asymptotic normality of $\nabla\mathcal{L}_{n}^{m}(\vthe^{*})$ and the consistency of $\widehat{\bm{w}}$. Additionally, the condition $\lambda^{\prime}\asymp\lambda_{1}$ is required to ensure the consistency of $\widehat{\bm{w}}$. We now present the main result of the proposed method, which demonstrates the asymptotic normality of the decorrelated score function $\widehat{U}(\alpha,\widehat{\vbeta})$ under the null.

% \noindent The sparsity of $\bm{w}^{*}$, $s'$, and true parameter $\vthe^{*}$, $s$, are necessary for the asymptotic normality of $\nabla\mathcal{L}_{n}^{m}(\vthe^{*})$ and the consistency of $\widehat{\bm{w}}$. Besides, we need $\lambda^{\prime}\asymp\lambda_{1}$ to ensure the consistency of $\widehat{\bm{w}}$.

% Now we can state the main result of this section, which shows the asymptotic normality of the decorrelated score function $\widehat{U}(\alpha,\widehat{\beta})$ under the null hypothesis.

\begin{theorem}\label{4-11}
    Suppose Assumptions \ref{3-4}-\ref{3-6}, \ref{4-5}-\ref{4-6} hold. Let $\lambda_{1}\asymp\bigg(\sqrt{\frac{\log p}{n}}+\sqrt{\frac{\log p}{m}} + \frac{\log p}{m} \bigg)$, $\lambda_{1}\asymp\lambda_{2}\asymp\lambda^{\prime}=o(1)$, and $s=s^{\prime}=O(1)$. Under the null hypothesis $\alpha^{*}=\alpha_{0}$ and the conditions ${\frac{\log p}{\sqrt{n}}+\frac{\log p}{\sqrt{m}} + \frac{\log^{3 / 2} p}{m} = o(1)}$ and ${\frac{m}{n}\gtrsim \log p}$, it follows that  the decorrelated score test $\widehat{U}(\alpha, \widehat{\vbeta})$ as defined in \eqref{4-7} satisfies
    \begin{equation*}
        \sqrt{n}\widehat{U}(\alpha_{0},\widehat{\vbeta}) \, \rightsquigarrow \, \mathcal{N} (0,H^{*}_{\alpha|\vbeta}),
    \end{equation*}
    with variance $H^{*}_{\alpha|\vbeta}= H^{*}_{\alpha\alpha}-\bm{w}^{*\top}H^{*}_{\alpha\vbeta} = H^{*}_{\alpha\alpha} - H^{* \top}_{\alpha\vbeta} H^{*-1}_{\vbeta\vbeta}H^{*}_{\alpha\vbeta}$.
\end{theorem}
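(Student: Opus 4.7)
The strategy is a linearization that reduces $\sqrt{n}\,\widehat U(\alpha_0,\widehat{\vbeta})$ to the oracle quantity $\sqrt{n}\,\bm v^{*\top}\nabla\mathcal L_n^m(\vthe^*)$ with $\bm v^*:=(1,-\bm w^{*\top})^\top$, and then to invoke Lemma \ref{4-1}. Since $s'=O(1)$, $\|\bm v^*\|_0=1+s'=O(1)$, and Assumption \ref{4-5} ensures $\bm v^{*\top}H^*\bm v^*$ is bounded away from $0$; Lemma \ref{4-1} then gives $\sqrt{n}\,\bm v^{*\top}\nabla\mathcal L_n^m(\vthe^*)\rightsquigarrow\mathcal N(0,\bm v^{*\top}H^*\bm v^*)$, and plugging in $\bm w^*=(H^*_{\vbeta\vbeta})^{-1}H^*_{\alpha\vbeta}$ collapses this variance exactly to $H^*_{\alpha|\vbeta}=H^*_{\alpha\alpha}-H^{*\top}_{\alpha\vbeta}(H^*_{\vbeta\vbeta})^{-1}H^*_{\alpha\vbeta}$.

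Under $H_0$, $(\alpha_0,\widehat{\vbeta}^\top)^\top$ differs from $\vthe^*$ only in the $\vbeta$-block. A mean-value expansion in $\vbeta$ yields some intermediate $\bar{\vthe}$ on the segment between $\vthe^*$ and $(\alpha^*,\widehat{\vbeta}^\top)^\top$ such that
\[
\widehat U(\alpha_0,\widehat{\vbeta})-\bm v^{*\top}\nabla\mathcal L_n^m(\vthe^*)=\underbrace{\bigl[(\nabla^2_{\alpha\vbeta}\mathcal L_n^m(\bar{\vthe}))^{\top}-\widehat{\bm w}^{\top}\nabla^2_{\vbeta\vbeta}\mathcal L_n^m(\bar{\vthe})\bigr](\widehat{\vbeta}-\vbeta^*)}_{R_1}-\underbrace{(\widehat{\bm w}-\bm w^*)^{\top}\nabla_{\vbeta}\mathcal L_n^m(\vthe^*)}_{R_2}.
\]
Write $r:=\sqrt{\log p/n}+\sqrt{\log p/m}+\log p/m$. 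H\"older's inequality, together with Lemma \ref{4-4} and the concentration bound $\|\nabla\mathcal L_n^m(\vthe^*)\|_{\infty}=O_p(r)$ that underlies the oracle inequality in Theorem \ref{thm-1}, gives $|R_2|=O_p(r^2)$. For $R_1$, the KKT conditions of the lasso program defining $\widehat{\bm w}$ yield $\|\nabla^2_{\vbeta\vbeta}\mathcal L_n^m(\widehat{\vthe})\widehat{\bm w}-\nabla^2_{\alpha\vbeta}\mathcal L_n^m(\widehat{\vthe})\|_{\infty}\le\lambda^{\prime}$; if the Hessian were evaluated at $\widehat{\vthe}$ this would bound $|R_1|$ by $\lambda^{\prime}\|\widehat{\vbeta}-\vbeta^*\|_1=O_p(\lambda^{\prime}\cdot s\,r)=O_p(r^2)$. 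To transfer the bound from $\widehat{\vthe}$ to $\bar{\vthe}$, I invoke Lipschitz continuity of $\nabla^2\mathcal L_n^m$ on a neighborhood of $\vthe^*$ (inherited from boundedness of $\varphi$ via the explicit form \eqref{2-20}), combined with $\|\widehat{\vthe}-\vthe^*\|_1=O_p(s\,r)$ from Theorem \ref{thm-1} and $\|\widehat{\bm w}\|_1\le\|\bm w^*\|_1+\|\widehat{\bm w}-\bm w^*\|_1=O_p(1)$ (using Assumption \ref{4-6} and Lemma \ref{4-4}); this contributes another $O_p(r^2)$ term. The hypotheses $\log p/\sqrt n+\log p/\sqrt m+\log^{3/2}p/m=o(1)$ and $m/n\gtrsim\log p$ then imply $\sqrt{n}\,r^2=o(1)$, so $\sqrt{n}(R_1+R_2)=o_p(1)$ and Slutsky delivers the claim.

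The main obstacle is the Hessian-transfer step in $R_1$: KKT naturally controls the decorrelation residual only at $\widehat{\vthe}$, whereas the mean-value theorem produces an arbitrary interior point $\bar{\vthe}$, so one must carefully establish a uniform Lipschitz modulus for $\nabla^2\mathcal L_n^m$ on an $\ell_1$-ball of radius $O(s\,r)$ around $\vthe^*$ and propagate the $\ell_1$-rates of both $\widehat{\vthe}$ and $\widehat{\bm w}$ simultaneously. The remaining effort is bookkeeping with $\lambda_1,\lambda_2,\lambda^{\prime}$ and the rate $r$ supplied by Theorem \ref{thm-1} and Lemma \ref{4-4}.
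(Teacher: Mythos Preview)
Your proposal is correct and follows essentially the same linearization strategy as the paper: isolate the oracle term $\bm v^{*\top}\nabla\mathcal L_n^m(\vthe^*)$, invoke Lemma \ref{4-1} for its asymptotic normality, and show the remainder is $o_p(n^{-1/2})$. Your $R_2$ is exactly the paper's $E_2$, and your $R_1$ amalgamates what the paper calls $E_3+E_4$.

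The one methodological difference worth noting is how you bound the decorrelation residual in $R_1$. You appeal directly to the KKT condition of the lasso defining $\widehat{\bm w}$, which gives $\|\nabla^2_{\vbeta\vbeta}\mathcal L_n^m(\widehat{\vthe})\widehat{\bm w}-\nabla^2_{\alpha\vbeta}\mathcal L_n^m(\widehat{\vthe})\|_\infty\le\lambda'$, and then transfer from $\widehat{\vthe}$ to the mean-value point. The paper instead splits off $\widehat{\bm w}-\bm w^*$ (its $E_4$) and handles the part with $\bm w^*$ (its $E_3$) via the \emph{population} orthogonality $H^*_{\alpha\vbeta}=H^*_{\vbeta\vbeta}\bm w^{*}$ combined with Hessian concentration (Lemma \ref{B.3-1}). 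Both routes are standard in the debiased/decorrelated score literature and yield the same $O_p(r^2)$ rate; your KKT route is slightly more economical because it does not separately unpack $\widehat{\bm w}-\bm w^*$ inside $R_1$, while the paper's route makes the roles of $\bm w^*$ and $\widehat{\bm w}$ more explicit. Both ultimately rely on the same Lipschitz modulus for $\nabla^2\mathcal L_n^m$ that you correctly flag as the crux.

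One small technical slip: a single mean-value point $\bar{\vthe}$ is not guaranteed, since you are expanding two different scalar functions of $\vbeta$ (namely $\nabla_\alpha\mathcal L_n^m$ and $\widehat{\bm w}^\top\nabla_\vbeta\mathcal L_n^m$). The paper accordingly carries two intermediate points $\vbeta_1,\vbeta_2$. This is harmless for your argument---the Lipschitz bound you invoke applies to any point on the segment---but should be stated correctly, or replaced by the integral form of Taylor's theorem.
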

\noindent Theorem \ref{4-11} indicates that to achieve the asymptotic normality of the decorrelated score function, we require the condition $\frac{\log p}{\sqrt{n}}+\frac{\log p}{\sqrt{m}} + \frac{\log^{3 / 2} p}{m} = o(1)$, which is a stronger assumption than $\sqrt{\frac{\log p}{n}}+\sqrt{\frac{\log p}{m}} +\frac{\log p}{m} = o(1)$, necessary for the $\ell_1$-consistency of $\widehat{\vthe}$. This stronger assumption aligns with existing literature for proportional hazards models \cite{fang2017testing} and broader statistical frameworks \cite{ning2017general}. Note that the asymptotic variance $H_{\alpha | \beta}^*$ is unknown. Thus, we estimate the variance of the limiting normal distribution with $\widehat{H}_{\alpha|\vbeta}=\nabla^{2}_{\alpha\alpha}\mathcal{L}_{n}^{m}(\widehat{\vthe}) - \widehat{\bm{w}}^{\top}\nabla^{2}_{\alpha\vbeta}\mathcal{L}_{n}^{m}(\widehat{\vthe})$.
%{\color{red}\sout{The following lemma demonstrates the consistency of the estimator $\widehat{H}_{\alpha|\vbeta}$.}}

\begin{lemma}\label{4-10}
    Suppose Assumptions \ref{3-4}-\ref{3-6}, \ref{4-5}-\ref{4-6} hold. Given $\lambda_{1}\asymp \bigg(\sqrt{\frac{\log p}{n}}+\sqrt{\frac{\log p}{m}} + \frac{\log p}{m}\bigg)$, $\lambda_{1}\asymp\lambda_{2}\asymp\lambda^{\prime}=o(1)$, and $s=s^{\prime}=O(1)$, we have
    \begin{equation*}
       \big| \widehat{H}_{\alpha|\vbeta}-H^{*}_{\alpha|\vbeta} \big|=O_{p}\left((s+s^{\prime})\left(\sqrt{\frac{\log p}{n}}+\sqrt{\frac{\log p}{m}} + \frac{\log p}{m}\right)\right).
    \end{equation*}
\end{lemma}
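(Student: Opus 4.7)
The plan is to expand $\widehat{H}_{\alpha|\vbeta} - H^*_{\alpha|\vbeta}$ by its definition, reduce it via Hölder's inequality to one Hessian-concentration ingredient plus one already-available $\ell_1$-rate, and then combine with Lemma~\ref{4-4}. Adding and subtracting the cross term gives
\begin{align*}
\widehat{H}_{\alpha|\vbeta} - H^*_{\alpha|\vbeta}
&= \bigl(\nabla^{2}_{\alpha\alpha}\mathcal{L}_n^m(\widehat{\vthe}) - H^*_{\alpha\alpha}\bigr) - (\widehat{\bm w} - \bm w^*)^{\top}\nabla^{2}_{\alpha\vbeta}\mathcal{L}_n^m(\widehat{\vthe}) \\
&\quad - \bm w^{*\top}\bigl(\nabla^{2}_{\alpha\vbeta}\mathcal{L}_n^m(\widehat{\vthe}) - H^*_{\alpha\vbeta}\bigr),
\end{align*}
and, writing $\Delta_H := \|\nabla^{2}\mathcal{L}_n^m(\widehat{\vthe}) - H^*\|_\infty$, Hölder's inequality yields
\[
|\widehat{H}_{\alpha|\vbeta} - H^*_{\alpha|\vbeta}|
\;\le\; \Delta_H
      + \|\widehat{\bm w} - \bm w^*\|_1 \cdot \|\nabla^{2}_{\alpha\vbeta}\mathcal{L}_n^m(\widehat{\vthe})\|_\infty
      + \|\bm w^*\|_1 \cdot \Delta_H.
\]
Assumption~\ref{4-6} together with $s' = O(1)$ gives $\|\bm w^*\|_1 \le s' D = O(1)$; Lemma~\ref{4-4} delivers the $\ell_1$-rate for $\widehat{\bm w} - \bm w^*$; and once $\Delta_H = o_p(1)$ is established, Assumption~\ref{4-5} and the bound $\|\nabla^{2}_{\alpha\vbeta}\mathcal{L}_n^m(\widehat{\vthe})\|_\infty \le \|H^*_{\alpha\vbeta}\|_\infty + \Delta_H$ give $\|\nabla^{2}_{\alpha\vbeta}\mathcal{L}_n^m(\widehat{\vthe})\|_\infty = O_p(1)$. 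The whole burden is therefore to control $\Delta_H$.

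For $\Delta_H$ I would split
\[
\Delta_H \;\le\; \|\nabla^{2}\mathcal{L}_n^m(\widehat{\vthe}) - \nabla^{2}\mathcal{L}_n^m(\vthe^*)\|_\infty + \|\nabla^{2}\mathcal{L}_n^m(\vthe^*) - H^*\|_\infty.
\]
The second summand is MCMC concentration at the truth: $\nabla^{2}\mathcal{L}_n^m(\vthe^*)$ is the self-normalized empirical covariance of $\varphi(\Y_i)$ with stationary expectation $H^* = \nabla^{2}\log C(\vthe^*)$, so an entrywise Markov-chain concentration inequality from Appendix~\ref{app_ConIneq_MC} (under Assumption~\ref{3-6}) combined with a union bound over the $p^2$ entries yields the rate $\sqrt{\log p / m} + \log p / m$. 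For the first summand I would use a mean-value argument: $\mathcal{L}_n^m(\vthe)$ minus its linear-in-$\vthe$ part is the log-partition function of an exponential family tilted by weights $w_i(\vthe)$ on the sample $\{\Y_1,\dots,\Y_m\}$, so its third derivative at $\vthe$ is the third cumulant of $\varphi$ under that tilted measure and is uniformly bounded by $\|\varphi\|_\infty^{3}$. Hence a Taylor expansion gives $\|\nabla^{2}\mathcal{L}_n^m(\widehat{\vthe}) - \nabla^{2}\mathcal{L}_n^m(\vthe^*)\|_\infty \lesssim \|\widehat{\vthe}-\vthe^*\|_1$, and Theorem~\ref{thm-1} supplies the $\ell_1$-rate $\|\widehat{\vthe}-\vthe^*\|_1 = O_p\bigl(s(\sqrt{\log p/n} + \sqrt{\log p/m} + \log p/m)\bigr)$. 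Summing the two stages, substituting back into the Hölder bound, and combining with Lemma~\ref{4-4} produces the claimed $(s+s')$-rate.

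The main obstacle is the uniform-in-$\vthe$ control required by the mean-value step, because the Hessian in~\eqref{2-20} is a ratio whose denominator $m^{-1}\sum_i w_i(\vthe)$ depends on $\vthe$: for the third-cumulant bound to apply on the segment from $\vthe^*$ to $\widehat{\vthe}$ one first needs a high-probability lower bound on this denominator uniform over a neighborhood of $\vthe^*$ containing $\widehat{\vthe}$. This is obtained via a short Taylor expansion using boundedness of $\varphi$ together with the $\ell_1$-consistency of $\widehat{\vthe}$, but it is the only genuinely delicate step; everything else is a mechanical assembly of Lemma~\ref{4-4}, Theorem~\ref{thm-1}, and the entrywise MCMC concentration already developed in the appendix.
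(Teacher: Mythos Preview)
Your proof is correct and structurally mirrors the paper's: both arguments decompose $\widehat H_{\alpha|\vbeta}-H^*_{\alpha|\vbeta}$ into three pieces controlled respectively by a Hessian-perturbation rate, by $\|\widehat{\bm w}-\bm w^*\|_1$ via Lemma~\ref{4-4}, and by $\|\bm w^*\|_1$ times a Hessian-perturbation rate. Your add/subtract is $\bm w^{*\top}\nabla^2_{\alpha\vbeta}\mathcal L_n^m(\widehat\vthe)$ whereas the paper adds and subtracts $\widehat{\bm w}^\top H^*_{\alpha\vbeta}$, but these two decompositions are interchangeable. The one substantive difference is how you bound $\|\nabla^2\mathcal L_n^m(\widehat\vthe)-\nabla^2\mathcal L_n^m(\vthe^*)\|_\infty$: you invoke a mean-value / third-cumulant argument, while the paper (in the $J_1$-step of Lemma~\ref{B.3-1}, which its proof of Lemma~\ref{4-10} cites) works directly with the ratio in \eqref{2-20}, writing $w_i(\widehat\vthe)=e^{g_i}w_i(\vthe^*)$ and using $e^{g_i}-1\asymp g_i$. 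Both yield the same $\lesssim\|\widehat\vthe-\vthe^*\|_1$ bound.

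One remark: the ``main obstacle'' you flag is not actually present. The third derivative of the empirical log-partition is exactly the third cumulant of $\varphi$ under the self-normalized weighted measure $w_i(\vthe)/\sum_j w_j(\vthe)$, and since this is the cumulant of a random variable bounded by $K$ (Assumption~\ref{3-4}, not~\ref{3-6}), it is bounded by $(2K)^3$ uniformly in $\vthe$ regardless of the size of $\sum_i w_i(\vthe)$. No high-probability lower bound on the denominator is needed for your mean-value step, so that part of the argument is simpler than you anticipate.
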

\noindent  And thus, we define the decorrelated score test statistic as
\begin{equation*}
    \widehat{S}_{n} := \left\{ \begin{array}{ll}
        \sqrt{{n}{\widehat{H}_{\alpha|\vbeta}^{-1}}} \widehat{U}(\alpha_{0},\widehat{\vbeta}), & \widehat{H}_{\alpha|\vbeta} > 0, \\
        0, & \widehat{H}_{\alpha|\vbeta} \leq 0
    \end{array}. \right.
\end{equation*}
%\todohw{This is by the estimator $\widehat{H}_{\alpha|\vbeta}$ may not be postive.}
by Theorem \ref{4-11} and Lemma \ref{4-10}, we know that the asymptotic distribution of $\widehat{S}_{n}$ is a standard normal distribution under $H_0$. %\alpha^* = \alpha_0$, i.e., $\widehat{S}_n \, \rightsquigarrow \, \mathcal{N}(0, 1)$. 
%{\color{red}\sout{Consequently, this supports the establishment of the corresponding hypothesis.}}
% Given a significance level $\eta\in(0,1)$, the decorrelated score test $\psi(\eta)$ is defined by
% \begin{equation*}
%     \psi(\eta)=
%     \begin{cases}
%     0 & \text{if}\,\,|\widehat{S}_{n}|\le \Phi^{-1}(1-\frac{\eta}{2}), \\
%     1 & \text{others},
%     \end{cases}
% \end{equation*}
% where $\Phi^{-1}$ is the inverse of the cumulative distribution function of the standard normal distribution $\Phi$ and $\Phi^{-1}(1-\frac{\eta}{2})$ is also the $(1-\frac{\eta}{2})$th
% quantile of the standard normal distribution. According to the construction of $\psi(\eta)$, we should reject the null hypothesis $\alpha^{*}=\alpha_{0}$ if and only if $\psi(\eta)=1$ and the type \uppercase\expandafter{\romannumeral1} error of the decorrelated score test $\psi(\eta)$ asymptotically converges to $\eta$.\\
Next, we focus on developing a confidence interval for the true parameter $\alpha^{*}$. This approach addresses the limitation noted earlier, where the decorrelated score function did not directly yield a confidence interval for $\alpha^{*}$.

The key idea to derive a confidence interval for $\alpha^{*}$ is based on the decorrelated score function $\widehat{U}(\alpha, \widehat{\vbeta})$. Drawing from Theorem \ref{4-11} and leveraging the properties of Z-estimators as outlined in \cite{van2000asymptotic}, we find that the solution to the equation $\widehat{U}(\alpha, \widehat{\vbeta})=0$ closely approximates $\alpha^{}$. However, directly solving the estimation equation is computationally challenging. To circumvent this, we employ a method that linearizes $\widehat{U}(\alpha, \widehat{\vbeta})$ around the penalized estimator $\widehat{\alpha}$, leading to the introduction of the following one-step estimator $\widetilde{\alpha}$:
\begin{equation}\label{one_step_est}
    \widetilde{\alpha} := \widehat{\alpha} - \left[ \left. \frac{\partial  \widehat{U}(\alpha,\widehat{\vbeta})}{\partial\alpha} \right|_{\alpha = \widehat{\alpha}} \right]^{-1} \widehat{U}(\widehat{\alpha},\widehat{\vbeta}),
\end{equation}
where $\widehat{\vthe} = (\widehat{\alpha}, \widehat{\vbeta}^{\top})^{\top}$ is derived from the Elastic-net-penalized MCMC-MLE in \eqref{2-8}, and $\widehat{U}(\alpha,\widehat{\vbeta})$ is defined as the decorrelated score test in \eqref{4-7}. Notably, the formulation of $\widetilde{\alpha}$ parallels the application of a one-step iteration at the initial point $\alpha=\widehat{\alpha}$, akin to Newton's method, for solving $\widehat{U}(\alpha,\widehat{\vbeta})=0$.

A key advantage of the one-step estimator in high-dimensional settings is its asymptotic normality, a property not shared by traditional penalty estimators such as $\widehat{\theta}$; this is detailed in the theorem below.
\begin{theorem}\label{5-2}
    Suppose Assumptions \ref{3-4}-\ref{3-6}, \ref{4-5}-\ref{4-6},  $\lambda_{1}\asymp \left(\sqrt{\frac{\log p}{n}}+\sqrt{\frac{\log p}{m}} + \frac{\log p}{m}\right)$, $\lambda_{1}\asymp\lambda_{2}\asymp\lambda^{\prime}=o(1)$, and $s=s^{\prime}=O(1)$ hold. %consider the interrelations among $m$, $n$, and $p$. Specifically, 
    If $\frac{\log p}{\sqrt{n}}+\frac{\log p}{\sqrt{m}} + \frac{\log^{3 / 2} p}{m} = o(1)$ and $\frac{m}{n}\gtrsim \log p$, the  decorrelated one-step estimator $\widetilde{\alpha}$ satisfies
    $
        \sqrt{n}(\widetilde{\alpha}-\alpha^{*}) \, \rightsquigarrow \, \mathcal{N}(0,H^{*-1}_{\alpha|\vbeta})$.
\end{theorem}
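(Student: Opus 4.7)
The plan is to convert \eqref{one_step_est} into a first-order expansion of $\widehat U(\cdot, \widehat{\vbeta})$ around $\alpha^*$ and then invoke the previously established results. Since $\partial\widehat U(\alpha,\widehat{\vbeta})/\partial\alpha\big|_{\alpha=\widehat{\alpha}} = \widehat H_{\alpha|\vbeta}$, multiplying \eqref{one_step_est} by $\sqrt n$ and applying the one-dimensional mean value theorem to $\widehat U(\cdot,\widehat{\vbeta})$ (with $\widehat{\vbeta}$ frozen), namely $\widehat U(\widehat{\alpha}, \widehat{\vbeta}) = \widehat U(\alpha^*, \widehat{\vbeta}) + \widetilde H_{\alpha|\vbeta}(\widehat{\alpha} - \alpha^*)$ for $\widetilde H_{\alpha|\vbeta} := \partial\widehat U(\alpha,\widehat{\vbeta})/\partial\alpha|_{\alpha=\bar{\alpha}}$ at some $\bar{\alpha}$ between $\widehat{\alpha}$ and $\alpha^*$, produces the decomposition
\begin{equation*}
\sqrt n(\widetilde{\alpha} - \alpha^*) = -\widehat H_{\alpha|\vbeta}^{-1}\sqrt n\,\widehat U(\alpha^*, \widehat{\vbeta}) + \bigl(1 - \widehat H_{\alpha|\vbeta}^{-1}\widetilde H_{\alpha|\vbeta}\bigr)\sqrt n(\widehat{\alpha} - \alpha^*).
\end{equation*}

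The leading term delivers the target limit. Theorem~\ref{4-11}, whose conclusion at $\alpha_0=\alpha^*$ is really a distributional statement about $\sqrt n\,\widehat U$ evaluated at the truth, gives $\sqrt n\,\widehat U(\alpha^*, \widehat{\vbeta}) \rightsquigarrow \mathcal{N}(0, H^*_{\alpha|\vbeta})$, and Lemma~\ref{4-10} gives $\widehat H_{\alpha|\vbeta} = H^*_{\alpha|\vbeta} + o_p(1)$ with $\widehat H_{\alpha|\vbeta}^{-1}=O_p(1)$ by Assumption~\ref{4-5}; Slutsky's lemma then yields the claimed $\mathcal{N}(0, H^{*-1}_{\alpha|\vbeta})$ limit for the leading term.

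The only genuine work is showing the remainder is $o_p(1)$. Repeating the argument of Lemma~\ref{4-10} at the intermediate point $\bar{\alpha}$, which still satisfies $|\bar{\alpha}-\alpha^*|\le|\widehat{\alpha}-\alpha^*|$, gives the same rate $\widetilde H_{\alpha|\vbeta}=H^*_{\alpha|\vbeta}+O_p\bigl(\sqrt{\log p/n}+\sqrt{\log p/m}+\log p/m\bigr)$, so the mismatch factor $1-\widehat H_{\alpha|\vbeta}^{-1}\widetilde H_{\alpha|\vbeta}$ inherits this rate. Meanwhile the $\ell_1$-consistency of $\widehat{\vthe}$ from Theorem~\ref{thm-1} (with $s=O(1)$) yields
\begin{equation*}
\sqrt n(\widehat{\alpha}-\alpha^*)\le \sqrt n\,\|\widehat{\vthe}-\vthe^*\|_1 = O_p\!\left(\sqrt{\log p}+\sqrt{\tfrac{n\log p}{m}}+\tfrac{\sqrt n\,\log p}{m}\right).
\end{equation*}
Multiplying the two rates termwise, the dominant cross-terms are $\log p/\sqrt n$, $\log p/\sqrt m$, and $\log^{3/2} p/m$, exactly the expression assumed to be $o(1)$, while the remaining pieces are quadratic in $m^{-1}$ and suppressed by $m/n\gtrsim\log p$; hence the remainder is $o_p(1)$ and a final application of Slutsky closes the argument. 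The hard part, and the reason the statement requires the strengthened scaling $\log p/\sqrt n=o(1)$ rather than just $\sqrt{\log p/n}=o(1)$, is precisely that $\sqrt n(\widehat{\alpha}-\alpha^*)$ is only $O_p(\sqrt{\log p})$ and not $O_p(1)$: the Hessian-mismatch factor must decay fast enough to absorb that logarithmic blow-up, and the coupling $m/n\gtrsim\log p$ is what likewise prevents the Monte Carlo contributions from spoiling the remainder.
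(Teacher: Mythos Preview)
Your proposal is correct and follows essentially the same route as the paper: the paper splits $\widetilde\alpha-\alpha^*$ into four terms $E_1,\dots,E_4$ (centering around $H^{*-1}_{\alpha|\vbeta}$ instead of $\widehat H^{-1}_{\alpha|\vbeta}$), but your two-term decomposition is just the algebraic regrouping $E_1+E_3$ (your leading term) and $E_2+E_4$ (your remainder), and the tools you invoke---Theorem~\ref{4-11} for the score's asymptotic normality, Lemma~\ref{4-10} (reapplied at the mean-value point $\bar\alpha$) for the Hessian-mismatch rate, Corollary~\ref{3-14} for $|\widehat\alpha-\alpha^*|$, and the identity \eqref{B.5-1} to collapse the product to $\log p/\sqrt n+\log p/\sqrt m+\log^{3/2}p/m$---are exactly those the paper uses.
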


 Theorem \ref{5-2} and Lemma \ref{4-10} indicate that
\begin{equation}\label{convergence-1}
    \sqrt{n \widehat{H}_{\alpha|\vbeta}}(\widetilde{\alpha}-\alpha^{*}) \, \rightsquigarrow \, \mathcal{N} (0,1).
\end{equation}
Consequently, it is straightforward to construct a $100(1-\eta)\%$ confidence interval for $\alpha^{*}$ with $\widehat{H}_{\alpha \mid \vbeta} > 0$
\begin{equation}\label{confidence_interval}
    \left[  \widetilde{\alpha} - \frac{\Phi^{-1}(1-\frac{\eta}{2})}{\sqrt{n \widehat{H}_{\alpha|\vbeta}}} , \,  \widetilde{\alpha}+\frac{\Phi^{-1}(1-\frac{\eta}{2})}{\sqrt{n \widehat{H}_{\alpha|\vbeta}}} \right].
\end{equation}

\section{FDR Control}\label{sec-6}

In this section, we develop two effective false discovery rate (FDR) control procedures using p-values and e-values. %Both methods are effective in maintaining FDR within a predetermined threshold. 
Specifically, we define a specific ``metric" value to assess the confidence level associated with each component of $\vthe^*$ via data splitting for the p-value based approach; and further introduce the e-value based approach to facilitate more streamlined computations. %{\color{red}\sout{the traditional p-value based approaches, employing data splitting, computes a specific ``metric" value. This metric will assess the confidence level associated with each component of $\vthe^*$. Conversely, the e-value method, which relies on expected values rather than probabilities, facilitates more streamlined computations.}}

%which controls the FDR via  different ways.
%The p-value FDR control provides the classic results, which can be compared with the existent results, while FDR via e-values is able to control the FDR at level $\epsilon$ under any unknown dependence between e-values.

%In this section, we propose a false discovery rate (FDR) control procedure with both p-values and e-values, which controls the FDR at level $\epsilon$ under any unknown dependence between e-values. Such statement is not true when controlling FDR using dependent p-values by traditional BH procedure and the correctness for computing p-values needs more specific settings and assumptions. See Section 2 in \cite{wangdai2022false} for more comments about why we prefer e-values under high dimensional background.

%\subsection{General Setting for FDR control}

Formally speaking, FDR control is for the multiple hypothesis testing when addressing a series of hypotheses, such as $H_{k}: \theta_{k}^* = \theta^{(0)}_{k}$ for each $k$ in $[p]$. Here, $\theta^{(0)}_{k}$ signifies the pre-established value against which the true value $\theta_k^*$ is tested \cite{benjamini1995controlling,benjamini2001control,storey2002direct}.
Consider $\mathscr{N} \subseteq [p]$ as the set encompassing unknown true null hypotheses. The FDR control process, based on certain statistical measures, decides on the acceptance or rejection of each hypothesis $H_{k}$. The subset $\mathscr{D} \subseteq [p]$, identified as discoveries in the literature, represents those hypotheses rejected. 
The intersection $\mathscr{N} \, \cap \,  \mathscr{D}$, known as false discoveries, includes the true null hypotheses erroneously rejected. The false discovery proportion (FDP), defined as $\fdp=|\mathscr{N}\cap \mathscr{D}| / |\mathscr{D}|$, is a critical metric indicating the efficacy of the FDR control procedure. Notably, $\fdp$ is a random variable, but the emphasis is on its expected value under the data's generating distribution, referred to as FDR  represented as
% Let $\mathcal{N} \subseteq \mathcal{P}$ be the set of unknown true null hypotheses. The FDR control procedure $\mathcal{G}$ accepts or rejects each hypothesis $H_{k}$ based on some statistics. Define the set $\mathcal{D}\subseteq \mathcal{P}$ to be the hypotheses rejected by the procedure $\mathcal{G}$, which is called discoveries by the literature. The set $\mathcal{N}\cap \mathcal{D}$ contains the true null hypotheses rejected by the procedure $\mathcal{G}$, and hence it is called false discoveries. Define the false discovery proportion (FDP) to be the ratio of the number of false discoveries to discoveries $\fdp=|\mathcal{N}\cap \mathcal{D}| / |\mathcal{D}|$, which is a crucial value reflecting the performance of FDR control procedure. $\fdp$ is a random variable; what we care more about is the expectation of $\fdp$ under the distribution generating the data, which is called the false discovery rate (FDR) of the procedure $\mathcal{G}$, i.e. 
\begin{equation*}
    \fdr=\E (\fdp) = \E \left[ \frac{|\mathscr{N}\cap \mathscr{D}|}{ |\mathscr{D}|} \right].
\end{equation*}
Crafting an effective FDR control strategy is a fundamental task in multiple hypothesis testing to control the type-I errors \cite{storey2002direct,vovk2019true,2020False}.
% Designing an appropriate procedure to control the FDR is a central task in multiple hypothesis testing \cite{fang2017testing} and \cite{dai2022false}. In the following, we will first use the p-value and then the e-value to control the FDR.

\subsection{FDR control via p-values}\label{sec-6-1}

% \iffalse
% In the multiple testing setting, we consider how to determine the important variable in $\theta$, i.e. select $\theta_j$ which are not equal to zero. Denote
% \begin{equation*}
%     S_0 = \{1 \leq j \leq p : \theta^*_j = 0 \}, \qquad S_1 = \{1 \leq j \leq p : \theta^*_j \neq 0\}.
% \end{equation*}
% We will use data splitting to control false discovery rate (FDR), i.e.
% \begin{equation*}
%     \operatorname{FDR} = \mathrm{E} [\operatorname{FDP}], \qquad \operatorname{FDP} = \frac{\# \{j : j \notin S_1, \, j \in \widehat{S}\}}{\#\{j \in \widehat{S}\}}
% \end{equation*}
% where we assume $\operatorname{FDP} = 0$ if $\#\{j \in \widehat{S} \}= 0$. 
% \fi

 False discovery rate is a profound p-value based criterion for controlling uncertainty and avoiding spurious discoveries in multiple testing \cite{benjamini1995controlling}.
%Type-I error, which is closely associated with p-values, indicates the significant role of p-values in hypothesis testing. P-values, in comparison to e-values (to be discussed later), encapsulate broader information \cite{vovk2019true,vovk2021values}. 
Nevertheless, the challenge with classical p-values arises from their inter-correlation, which prevents straightforward summation for statistical analysis. To address the dependence, data splitting is effective in constructing data-driven mirror statistics \cite{dai2022false,du2023false}
%advanced techniques are required to manage the correlation among p-values. Drawing on the methodology outlined in \cite{dai2022false}, data-splitting and the mirror statistic offer viable solutions to this issue. The mirror statistic is formulated as:
\begin{equation*}
    M_j = \operatorname{sgn} \big( T_j^{(1)} T_j^{(2)}\big) f \big( |T_j^{(1)}|, |T_j^{(2)}|\big),
\end{equation*}
where $f(u, v)$ is a function that is non-negative, symmetric, and exhibits monotonic increase for both $u$ and $v$ and $\operatorname{sgn}(\cdot)$ is the sign function. The term $T_j^{(\ell)}$ represents the normalized estimates for $\theta_j$ in the $\ell$-th partition of the entire dataset. Specifically, we employ
\begin{equation*}
    T_j^{(\ell)} = \big( \widetilde{\theta}_j - \theta_j^{*} \big) \sqrt{n \widehat{H}_{j \, | \, -j} / 2} ,
\end{equation*}
where $\widetilde{\theta}_j$ denotes the one-step estimator of $\theta_j$ in \eqref{one_step_est} and $\widehat{H}_{j \, | \, -j} := \widehat{H}_{\theta_j \mid \vthe_{-j}}$. Thus, we first split the data into two disjoint parts, and then run Algorithm \ref{alg1} analogous to that in \cite{dai2022false} to asymptotically control the FDR.

\begin{algorithm}[H]
\caption{FDR control via single data split}\label{alg1}
\begin{algorithmic}[1] 
    \vspace{1ex}
    \INPUT The data and designated FDR control level $q$;
    \vspace{1ex}
    
    \State For each $j$, calculate the mirror statistic $M_j$.
    
    \State Given a designated level $q \in (0, 1)$, the cutoff is 
    \begin{equation*}
        \tau_{q}=\inf \left\{t>0: \widehat{\mathrm{FDP}}(t) :=\frac{\#\left\{j: M_{j}<-t\right\}}{\#\left\{j: M_{j}>t\right\}} \leq q\right\}.
    \end{equation*}
    
    \State Set $\widehat{\mathscr{S}} = \{j : M_j > \tau_q \}$.
\end{algorithmic}
\end{algorithm}
We necessitate an additional assumption to theoretically substantiate our method for controlling the FDR.
\begin{ass}\label{ass6}
    The indices ${j \in S_0}$ are exchangeable. Specifically, for the population version $\X$, we have that ${X}_{j} \xlongequal{d.} {X}_{k}$ and ${X}_{j} | \bm{X}_{-j} \xlongequal{d.} {X}_{k} | \bm{X}_{-k}$.
\end{ass}

\noindent Assumption \ref{ass6} is commonly encountered in the literature on multiple hypothesis testing, underpinning the concept of `knockoff filtering' \cite{barber2015controlling,romano2020deep,huang2020relaxing,chen2023systematic}. This assumption is satisfied by a broad spectrum of models within the exponential family of MRFs, including Ising model and the Gaussian graphical model. %Notably, models such as the Ising model and the Gaussian graphical model conform to this condition  \cite{dai2022false}.

\begin{theorem}\label{thm6-1}
    Suppose the conditions in Theorem \ref{5-2} hold. Assume that $\widehat{H}_{j \mid -j}$ is uniformly integrable for any $j \in \{ j \in [p] : \theta_j^* = 0\}$. Under Assumption \ref{ass6}, we have
    \begin{equation*}
        \limsup_{n,p \rightarrow \infty} \fdr := \mathop{\lim \sup}_{n, p \rightarrow \infty} \mathrm{E} \left[ \frac{\# \{j : j \in S_0, \, j \in \widehat{S}_{\tau_q}\}}{\#\{j \in \widehat{S}_{\tau_q}\}} \right] \leq q.
    \end{equation*}
\end{theorem}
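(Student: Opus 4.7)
The plan is to follow the mirror-statistic framework from the data-splitting FDR literature and reduce the problem to establishing asymptotic sign-symmetry of $M_j$ at the nulls. The key observation is that the decomposition $M_j = \operatorname{sgn}(T_j^{(1)}T_j^{(2)})\, f(|T_j^{(1)}|,|T_j^{(2)}|)$ has a symmetric distribution around zero whenever $T_j^{(1)}$ and $T_j^{(2)}$ are independent and each marginally symmetric. Once this is in place, the observable count $\#\{j : M_j < -t\}$ dominates the latent number of false rejections $\#\{j \in S_0 : M_j > t\}$ in expectation, so the data-driven threshold $\tau_q$ produced in Algorithm~\ref{alg1} will asymptotically enforce $\fdp(\tau_q) \leq q$ up to vanishing error.

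First I would apply Theorem~\ref{5-2} separately on each of the two disjoint data splits of size $n/2$. The growth conditions $\tfrac{\log p}{\sqrt{n}}+\tfrac{\log p}{\sqrt m}+\tfrac{\log^{3/2}p}{m}=o(1)$ and $\tfrac{m}{n}\gtrsim \log p$ carry over up to constants, so for each $j\in S_0$ (where $\theta_j^*=0$) we obtain $T_j^{(\ell)} = (\widetilde{\theta}_j^{(\ell)}-\theta_j^*)\sqrt{n\widehat{H}_{j|-j}^{(\ell)}/2}\rightsquigarrow \mathcal{N}(0,1)$, using Lemma~\ref{4-10} to replace $H^*_{\alpha|\vbeta}$ by $\widehat{H}_{j|-j}^{(\ell)}$. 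Since the two splits are independent samples, $T_j^{(1)}\perp T_j^{(2)}$ asymptotically, which implies $\operatorname{sgn}(T_j^{(1)}T_j^{(2)})$ is asymptotically uniform on $\{-1,+1\}$ and independent of $(|T_j^{(1)}|,|T_j^{(2)}|)$. Monotonicity and nonnegativity of $f$ then yield $M_j \rightsquigarrow -M_j$ for every null $j$.

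Next I would invoke Assumption~\ref{ass6}. Exchangeability of $\{X_j\}_{j\in S_0}$ conditionally on $\bm{X}_{-j}$ transfers to exchangeability of $\{\widetilde{\theta}_j\}_{j\in S_0}$ (the one-step estimator is a symmetric function of the sample) and therefore of $\{M_j\}_{j\in S_0}$. Combining exchangeability with the marginal symmetry just established, the joint distribution of $\{\operatorname{sgn}(M_j)\}_{j\in S_0}$ is asymptotically that of i.i.d.\ Rademacher variables conditional on $\{|M_j|\}_{j\in S_0}$. A uniform concentration argument (e.g.\ applying Hoeffding's inequality conditional on the magnitudes together with a union bound over the finitely many relevant thresholds) then gives, uniformly in $t>0$,
\begin{equation*}
\#\{j\in S_0:M_j>t\} \;\leq\; \#\{j\in S_0:M_j<-t\}\cdot(1+o_p(1)) \;\leq\; \#\{j:M_j<-t\}\cdot(1+o_p(1)).
\end{equation*}
Dividing by $\#\{j:M_j>t\}\vee 1$ yields $\fdp(t)\leq \widehat{\fdp}(t)(1+o_p(1))$; evaluating at the stopping rule $\tau_q$ then gives $\fdp(\tau_q)\leq q(1+o_p(1))$. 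The uniform integrability of $\widehat{H}_{j|-j}$ permits passing from convergence in probability to convergence of expectations, yielding $\limsup_{n,p\to\infty}\fdr\leq q$.

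The main obstacle I anticipate is the uniform control over the data-driven threshold $\tau_q$: the bound $\fdp(t)\leq \widehat{\fdp}(t)(1+o_p(1))$ must hold simultaneously over all $t$ in the relevant range, because $\tau_q$ is itself a complicated function of the $M_j$'s. Standard pointwise convergence is insufficient, and one needs either a Glivenko--Cantelli-type argument for the empirical processes $t\mapsto |S_0|^{-1}\#\{j\in S_0:\pm M_j>t\}$ or, more cleanly, the conditional Rademacher representation enabled by exchangeability plus asymptotic sign-symmetry, which lets one appeal to a martingale/optional stopping argument on the ordered $|M_j|$ as in Dai et al. (2022). Reconciling this symmetry, which is only asymptotic here (not exact as in knockoffs), with the supremum defining $\tau_q$ is the most delicate step and will likely require showing that the discrepancy between the two null counts is negligible relative to $\#\{j:M_j>\tau_q\}$, which in turn needs a lower bound on the number of discoveries that rules out degenerate regimes.
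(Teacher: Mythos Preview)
Your proposal has a genuine gap at the step where you pass from marginal asymptotic symmetry of each $M_j$, $j\in S_0$, plus exchangeability, to the claim that $\{\operatorname{sgn}(M_j)\}_{j\in S_0}$ is asymptotically i.i.d.\ Rademacher conditional on the magnitudes. Exchangeability together with marginal symmetry does \emph{not} imply this joint conditional independence: think of $M_1=M_2\sim\mathcal N(0,1)$, which is exchangeable and marginally symmetric but has $\operatorname{sgn}(M_1)=\operatorname{sgn}(M_2)$ always. Consequently, your Hoeffding/martingale route has no foundation here, and the uniform-in-$t$ bound $\#\{j\in S_0:M_j>t\}\le\#\{j:M_j<-t\}(1+o_p(1))$ cannot be obtained from the ingredients you list. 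You correctly flag the asymptotic-versus-exact symmetry issue at the end, but the resolution you propose still rests on the same unjustified conditional-Rademacher structure.

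The paper takes a different route designed precisely for this situation: it invokes Proposition~1 of \cite{dai2022false}, which replaces exact sign-flip symmetry by a \emph{weak-dependence} condition, namely $\sup_t\operatorname{var}\!\big(p_0^{-1}\sum_{j\in S_0}\mathds 1(M_j>t)\big)\to 0$. The bulk of the proof is then a covariance calculation: one shows $\cov(\widehat\theta_j,\widehat\theta_k)=O(n^{-1})$ for null indices $j,k$ (Lemma~\ref{lem6-1}, where Assumption~\ref{ass6} is actually used), propagates this to $\cov(\widetilde T_j,\widetilde T_k)$ via the one-step construction, and then bounds $\mathrm P(M_j>t,M_k>t)-H^2(t)$ by decomposing on the signs of $T_j^{(1)},T_k^{(1)}$ and applying Mehler's identity for the bivariate normal together with the uniform Berry--Esseen-type convergence (this is where the uniform integrability of $\widehat H_{j|-j}$ enters). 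This variance control is exactly what makes the empirical processes $t\mapsto\#\{j\in S_0:M_j>t\}$ and $t\mapsto\#\{j\in S_0:M_j<-t\}$ track each other uniformly, which is the step your outline lacks. To repair your argument you would need to supply precisely such a cross-null dependence bound; once you do, you are essentially following the paper's proof.
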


\noindent Theorem \ref{thm6-1} validates that single data split (Algorithm \ref{alg1}) yields effective FDR at the target level. However, %a single data split may entail drawbacks, such as 
potential loss of power or instability \cite{dai2022false} are involved due to the split randomness. To mitigate these issues, we consider multiple data splits, akin to the approach described in \cite{dai2022false}. Define
\begin{equation*}
    I_j = \mathrm{E} \left[ \frac{\mathds{1}(j \in \widehat{\mathscr{S}})}{|\widehat{\mathscr{S}}|}\right], \qquad \widehat{I}_j = \frac{1}{m} \sum_{k = 1}^m \frac{\mathds{1}(j \in \widehat{\mathscr{S}}^{(k)})}{|\widehat{\mathscr{S}}^{(k)}|},
\end{equation*}
where $\widehat{\mathscr{S}}^{(k)}$ denotes the set of selected features in the $k$-th data split as determined by Algorithm \ref{alg1}. Set $I_j = 0$ if $|\widehat{\mathscr{S}}| = 0$. The term $I_j$ represents the inclusion rate, reflecting the extent to which information about $\theta_j$ is captured within the selected set $\widehat{\mathscr{S}}$. 
%This metric offers insights that the e-value procedure cannot provide.

\begin{algorithm}[H]
\caption{FDR control via multiple data split}\label{alg2}
\begin{algorithmic}[1] 
    \vspace{1ex}
    \INPUT The data and designated FDR control level $q$;
    \vspace{1ex}
    
    \State Sort $\widehat{I}_j$ by $0 \leq \widehat{I}_{(1)} \leq \widehat{I}_{(2)} \leq \cdots \leq \widehat{I}_{(p)}$.
    
    \State Find the largest $\ell$ such that $\widehat{I}_{(1)} + \cdots + \widehat{I}_{(\ell)} \leq q$.
    
    \State Select the features $\widehat{\mathscr{S}}^{\text{mul}} = \{ j : \widehat{I}_j > \widehat{I}_{(\ell)}\}$.
\end{algorithmic}
\end{algorithm}

\begin{theorem}\label{thm6-2}
Suppose the conditions in Theorem \ref{thm6-1} hold. The $\widehat{\mathscr{S}}^{\text{mul}}$ selected by Algorithm \ref{alg2} satisfies
    \begin{equation*}
        \limsup_{n,p \rightarrow \infty} \fdp := \mathop{\lim \sup}_{n, p \rightarrow \infty} \frac{\# \{j \, : \, j \in S_0, j \in \widehat{\mathscr{S}}^{\text{mul}}\}}{\# \{j \, : \, j \in \widehat{\mathscr{S}}^{\text{mul}}\}} \leq q.
    \end{equation*}
\end{theorem}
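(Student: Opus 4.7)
The approach is to reduce Algorithm~\ref{alg2} to the single-split FDR guarantee of Theorem~\ref{thm6-1} via two ingredients: exchangeability of the nulls (Assumption~\ref{ass6}) and a law-of-large-numbers argument in the split index. Writing $\widehat{I}_j^{(k)} := \mathds{1}(j \in \widehat{\mathscr{S}}^{(k)})/|\widehat{\mathscr{S}}^{(k)}|$, one has $\widehat{I}_j = m^{-1}\sum_{k=1}^m \widehat{I}_j^{(k)}$ and $I_j = \mathrm{E}[\widehat{I}_j^{(1)}]$. Summing over the true null indices and invoking Theorem~\ref{thm6-1} immediately yields the population bound
\begin{equation*}
\sum_{j \in S_0} I_j \;=\; \mathrm{E}\!\left[\frac{|S_0 \cap \widehat{\mathscr{S}}^{(1)}|}{|\widehat{\mathscr{S}}^{(1)}|}\right] \;\leq\; q + o(1),
\end{equation*}
and Assumption~\ref{ass6} further implies that $I_j$ is constant over $j \in S_0$, so each null has inclusion rate bounded by roughly $q/|S_0|$.

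Second, I would concentrate the empirical inclusion rates $\widehat{I}_j$ around $I_j$ uniformly in $j$. Since each $\widehat{I}_j^{(k)} \in [0,1]$ and the splits are designed to be approximately independent resamples, a Hoeffding-type inequality combined with a union bound over $j \in [p]$ would give $\max_{j \in [p]} |\widehat{I}_j - I_j| = O_p(\sqrt{\log p / m})$. Taking $m$ to infinity together with $n,p$ then transfers the population bound to the empirical level: $\sum_{j \in S_0} \widehat{I}_j \leq q + o_p(1)$. Meanwhile, on the event that every split produces a non-empty selection set, $\sum_{j \in [p]} \widehat{I}_j = 1$, which pins down the total mass and will be essential for the final step.

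The third step combines these ingredients with the defining property of Algorithm~\ref{alg2}. Let $T := \widehat{I}_{(\ell)}$; the excluded coordinates satisfy $\sum_{j \notin \widehat{\mathscr{S}}^{\text{mul}}} \widehat{I}_j \leq q$, and every selected null obeys $\widehat{I}_j > T$, so with $V = |S_0 \cap \widehat{\mathscr{S}}^{\text{mul}}|$ one has
\begin{equation*}
V \cdot T \;\leq\; \sum_{j \in S_0 \cap \widehat{\mathscr{S}}^{\text{mul}}} \widehat{I}_j \;\leq\; \sum_{j \in S_0} \widehat{I}_j \;\leq\; q + o_p(1).
\end{equation*}
The hard part is converting this into a bound on $\fdp = V/R$, because the naive rearrangement $V/R \leq q/(RT)$ is vacuous: the trivial upper bound $R T \leq \sum_{\widehat{I}_j > T}\widehat{I}_j \leq 1$ goes the wrong way. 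My plan is to exploit an asymptotic separation between null and non-null inclusion rates — under Theorem~\ref{5-2} the one-step estimators $\widetilde{\theta}_j$ are asymptotically normal coordinate-wise, so, under a mild minimum-signal condition on the alternatives, $I_j$ for $j \in S_1$ stays bounded away from zero while $I_j$ for $j \in S_0$ shrinks at rate $q/|S_0|$. This separation forces $T$ to concentrate in the non-null regime, so $R$ is asymptotically at least $|S_1|$ and the bookkeeping of \cite[Theorem~3]{dai2022false} can be transplanted to yield $\limsup \fdp \leq q$. The main obstacle is precisely this separation step: it presupposes a signal-strength condition not stated explicitly in Theorem~\ref{thm6-2}, must be made compatible with the MCMC-MLE-based construction of $\widetilde{\theta}_j$ through Theorem~\ref{5-2}, and must handle ties among the $\widehat{I}_j$'s and the exceptional event that some split returns an empty selection set.
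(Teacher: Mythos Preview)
Your proposal is on the right track and ultimately shares the same two pillars as the paper's proof: (i) the population bound $\sum_{j\in S_0} I_j \le q + o(1)$, obtained exactly as you describe by summing $I_j$ and invoking Theorem~\ref{thm6-1}; and (ii) an asymptotic separation between null and non-null inclusion rates, together with exchangeability of the nulls. Where you diverge is in the route from (i)--(ii) to the conclusion. The paper does not concentrate $\widehat I_j$ around $I_j$ via Hoeffding, nor does it manipulate the quantity $VT$; instead it simply verifies the two abstract hypotheses of Proposition~2 in \cite{dai2022false} --- \emph{rank faithfulness} and \emph{null exchangeability} --- and then invokes that proposition as a black box. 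Null exchangeability is immediate from Assumption~\ref{ass6} (all $M_j$, $j\in S_0$, share the same limiting law, hence so do the events $\{j\in\widehat{\mathscr S}\}$). Rank faithfulness follows from (i) by complementation: $\sum_{k\in S_1} I_k \ge 1-q$, and since $|S_1|=s=O(1)$ under the standing assumptions of Theorem~\ref{5-2}, one has $\max_{k\in S_1} I_k \ge (1-q)/s$, which is bounded away from zero while the null $I_j$'s are $O(q/p_0)\to 0$.

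The practical upshot is that your worry about needing an unstated minimum-signal condition is unfounded: the separation comes for free from the sparsity assumption $s=O(1)$ already built into Theorem~\ref{5-2}, via the pigeonhole argument $\max_{k\in S_1} I_k \ge (1-q)/s$. Your concentration step for $\widehat I_j$ and the $VT\le q$ inequality are detours --- they can be made to work, but they essentially rederive the content of \cite[Proposition~2]{dai2022false}, which you end up citing anyway. The cleaner path is to drop those intermediate steps and check the two named conditions of that proposition directly.
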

\noindent Theorem \ref{thm6-2} confirms that Algorithm \ref{alg2} is capable of efficiently controlling the FDR. Moreover, given that
\begin{equation*}
    \operatorname{var} \bigg( \frac{\mathds{1} (j \in \widehat{\mathscr{S}})}{|\widehat{\mathscr{S}}|}\bigg) \geq \operatorname{var} \bigg( \mathrm{E} \bigg[ \frac{\mathds{1} (j \in \widehat{\mathscr{S}})}{|\widehat{\mathscr{S}}|} \, \bigg| \, \text{data} \bigg]\bigg),
\end{equation*}
the implementation of multiple data splits in Algorithm \ref{alg2} can be considered a Rao-Blackwell improvement over the single data split approach delineated in Algorithm \ref{alg1}.

\subsection{FDR control via e-values}

Controlling FDR typically aims to restrict the expected proportion of Type I errors, which involve the incorrect rejection of a true null hypothesis. The p-value based approaches are typically adept at closely controlling the FDR at a pre-specified level, implying a concurrent tendency to manage Type II errors – the probability of erroneously retaining a false null hypothesis \cite{dai2022false, Dai2020A}
However, in some scenarios, the focus on Type II error is less critical, allowing for the possibility of controlling the FDR at levels significantly below the designated threshold. This approach may be overly stringent, but if such strictness is not a concern, it can be an effective strategy. In these instances, the e-BH procedure proves beneficial \cite{vovk2019true,vovk2021values,wang2020false}. The e-values are characterized by their additivity, independent of the correlation among covariates. This unique attribute simplifies and streamlines the FDR control process using e-values, compared to approaches based on p-values.
In this section, we will design the e-BH procedure in \cite{vovk2019true,vovk2021values,wang2020false}, and then analyze the asymptotic properties of the proposed methods.

% For $k\in \mathcal{P}$, let $\vthe_{-k}$ be the parameter $\vthe$ excluding the $k$-th component $\theta_{k}$, i.e. 
% \begin{equation*}
%     \vthe_{-k}:= (\theta_{1},\cdots, \theta_{k-1},\theta_{k+1},\cdots,\theta_{p})^{\top} \in \mathbb{R}^{p - 1}.
% \end{equation*}
The key to a valid e-value procedure is the establishment of an e-variable $E$ such that $\E E = 1$ under the null hypothesis \cite{vovk2021values}. When the hypothesis $H_{k}$ is true, by \eqref{convergence-1} we observe that $\sqrt{2 / \pi} \cdot E_{k} \, \rightsquigarrow \, \big| \mathcal{N} (0,1) \big|$, leading to $\lim_{n \rightarrow \infty} \E E_k = 1$. Based on this, we propose the following Algorithm \ref{alg3}.

%The FDR control procedure is given as follows.

\begin{algorithm}[H]
\caption{FDR control via e-values}\label{alg3}
\begin{algorithmic}[1] 
    \vspace{1ex}
    \INPUT The data and designated FDR control level $q$;
    \vspace{1ex}
    
    \State For each $k\in \mathcal{P}$, compute the e-values $e_{k}$ as one observation of $E_k$ corresponding to each hypothesis $H_{k}$ by
    \begin{equation}\label{6-2}
        E_{k}=\sqrt{\pi / 2}\cdot \sqrt{n \widehat{H}_{\theta_{k}|\vthe_{-k}}}\left|\widetilde{\theta}_{k} - \theta^{(0)}_{k}\right|.
    \end{equation}
    
    \State\label{step-2} For each $k\in \mathcal{P}$, let $e_{(k)}$ be the $k$-th order statistics of $e_{1},\cdots,e_{p}$ from the largest to the smallest.

\State Define $k^{*}$ to be
\begin{equation*}
    k^{*}:=\max \left\{ k\in\mathcal{K}: \frac{k e_{(k)}}{p} \ge \frac{1}{q} \right\}\,(\max \{ \varnothing \}=0),
\end{equation*}
reject the largest $k^{*}$ e-values (hypotheses).
\end{algorithmic}
\end{algorithm}

For the purpose of avoiding overly complex structures in the index set, we introduce the following assumption.

\begin{ass}\label{6-1}
%Let $\vthe^{*}$ be the parameter which generates the data and $E_{k}$ be the random variables of e-values in \eqref{6-2}. For any sequence of subsets $\mathscr{S}_{n} \subseteq \mathcal{P}$, the average of the expectation values of $E_{k}$ over the subset $\mathscr{S}_{n}$ has limit superior less than $1$, which means 
For any $\mathscr{S}_{n} \subseteq [p]$,  
$
    \limsup_{n,p \rightarrow \infty} \frac{1}{|\mathscr{S}_{n}|}\sum_{k\in\mathscr{S}_{n}}\E (E_{k}) \le 1 .
$
\end{ass}
\noindent Assumption \ref{6-1} is pivotal in ensuring average convergence on sets with diverging numbers of elements. However, this assumption can be relaxed with additional constraints on the data, as discussed in Section 4 of \cite{chakrabortty2018inference}. Under this assumption, the following theorem addressees the control of FDR in an asymptotic framework.

% \iffalse
% \begin{rem}
% Notice in the Assumption \ref{6-1}, $\mathscr{S}_{n}$ and $\mathcal{P}$ both depend on different $n$. The rationality of the Assumption \ref{6-1} comes from the following observation. By \eqref{convergence-1}, we have
% \begin{equation*}
%     \lim_{n\rightarrow \infty}\E (E_{k}) = \sqrt{\pi / 2}\lim_{n\rightarrow \infty}\E \left(\sqrt{n \widehat{H}_{\theta_{k}|\vthe_{-k}}}\left|\widetilde{\theta}_{k}-\theta^{*}_{k}\right|\right)=
%     \sqrt{\pi / 2} \cdot\E|Z|=1.  \,\,\,\,  (Z\sim N(0,1))
% \end{equation*}
% Thus we assume the average of $\E (E_{k})$ over any subset of $\mathcal{P}$ has limit superior when $n$ tends to infinity.
% \end{rem}
% \fi

% And by adding the assumption above, we have the following theorem controlling the FDR of the procedure in the asymptotic sense.

\begin{theorem}\label{6-3}
Under Assumptions \ref{3-4} to \ref{6-1}, Algorithm \ref{alg3} follows that $\limsup_{n,p \rightarrow \infty} \fdr \le q$.
\end{theorem}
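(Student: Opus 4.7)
The plan is to reduce Theorem \ref{6-3} to two largely separate ingredients: a deterministic ``threshold inequality'' that is the hallmark of the e-BH procedure, and an asymptotic control of the first moments of the statistics $E_k$ supplied by Theorem \ref{5-2} combined with Assumption \ref{6-1}. The crucial feature that makes this decomposition clean is that the combinatorial step of e-BH requires no independence structure across the $E_k$, so the complicated dependence induced by MCMC sampling and by the MRF itself plays no role in step (i) below.

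First I would establish the deterministic per-sample bound that underlies every e-BH-style proof. Writing $R := k^{*}$ for the number of rejections and $\mathscr{D}$ for the rejection set, the definition of $k^{*}$ gives $e_{(R)}\ge p/(qR)$ whenever $R\ge 1$, and any rejected index $k$ satisfies $e_{k}\ge e_{(R)}\ge p/(q|\mathscr{D}|)$. Rearranging,
\begin{equation*}
    \frac{\mathds{1}\{k\in\mathscr{D}\}}{|\mathscr{D}|\vee 1} \;\le\; \frac{q\,e_{k}}{p}, \qquad k\in[p].
\end{equation*}
Summing over the (unknown) true-null set $\mathscr{N}=\{k:\theta_{k}^{*}=\theta_{k}^{(0)}\}$ and taking expectations,
\begin{equation*}
    \fdr \;=\; \E\!\left[\frac{|\mathscr{N}\cap\mathscr{D}|}{|\mathscr{D}|\vee 1}\right] \;\le\; \frac{q}{p}\sum_{k\in\mathscr{N}}\E E_{k} \;\le\; q\cdot\frac{1}{|\mathscr{N}|}\sum_{k\in\mathscr{N}}\E E_{k},
\end{equation*}
where the last step uses $|\mathscr{N}|\le p$. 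This reduces the theorem to bounding the average of $\E E_{k}$ over the null indices.

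Second, I would use Theorem \ref{5-2} (via the refinement \eqref{convergence-1}) to identify the asymptotic distribution of $E_{k}$ under $H_{k}:\theta_{k}^{*}=\theta_{k}^{(0)}$. The continuous mapping theorem gives
\begin{equation*}
    E_{k} \;=\; \sqrt{\pi/2}\,\bigl|\sqrt{n\widehat{H}_{\theta_{k}|\vthe_{-k}}}(\widetilde{\theta}_{k}-\theta_{k}^{(0)})\bigr| \;\rightsquigarrow\; \sqrt{\pi/2}\,|Z|, \qquad Z\sim\mathcal{N}(0,1),
\end{equation*}
so the limit random variable has mean $\sqrt{\pi/2}\cdot\sqrt{2/\pi}=1$, exhibiting each $E_{k}$ as an \emph{asymptotic} e-variable. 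Assumption \ref{6-1}, specialized to $\mathscr{S}_{n}=\mathscr{N}$, then yields $\limsup_{n,p\to\infty}\frac{1}{|\mathscr{N}|}\sum_{k\in\mathscr{N}}\E E_{k}\le 1$, and chaining this with the display above gives $\limsup_{n,p\to\infty}\fdr\le q$.

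The main obstacle is the passage from distributional convergence of individual $E_{k}$ to convergence of their means, uniformly across the (possibly growing) null set $\mathscr{N}$: convergence in distribution does not in general transfer to convergence of first moments without some form of uniform integrability. Assumption \ref{6-1} is precisely the abstract condition that absorbs this difficulty in one stroke; without it, one would be forced to verify uniform integrability of $\sqrt{n\widehat{H}_{\theta_{k}|\vthe_{-k}}}(\widetilde{\theta}_{k}-\theta_{k}^{(0)})$ uniformly in $k\in\mathscr{N}$, which in the MCMC-MLE setting would require additional tail bounds on $\widehat{H}_{\theta_{k}|\vthe_{-k}}$ together with uniform control of the Taylor remainder in the one-step expansion of $\widetilde{\theta}_{k}$.
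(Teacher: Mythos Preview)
Your proposal is correct and follows essentially the same two-step strategy as the paper: a deterministic e-BH threshold inequality yielding $\fdr\le \frac{q}{p}\sum_{k\in\mathscr{N}}\E E_{k}\le q\cdot\frac{1}{|\mathscr{N}|}\sum_{k\in\mathscr{N}}\E E_{k}$, followed by an application of Assumption~\ref{6-1} with $\mathscr{S}_{n}=\mathscr{N}$. The paper phrases the first step via an auxiliary threshold $t_{\epsilon}$ and the pointwise bound $t_{\epsilon}\mathds{1}(E_{k}\ge t_{\epsilon})\le E_{k}$, whereas you use the equivalent per-index inequality $\mathds{1}\{k\in\mathscr{D}\}/(|\mathscr{D}|\vee 1)\le q e_{k}/p$; these are interchangeable formulations of the same e-BH argument.
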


As we can see, when incorporating Assumption \ref{6-1}, Theorem \ref{6-3} ensures that Algorithm \ref{alg3} asymptotically controls the FDR at the desired level. Therefore, our e-value procedure provides theoretically valid FDR control, similar to the previous p-value procedure.
%{\color{red}[Please comment on Theorem \ref{6-3}]}

\section{Simulation Studies}\label{sec_sim}

In this section, we conduct extensive numerical studies to assess the finite sample performance of the proposed procedure.
%In this section, we employ simulation techniques to validate our theorem. It is important to note that, to the best of our knowledge, no other method currently addresses this problem with a general $\varphi$ function in high-dimensional settings. Thus, our primary objective here is to provide empirical support for our theory.
We generate $\Y_1, \ldots, \Y_m$ i.i.d. from $\mathcal N(0, I_d)$ with $d = 1$. %Our initial focus is to validate the consistency of the estimator $\widehat{\vthe}$, as defined in \eqref{2-8}, and the asymptotic normality of the one-step estimator $\widecheck{\vthe}$, outlined in \eqref{one_step_est}.
The following scenarios for $\varphi$ are investigated:
%We design three distinct types of functions $\varphi: \mathbb{R} \rightarrow \mathbb{R}^p$, denoted as $\varphi^{(j)}$ for $j = 1, 2, 3$ as follows:
\begin{itemize}
\item $\varphi^{(1)} (x) = \big( \cos (x \pi), \cos (2 x \pi), \ldots, \cos (p x \pi) \big)^{\top}$.
\item $ \varphi^{(2)} (x) = \big( \arctan (x), \arctan (2 x), \ldots, \log (p x + 1) \big)^{\top}$.
\item $\varphi^{(3)} (x) = \left(\frac{1}{1 + x}, \frac{1}{1 + x^2}, \ldots, \frac{1}{1 + x^p}\right)^{\top}$.
\end{itemize}
Utilizing the Metropolis sampling method \cite{hastings1970monte}, we generate samples $\X_1, \ldots, \X_n$, treating $\{ \X_i\}_{i = 1}^n$ as an i.i.d. sample. In each simulation, cross-validation is employed to select the tuning parameters $(\lambda_1, \lambda_2, \lambda') \in \mathbb{R}_+^3$. The elements of the true parameter $\vthe^* = (\theta_1^*, \theta_2^*, \ldots, \theta_p^*)$ are generated from $U \times \mathds{1} ( U' < 0.1)$, where $U$ and $U'$ are independently uniformly distributed over $(0, 1)$. We set the simulated sample size as $m = n$. All the simulation results are based on $N=100$ replications.
% Table generated by Excel2LaTeX from sheet 'Sheet1'
\begin{table}[htbp]
  \centering
  \caption{The average $\ell_1$ error $p^{-1} \|\widehat{\vthe} - \vthe^* \|_1$ under $N = 100$ duplications.}
  \resizebox{0.5\paperheight}{!}{\begin{tabular}{ccccccccccc}
    \toprule
    \toprule
    \multicolumn{11}{c}{$\varphi^{(1)}(x)$} \\
    \midrule
       \diaghead(-2,1){aaaaaaaaaa}{$\quad n$}{$p$}   & 50    & 100   & 150   & 200   & 250   & 300   & 350   & 400   & 450   & 500 \\
    \midrule
    100   & 0.0583  & 0.1232  & 0.2042  & 0.5108  & 1.8834  & 1.5548  & 1.5819  & 2.0679  & 2.9404  & 3.7422  \\
    \midrule
    200   & 0.0422  & 0.0751  & 0.0903  & 0.1596  & 0.2024  & 0.2187  & 0.4313  & 0.5091  & 0.5176  & 1.1599  \\
    \midrule
    300   & 0.0354  & 0.0541  & 0.0909  & 0.1052  & 0.1183  & 0.1741  & 0.2058  & 0.2232  & 0.2828  & 0.3410  \\
    \midrule
    400   & 0.0326  & 0.0614  & 0.0965  & 0.1044  & 0.0863  & 0.1089  & 0.1622  & 0.1689  & 0.1952  & 0.2074  \\
    \midrule
    500   & 0.0762  & 0.0853  & 0.0550  & 0.0783  & 0.0930  & 0.1220  & 0.1221  & 0.1227  & 0.1615  & 0.1903  \\
    \midrule
    600   & 0.0219  & 0.0565  & 0.0453  & 0.0696  & 0.0695  & 0.0821  & 0.1344  & 0.1116  & 0.1513  & 0.1390  \\
    \midrule
    700   & 0.0281  & 0.0377  & 0.0476  & 0.0612  & 0.0714  & 0.0920  & 0.0914  & 0.1051  & 0.1102  & 0.1218  \\
    \midrule
    800   & 0.0230  & 0.0550  & 0.0373  & 0.1000  & 0.0755  & 0.0710  & 0.1041  & 0.0957  & 0.1244  & 0.1294  \\
    \midrule
    900   & 0.0218  & 0.0351  & 0.0667  & 0.0617  & 0.0651  & 0.0614  & 0.0788  & 0.0911  & 0.0888  & 0.1185  \\
    \midrule
    1000  & 0.0521  & 0.0452  & 0.0854  & 0.0495  & 0.0620  & 0.0737  & 0.0794  & 0.0795  & 0.0998  & 0.1097  \\
    \midrule
    \multicolumn{11}{c}{$\varphi^{(2)}(x)$} \\
    \midrule
       \diaghead(-2,1){aaaaaaaaaa}{$\quad n$}{$p$}   & 50    & 100   & 150   & 200   & 250   & 300   & 350   & 400   & 450   & 500 \\
    \midrule
    100   & 0.2420  & 0.9483  & 0.0920  & 3.7035  & 5.7790  & 0.9768  & 11.1498  & 14.4463  & 18.1982  & 19.6261  \\
    \midrule
    200   & 0.1179  & 0.4703  & 0.1048  & 0.1615  & 2.8785  & 0.2173  & 5.5507  & 7.1994  & 9.3011  & 11.4753  \\
    \midrule
    300   & 0.0790  & 0.3118  & 0.0252  & 1.2319  & 0.0555  & 0.0748  & 3.7021  & 4.7937  & 6.1796  & 8.6076  \\
    \midrule
    400   & 0.0596  & 0.2335  & 0.0174  & 0.9251  & 1.4300  & 0.2248  & 2.7787  & 3.6098  & 4.6258  & 5.7292  \\
    \midrule
    500   & 0.0475  & 0.1853  & 0.4183  & 0.0309  & 1.1425  & 0.1636  & 0.1216  & 2.8813  & 3.6366  & 4.5797  \\
    \midrule
    600   & 0.0393  & 0.1545  & 0.0189  & 0.0264  & 0.9494  & 0.1365  & 0.1844  & 2.3881  & 0.2575  & 0.4251  \\
    \midrule
    700   & 0.0332  & 0.1329  & 0.0110  & 0.0621  & 0.8180  & 0.0577  & 0.1580  & 2.0542  & 2.6547  & 0.3268  \\
    \midrule
    800   & 0.0297  & 0.1164  & 0.2591  & 0.0239  & 0.7172  & 0.0361  & 0.1384  & 0.1570  & 0.1404  & 0.2866  \\
    \midrule
    900   & 0.0261  & 0.1033  & 0.0066  & 0.4086  & 0.0246  & 0.0352  & 0.1229  & 0.1583  & 0.1321  & 0.2545  \\
    \midrule
    1000  & 0.0236  & 0.0024  & 0.0208  & 0.0120  & 0.0573  & 0.0818  & 0.1105  & 0.1441  & 0.1205  & 0.1126  \\
    \midrule
    \multicolumn{11}{c}{$\varphi^{(3)}(x)$} \\
    \midrule
       \diaghead(-2,1){aaaaaaaaaa}{$\quad n$}{$p$}   & 50    & 100   & 150   & 200   & 250   & 300   & 350   & 400   & 450   & 500 \\
    \midrule
    100   & 0.0744  & 0.0802  & 0.3834  & 0.1714  & 0.6114  & 0.8035  & 0.5839  & 0.4769  & 2.0614  & 6.2039  \\
    \midrule
    200   & 0.0724  & 0.1124  & 0.0550  & 0.1127  & 0.3119  & 1.8504  & 0.2266  & 0.2357  & 0.6231  & 0.8614  \\
    \midrule
    300   & 0.0114  & 0.0465  & 0.0596  & 0.0549  & 0.2676  & 0.1700  & 0.1451  & 0.1181  & 0.4000  & 0.1436  \\
    \midrule
    400   & 0.0095  & 0.0171  & 0.0335  & 0.0427  & 0.5303  & 0.0639  & 0.3037  & 0.1764  & 0.2361  & 1.4182  \\
    \midrule
    500   & 0.0101  & 0.0142  & 0.0235  & 0.0672  & 0.1142  & 0.0819  & 0.8170  & 1.0434  & 0.7713  & 0.1647  \\
    \midrule
    600   & 0.0057  & 0.0135  & 0.0469  & 0.0450  & 0.3342  & 0.2689  & 0.6022  & 0.8204  & 0.6905  & 0.1272  \\
    \midrule
    700   & 0.0060  & 0.0100  & 0.0185  & 0.0258  & 0.0399  & 0.0605  & 0.0472  & 0.8851  & 0.8351  & 0.0566  \\
    \midrule
    800   & 0.0042  & 0.0104  & 0.0210  & 0.0197  & 0.0346  & 0.0485  & 0.0449  & 0.6599  & 0.4567  & 0.0721  \\
    \midrule
    900   & 0.0039  & 0.0086  & 0.0136  & 0.0217  & 0.0261  & 0.0411  & 0.0538  & 0.4903  & 0.0550  & 0.0577  \\
    \midrule
    1000  & 0.0035  & 0.0084  & 0.0308  & 0.0273  & 0.0201  & 0.0267  & 0.0392  & 0.0309  & 0.0695  & 0.0572  \\
    \bottomrule
    \bottomrule
    \end{tabular}}%
  \label{tab_L_1}%
\end{table}%

To assess the estimation performance of the proposed methods, we calculate the $\ell_1$-error of the Elastic-net estimator $\widehat{\vthe}$ from $p^{-1} \|\widehat{\vthe} - \vthe^* \|_1$.
%The performance of the proposed methods is evaluated through comparisons of the $\ell_1$-error of the Elastic-net estimator $\widehat{\vthe}$ is quantified as $p^{-1} \|\widehat{\vthe} - \vthe^* \|_1$. 
Table \ref{tab_L_1} shows our estimator demonstrating consistency under both low and high-dimensional cases when the tuning parameters $\lambda_1$ and $\lambda_2$ are appropriately selected. This empirically confirms the theoretical consistency guarantee stated in Theorem \ref{thm-1}.
%{\color{red}[How to define ``good"?]}

Next, we investigate the one-step estimator with the empirical rejection rate $\operatorname{rate} \big(\widetilde{\theta}_1\big) =  \frac{1}{N} \sum_{\ell = 1}^N 1\big( \widetilde{\theta}_1^{(\ell)} \in CI^{(\ell)}\big)$, where $N = 50$ represents the number of replications, and $\widetilde{\theta}_1^{(\ell)}$ and $CI^{(\ell)}$ correspond to the values obtained in the $\ell$-th simulation.  For the sake of simplicity, we construct the $95\%$ confidence interval for $\widetilde{\theta}_1$ defined \eqref{confidence_interval}.  The results of this analysis are depicted in Figure \ref{fig_rej}.%{\color{red} [Please comment on Figure 1 here]}
As shown in Figure \ref{fig_rej}, the one-step estimators exhibit the asymptotic normality stated in Theorem \ref{5-2}, without requiring extremely large sample sizes. The empirical rejection rates match the designed confidence level when $n$ is around 300. This demonstrates the asymptotic normality holding reasonably well for moderately sized samples.

\begin{figure}[t!]
    \minipage{0.33\textwidth}
         \includegraphics[width=\linewidth]{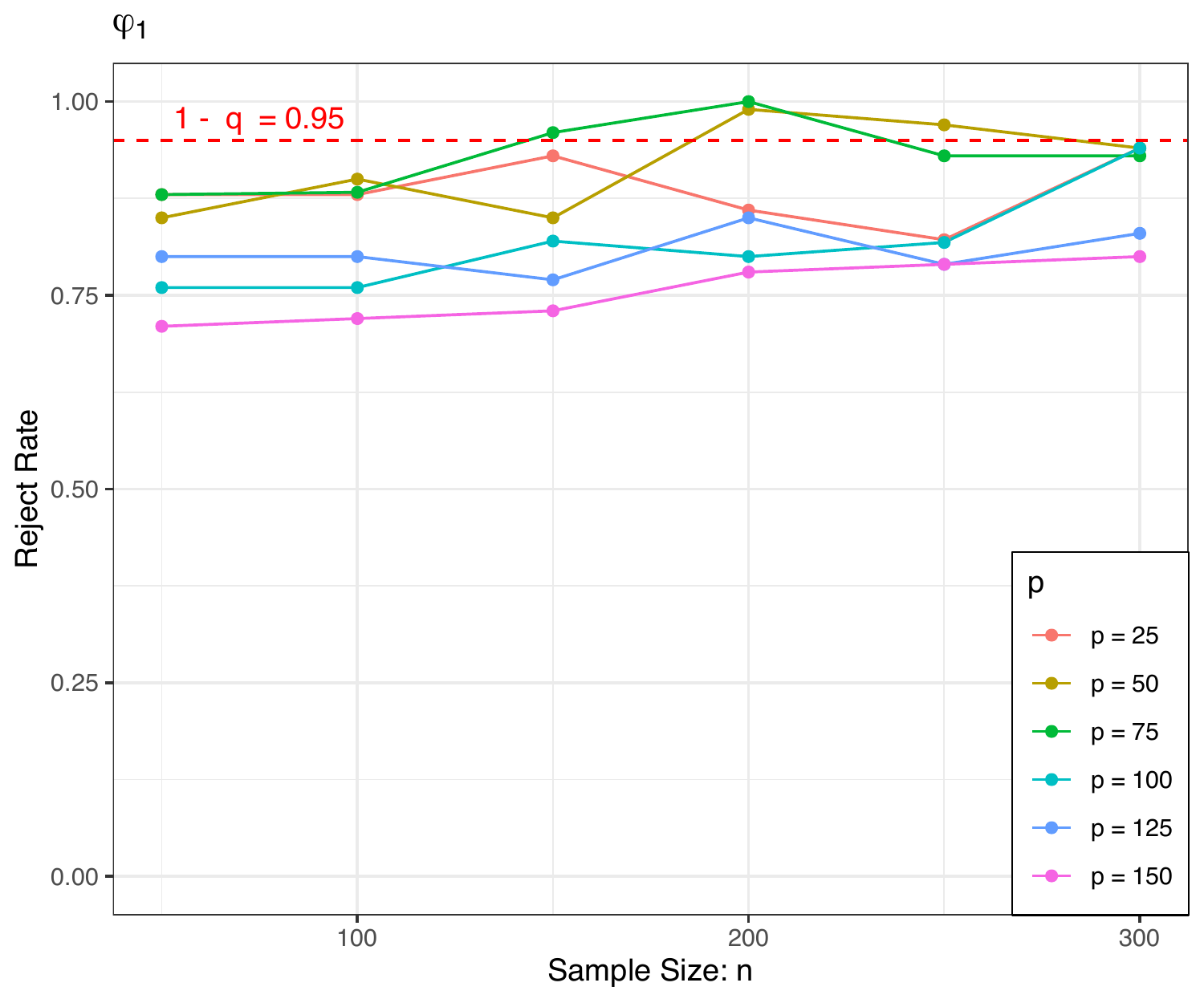}
    \endminipage\hfill
    \minipage{0.33\textwidth}
        \includegraphics[width=\linewidth]{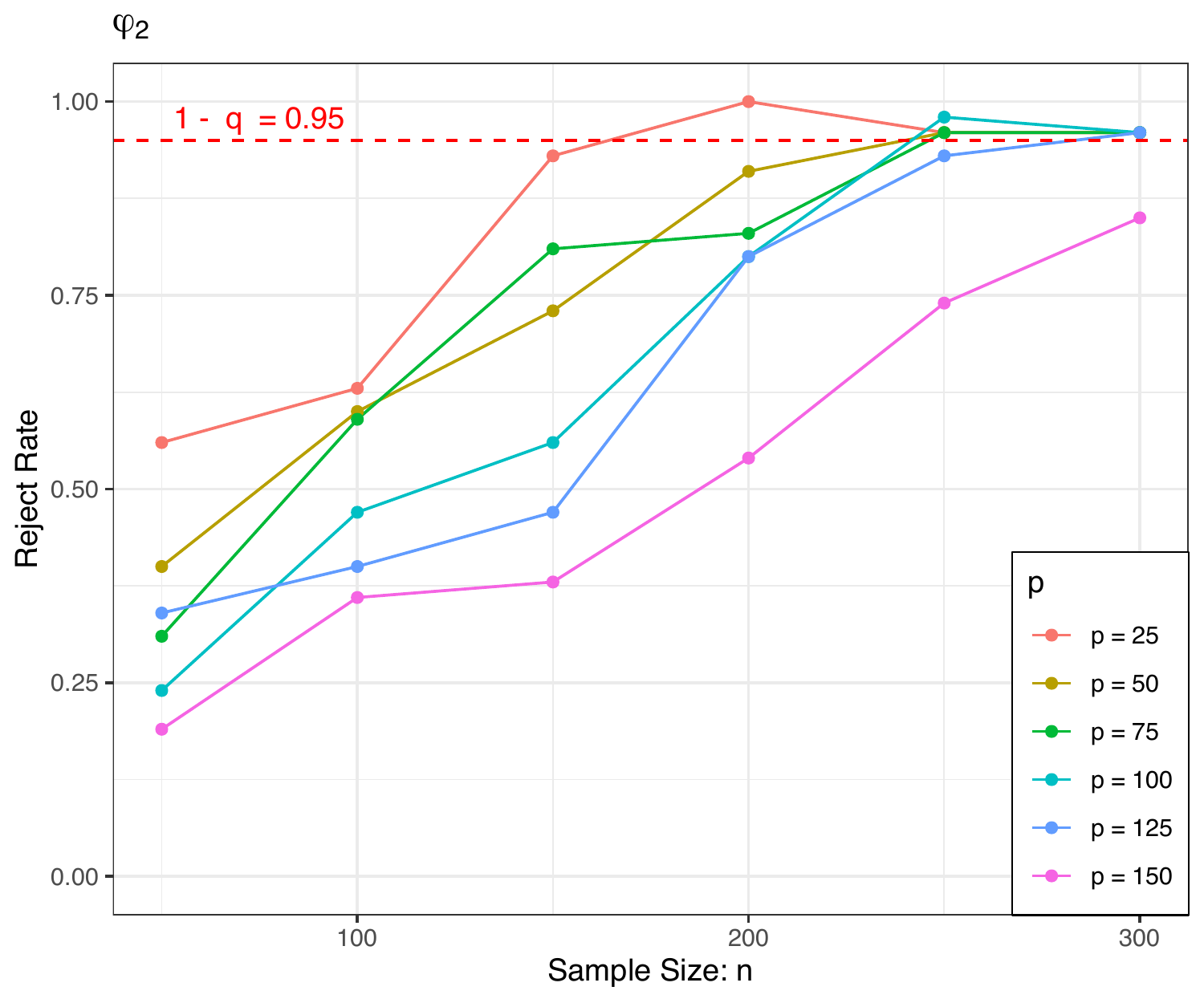}
    \endminipage
    \minipage{0.33\textwidth}
        \includegraphics[width=\linewidth]{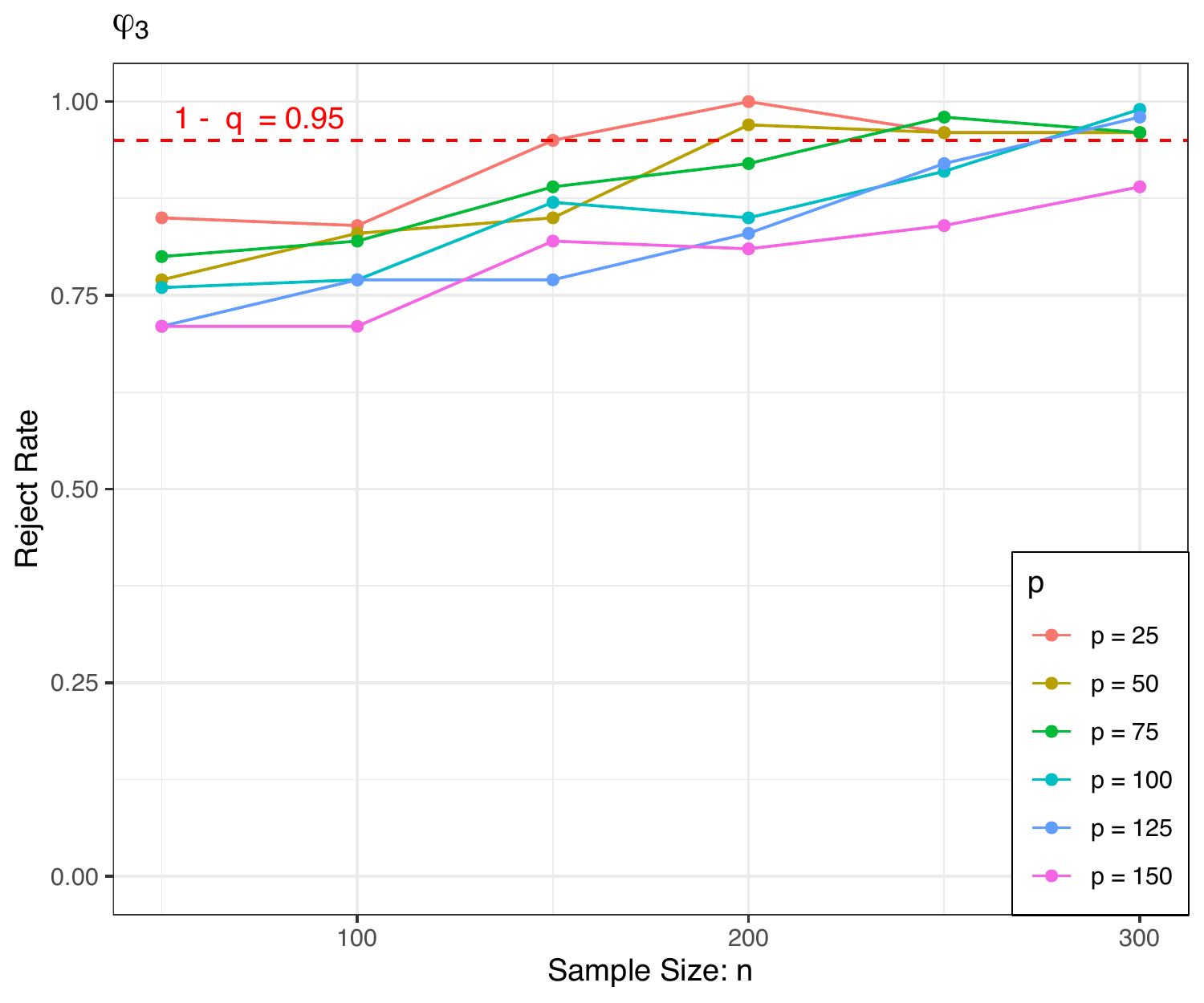}
    \endminipage
    \caption{Empirical rejection rate under $N = 100$ replications.}
    \label{fig_rej}
\end{figure}

As for FDR control, we study the
traditional false discovery rate control procedures via p-values (Algorithm \ref{alg1}) and the more novel approach utilizing e-values (Algorithm \ref{alg3}) with a nominal level of $q = 0.05$. %Specifically, we set a desired control level of $q = 0.05$ and implement both the p-value based procedure (Algorithm \ref{alg1}) and the e-value based procedure (Algorithm \ref{alg3}), maintaining identical settings as described previously. 
%The outcomes of these simulations are presented in Table \ref{tab_FDR}.
Table \ref{tab_FDR} demonstrates that both the p-value and e-value based procedures effectively maintain the desired level of FDR control. Notably, the e-value based procedure (Algorithm \ref{alg3}) tends to produce a more conservative FDP level. Moreover, the e-value procedure is more computationally efficient than the p-value procedure.

%The p-value methods require the computation of estimators across multiple distinct datasets, whereas the e-value approach requires only a single statistical computation on the entire dataset.

% Table generated by Excel2LaTeX from sheet 'Sheet2'
\begin{table}[h!]
  \centering
  \caption{The FDP levels under both p-value procedure and e-value procedure with $q=0.05$. In  each cell, $(FDP_{\text{alg 1}}, FDP_{\text{alg 3}})$ represents the average FDP levels of Algorithm \ref{alg1} and Algorithm \ref{alg3} under $N = 50$ independent repeated simulations. }
    \resizebox{\columnwidth}{!}{\begin{tabular}{ccccccccccc}
    \toprule
    \toprule
    \multicolumn{11}{c}{$\varphi^{(1)}(x)$} \\
    \midrule
      \diaghead(-2,1){aaaaaaaaaa}{$\quad n$}{$p$}    & 50    & 100   & 150   & 200   & 250   & 300   & 350   & 400   & 450   & 500 \\
    \midrule
    100   & 0.0598, 0.02 & 0.0972, 0 & 0.1234, 0 & 0.1338, 0 & 0.1001, 0 & 0.1387, 0 & 0.1084, 0.02 & 0.1059, 0.06 & 0.0963, 0.04 & 0.0957, 0.102 \\
    \midrule
    200   & 0.0060, 0 & 0.0062, 0 & 0.0325, 0 & 0.0671, 0 & 0.1001, 0 & 0.0706, 0 & 0.1084, 0 & 0.1059, 0 & 0.0963, 0 & 0.0957, 0 \\
    \midrule
    300   & 0.0030, 0 & 0, 0  & 0.0011, 0 & 0.0018, 0 & 0.0148, 0 & 0.0435, 0 & 0.0724, 0 & 0.0928, 0 & 0.0803, 0 & 0.1103, 0 \\
    \midrule
    400   & 0, 0  & 0, 0  & 0, 0  & 0, 0  & 0, 0  & 0.0042, 0 & 0.0116, 0 & 0.0287, 0 & 0.0623, 0 & 0.0672, 0 \\
    \midrule
    500   & 0, 0  & 0, 0  & 0, 0  & 0, 0  & 0, 0  & 0, 0  & 0, 0  & 0, 0  & 0.0066, 0 & 0.0303, 0 \\
    \midrule
    \multicolumn{11}{c}{$\varphi^{(2)}(x)$} \\
    \midrule
       \diaghead(-2,1){aaaaaaaaaa}{$\quad n$}{$p$}   & 50    & 100   & 150   & 200   & 250   & 300   & 350   & 400   & 450   & 500 \\
    \midrule
    100   & 0.0162, 0.02 & 0.0584, 0 & 0.0737, 0 & 0.1408, 0 & 0.1289, 0.02 & 0.1105, 0.02 & 0.1171, 0.04 & 0.0647, 0.04 & 0.069, 0.08 & 0.1121, 0.12 \\
    \midrule
    200   & 0, 0  & 0, 0  & 0.0068, 0 & 0.0399, 0 & 0.0723, 0 & 0.1105, 0 & 0.1171, 0 & 0.0073, 0 & 0.0337, 0 & 0.0636, 0 \\
    \midrule
    300   & 0, 0  & 0, 0  & 0, 0  & 0, 0  & 0.0017, 0 & 0.0173, 0 & 0.0348, 0 & 0.0647, 0 & 0.0690, 0 & 0.1121, 0 \\
    \midrule
    400   & 0, 0  & 0, 0  & 0, 0  & 0, 0  & 0, 0  & 0, 0  & 0, 0  & 0.0011, 0 & 0.0254, 0 & 0.0496, 0 \\
    \midrule
    500   & 0, 0  & 0, 0  & 0, 0  & 0, 0  & 0, 0  & 0, 0  & 0, 0  & 0.0005, 0 & 0.0010, 0 & 0.0023, 0 \\
    \midrule
    \multicolumn{11}{c}{$\varphi^{(3)}(x)$} \\
    \midrule
       \diaghead(-2,1){aaaaaaaaaa}{$\quad n$}{$p$}   & 50    & 100   & 150   & 200   & 250   & 300   & 350   & 400   & 450   & 500 \\
    \midrule
    100   & 0.0988, 0.08 & 0.1421, 0.02 & 0.1145, 0 & 0.1024, 0.04 & 0.1408, 0.04 & 0.0954, 0.1 & 0.1132, 0.12 & 0.0935, 0.12 & 0.0836, 0.08 & 0.0992, 0.12 \\
    \midrule
    200   & 0.0771, 0.002 & 0.1342, 0 & 0.1145, 0.02 & 0.1024, 0 & 0.1408, 0.02 & 0.0954, 0 & 0.0814, 0.02 & 0.1111, 0.02 & 0.1067, 0.02 & 0.1060, 0.06 \\
    \midrule
    300   & 0.0988, 0 & 0.0408, 0 & 0.0617, 0 & 0.0685, 0 & 0.1138, 0 & 0.1096, 0 & 0.1132, 0 & 0.0935, 0 & 0.0836, 0 & 0.0992, 0 \\
    \midrule
    400   & 0.0697, 0 & 0.0238, 0 & 0.0251, 0 & 0.028, 0 & 0.0346, 0 & 0.0505, 0 & 0.0696, 0 & 0.0885, 0 & 0.0997, 0 & 0.0996, 0 \\
    \midrule
    500   & 0.0405, 0 & 0.0232, 0 & 0.0122, 0 & 0.0076, 0 & 0.0102, 0 & 0.0183, 0 & 0.0223, 0 & 0.0368, 0 & 0.0428, 0 & 0.0556, 0 \\
    \bottomrule
    \bottomrule
    \end{tabular}}%
  \label{tab_FDR}%
\end{table}%

\section{Concluding remarks}
%{\color{red}[Please give a summary of the paper]}

This paper investigated the estimation and inference of parameters for Markov Random Fields in canonical exponential families. We propose a penalized Markov Chain Monte Carlo Maximum Likelihood method to estimate the intractable normalizing constant in high-dimensional settings. To facilitate statistical inference on the obtained estimates, we subsequently devise a one-step estimator with decorrelated scores and thus present two FDR controlling procedures. Our extensive numerical investigations demonstrate that the proposed methods provide satisfactory performances in estimation and inference. Given our methodology's reliance on mild assumptions, its broad potential applicability, warrants further research. 

%This paper begins by discussing estimation of parameters for Markov Random Fields in canonical exponential families. The two key challenges are estimating the intractable normalizing constant and dealing with high dimensionality. To address these issues, we propose a penalized Markov Chain Monte Carlo Maximum Likelihood method which provides consistent parameter estimation. To enable statistical inference on the estimates, we then construct a one-step estimator using the consistent estimates and decorrelated scores. Additionally, we develop two procedures to control the False Discovery Rate when performing multiple hypothesis testing. Our model, estimators, and approaches rely on mild assumptions, suggesting potential for widespread application across various disciplines.

%%%%%%%%%%%%%%%%%%%%%%%%%%%%%%%%%%%%%%%%%%%%%%
%% Support information, if any,             %%
%% should be provided in the                %%
%% Acknowledgements section.                %%
%%%%%%%%%%%%%%%%%%%%%%%%%%%%%%%%%%%%%%%%%%%%%%
%\begin{acks}[Acknowledgments]
%The authors thank Professor Qiang Sun at University of Toronto for helpful discussions. 
%\end{acks}

%%%%%%%%%%%%%%%%%%%%%%%%%%%%%%%%%%%%%%%%%%%%%%
%% Funding information, if any,             %%
%% should be provided in the                %%
%% funding section.                         %%
%%%%%%%%%%%%%%%%%%%%%%%%%%%%%%%%%%%%%%%%%%%%%%

\section*{Acknowledgments}
The research of H. Zhang is supported in part by NSFC Grant No.12101630.

\section*{Author contributions statement}

%Must include all authors, identified by initials, for example:
%A.A. conceived the experiment(s),  A.A. and B.A. conducted the experiment(s), C.A. and D.A. analysed the results.  All authors reviewed the manuscript. 
H.Z. was the driving force behind the motivation for this study and developed the main ideas. H.W., X.L. and Y.H. were responsible for the writing and rigorous completion of all proofs. H.W. took the lead in implementing the experiments. All authors participated in writing and reviewing the manuscript.

\bibliographystyle{chicago}
\bibliography{ref}

\begin{thebibliography}{}

\bibitem[\protect\citeauthoryear{Agresti}{Agresti}{2003}]{agresti2003categorical}
Agresti, A. (2003).
\newblock {\em Categorical {D}ata {A}nalysis}, Volume 482.
\newblock John Wiley \& Sons.

\bibitem[\protect\citeauthoryear{Azriel and Schwartzman}{Azriel and Schwartzman}{2017}]{azriel2015}
Azriel, D. and A.~Schwartzman (2017).
\newblock The empirical distribution of a large number of correlated normal variables.
\newblock {\em Journal of the American Statistical Association\/}~{\em 110\/}(511), 1217--1228.

\bibitem[\protect\citeauthoryear{Banerjee, Ghaoui, and d’Aspremont}{Banerjee et~al.}{2008}]{banerjee2008model}
Banerjee, O., L.~E. Ghaoui, and A.~d’Aspremont (2008).
\newblock Model {S}election through {S}parse {M}aximum {L}ikelihood {E}stimation for {M}ultivariate {G}aussian or {B}inary {D}ata.
\newblock {\em The Journal of Machine Learning Research\/}~{\em 9\/}(Mar), \,485--516.

\bibitem[\protect\citeauthoryear{Barber and Cand{\`e}s}{Barber and Cand{\`e}s}{2015}]{barber2015controlling}
Barber, R.~F. and E.~J. Cand{\`e}s (2015).
\newblock Controlling the false discovery rate via knockoffs.
\newblock {\em The Annals of Statistics\/}~{\em 43\/}(5).

\bibitem[\protect\citeauthoryear{Barndorff-Nielsen}{Barndorff-Nielsen}{2014}]{barndorff2014information}
Barndorff-Nielsen, O.~E. (2014).
\newblock {\em Information and {E}xponential {F}amilies: in {S}tatistical {T}heory}.
\newblock John Wiley \& Sons.

\bibitem[\protect\citeauthoryear{Baxter}{Baxter}{2016}]{baxter2016exactly}
Baxter, R.~J. (2016).
\newblock {\em Exactly {S}olved {M}odels in {S}tatistical {M}echanics}.
\newblock Elsevier.

\bibitem[\protect\citeauthoryear{Benjamini and Hochberg}{Benjamini and Hochberg}{1995}]{benjamini1995controlling}
Benjamini, Y. and Y.~Hochberg (1995).
\newblock Controlling the false discovery rate: a practical and powerful approach to multiple testing.
\newblock {\em Journal of the Royal Statistical Society: Series B (Methodological)\/}~{\em 57\/}(1), 289--300.

\bibitem[\protect\citeauthoryear{Benjamini and Yekutieli}{Benjamini and Yekutieli}{2001}]{benjamini2001control}
Benjamini, Y. and D.~Yekutieli (2001).
\newblock The control of the false discovery rate in multiple testing under dependency.
\newblock {\em The Annals of Statistics\/}, 1165--1188.

\bibitem[\protect\citeauthoryear{Besag}{Besag}{1974}]{besag1974spatial}
Besag, J. (1974).
\newblock Spatial {I}nteraction and the {S}tatistical {A}nalysis of {L}attice {S}ystems.
\newblock {\em Journal of the Royal Statistical Society: Series B (Methodological)\/}~{\em 36\/}(2), \,192--225.

\bibitem[\protect\citeauthoryear{Bickel, Ritov, and Tsybakov}{Bickel et~al.}{2009}]{bickel2009simultaneous}
Bickel, P.~J., Y.~Ritov, and A.~B. Tsybakov (2009).
\newblock Simultaneous {A}nalysis of {L}asso and {D}antzig {S}elector.
\newblock {\em The Annals of Statistics\/}~{\em 37\/}(4), \,1705--1732.

\bibitem[\protect\citeauthoryear{Blei, Ng, and Jordan}{Blei et~al.}{2003}]{blei2003latent}
Blei, D.~M., A.~Y. Ng, and M.~I. Jordan (2003).
\newblock Latent {D}irichlet {A}llocation.
\newblock {\em The Journal of Machine Learning Research\/}~{\em 3\/}(Jan), \,993--1022.

\bibitem[\protect\citeauthoryear{Bogdan, Miasojedow, and Wallin}{Bogdan et~al.}{}]{bogdandiscussion}
Bogdan, M., B.~Miasojedow, and J.~Wallin.
\newblock Discussion of” a novel algorithmic approach to bayesian logic regression” by a. hubin, g. storvik and f. frommlet.

\bibitem[\protect\citeauthoryear{Borwein and Lewis}{Borwein and Lewis}{2010}]{borwein2010convex}
Borwein, J. and A.~S. Lewis (2010).
\newblock {\em Convex {A}nalysis and {N}onlinear {O}ptimization: {T}heory and {E}xamples}.
\newblock Springer Science \& Business Media.

\bibitem[\protect\citeauthoryear{Brown}{Brown}{1986}]{brown1968}
Brown, L.~D. (1986).
\newblock Fundamentals of statistical exponential families: with applications in statistical decision theory.
\newblock Ims.

\bibitem[\protect\citeauthoryear{B{\"u}hlmann}{B{\"u}hlmann}{2013}]{buhlmann2013statistical}
B{\"u}hlmann, P. (2013).
\newblock Statistical significance in high-dimensional linear models.
\newblock {\em Bernoulli\/}~{\em 19\/}(4), 1212--1242.

\bibitem[\protect\citeauthoryear{Chakrabortty, Nandy, and Li}{Chakrabortty et~al.}{2018}]{chakrabortty2018inference}
Chakrabortty, A., P.~Nandy, and H.~Li (2018).
\newblock Inference for individual mediation effects and interventional effects in sparse high-dimensional causal graphical models.
\newblock {\em arXiv preprint arXiv:1809.10652\/}.

\bibitem[\protect\citeauthoryear{Chandrasekaran, Srebro, and Harsha}{Chandrasekaran et~al.}{2012}]{chandrasekaran2012complexity}
Chandrasekaran, V., N.~Srebro, and P.~Harsha (2012).
\newblock Complexity of {I}nference in {G}raphical {M}odels.
\newblock {\em arXiv preprint arXiv:1206.3240\/}.

\bibitem[\protect\citeauthoryear{Chen, Li, Liu, and Wen}{Chen et~al.}{2023}]{chen2023systematic}
Chen, S., Z.~Li, L.~Liu, and Y.~Wen (2023).
\newblock The systematic comparison between gaussian mirror and model-x knockoff models.
\newblock {\em Scientific Reports\/}~{\em 13\/}(1), 5478.

\bibitem[\protect\citeauthoryear{Chernozhukov, Chetverikov, Kato, and Koike}{Chernozhukov et~al.}{2023}]{chernozhukov2023high}
Chernozhukov, V., D.~Chetverikov, K.~Kato, and Y.~Koike (2023).
\newblock High-dimensional data bootstrap.
\newblock {\em Annual Review of Statistics and Its Application\/}~{\em 10}, 427--449.

\bibitem[\protect\citeauthoryear{Cook}{Cook}{1971}]{cook1971complexity}
Cook, S.~A. (1971).
\newblock The {C}omplexity of {T}heorem-{P}roving {P}rocedures.
\newblock In {\em Proceedings of the third annual ACM symposium on Theory of computing}, pp.\  \,151--158.

\bibitem[\protect\citeauthoryear{{Cross} and {Jain}}{{Cross} and {Jain}}{1983}]{4767341}
{Cross}, G.~R. and A.~K. {Jain} (1983).
\newblock Markov {R}andom {F}ield {T}exture {M}odels.
\newblock {\em IEEE Transactions on Pattern Analysis and Machine Intelligence\/}~{\em PAMI-5\/}(1), \,25--39.

\bibitem[\protect\citeauthoryear{Cui, Jia, Xiao, and Zhang}{Cui et~al.}{2021}]{cui2021directional}
Cui, C., J.~Jia, Y.~Xiao, and H.~Zhang (2021).
\newblock Directional fdr control for sub-gaussian sparse glms.
\newblock {\em arXiv preprint arXiv:2105.00393\/}.

\bibitem[\protect\citeauthoryear{Dai, Lin, Xing, and Liu}{Dai et~al.}{2022}]{dai2022false}
Dai, C., B.~Lin, X.~Xing, and J.~S. Liu (2022).
\newblock False discovery rate control via data splitting.
\newblock {\em Journal of the American Statistical Association\/}, 1--18.

\bibitem[\protect\citeauthoryear{Dai, Lin, Xing, and Liu}{Dai et~al.}{2023}]{Dai2020A}
Dai, C., B.~Lin, X.~Xing, and J.~S. Liu (2023).
\newblock A scale-free approach for false discovery rate control in generalized linear models.
\newblock {\em Journal of the American Statistical Association\/}, 1--15.

\bibitem[\protect\citeauthoryear{Du, Guo, Sun, and Zou}{Du et~al.}{2023}]{du2023false}
Du, L., X.~Guo, W.~Sun, and C.~Zou (2023).
\newblock False discovery rate control under general dependence by symmetrized data aggregation.
\newblock {\em Journal of the American Statistical Association\/}~{\em 118\/}(541), 607--621.

\bibitem[\protect\citeauthoryear{Edwards and Kreiner}{Edwards and Kreiner}{1983}]{edwards1983analysis}
Edwards, D. and S.~Kreiner (1983).
\newblock The {A}nalysis of {C}ontingency {T}ables by {G}raphical {M}odels.
\newblock {\em Biometrika\/}~{\em 70\/}(3), \,553--565.

\bibitem[\protect\citeauthoryear{Efron}{Efron}{1978}]{efron1978geometry}
Efron, B. (1978).
\newblock The {G}eometry of {E}xponential {F}amilies.
\newblock {\em The Annals of Statistics\/}~{\em 6\/}(2), \,362--376.

\bibitem[\protect\citeauthoryear{Fang, Ning, and Liu}{Fang et~al.}{2017}]{fang2017testing}
Fang, E.~X., Y.~Ning, and H.~Liu (2017).
\newblock Testing and {C}onfidence {I}ntervals for {H}igh {D}imensional {P}roportional {H}azards {M}odels.
\newblock {\em Journal of the Royal Statistical Society: Series B (Methodological)\/}~{\em 79\/}(5), \,1415--1437.

\bibitem[\protect\citeauthoryear{Fienberg}{Fienberg}{2000}]{fienberg2000contingency}
Fienberg, S.~E. (2000).
\newblock Contingency {T}ables and {L}og-{L}inear {M}odels: {B}asic {R}esults and {N}ew {D}evelopments.
\newblock {\em Journal of the American Statistical Association\/}~{\em 95\/}(450), \,643--647.

\bibitem[\protect\citeauthoryear{{Geman} and {Geman}}{{Geman} and {Geman}}{1984}]{4767596}
{Geman}, S. and D.~{Geman} (1984).
\newblock Stochastic {R}elaxation, {G}ibbs {D}istributions, and the {B}ayesian {R}estoration of {I}mages.
\newblock {\em IEEE Transactions on Pattern Analysis and Machine Intelligence\/}~{\em PAMI-6\/}(6), \,721--741.

\bibitem[\protect\citeauthoryear{Geng, Kuang, Liu, Wright, and Page}{Geng et~al.}{2018}]{geng2018stochastic}
Geng, S., Z.~Kuang, J.~Liu, S.~Wright, and D.~Page (2018).
\newblock Stochastic learning for sparse discrete markov random fields with controlled gradient approximation error.
\newblock In {\em Uncertainty in artificial intelligence: proceedings of the... conference. Conference on Uncertainty in Artificial Intelligence}, Volume 2018, pp.\  156. NIH Public Access.

\bibitem[\protect\citeauthoryear{Geyer}{Geyer}{1992}]{geyer1992markov}
Geyer, C.~J. (1992).
\newblock Markov {C}hain {M}onte {C}arlo {M}aximum {L}ikelihood.
\newblock Technical report, Minnesota University, Minneapolis, School Of Statistics.

\bibitem[\protect\citeauthoryear{Geyer}{Geyer}{1994}]{geyer1994convergence}
Geyer, C.~J. (1994).
\newblock On the {C}onvergence of {M}onte {C}arlo {M}aximum {L}ikelihood {C}alculations.
\newblock {\em Journal of the Royal Statistical Society: Series B (Methodological)\/}~{\em 56\/}(1), \,261--274.

\bibitem[\protect\citeauthoryear{Gilks}{Gilks}{2005}]{gilks2005m}
Gilks, W.~R. (2005).
\newblock {M}arkov {C}hain {M}onte {C}arlo.
\newblock {\em Encyclopedia of {B}iostatistics\/}~{\em 4}.

\bibitem[\protect\citeauthoryear{Gyftodimos and Flach}{Gyftodimos and Flach}{2002}]{gyftodimos2002hierarchical}
Gyftodimos, E. and P.~A. Flach (2002).
\newblock Hierarchical {B}ayesian {N}etworks: {A} {P}robabilistic {R}easoning {M}odel for {S}tructured {D}omains.
\newblock In {\em Proceedings of the ICML-2002 Workshop on Development of Representations}, pp.\  \,23--30. Citeseer.

\bibitem[\protect\citeauthoryear{Hassner and Sklansky}{Hassner and Sklansky}{1981}]{hassner1981use}
Hassner, M. and J.~Sklansky (1981).
\newblock The {U}se of {M}arkov {R}andom {F}ields as {M}odels of {T}exture.
\newblock In {\em Image Modeling}, pp.\  \,185--198. Elsevier.

\bibitem[\protect\citeauthoryear{Hastie, Tibshirani, and Wainwright}{Hastie et~al.}{2015}]{hastie2015statistical}
Hastie, T., R.~Tibshirani, and M.~Wainwright (2015).
\newblock {\em Statistical learning with sparsity: the lasso and generalizations}.
\newblock CRC press.

\bibitem[\protect\citeauthoryear{Hastings}{Hastings}{1970}]{hastings1970monte}
Hastings, W. (1970).
\newblock Monte carlo sampling methods using markov chains and their applications.
\newblock {\em Biometrika\/}, 97--109.

\bibitem[\protect\citeauthoryear{Hiriart-Urruty and Lemar{\'e}chal}{Hiriart-Urruty and Lemar{\'e}chal}{2013}]{hiriart2013convex}
Hiriart-Urruty, J.-B. and C.~Lemar{\'e}chal (2013).
\newblock {\em Convex {A}nalysis and {M}inimization {A}lgorithms I: {F}undamentals}, Volume 305.
\newblock Springer science \& business media.

\bibitem[\protect\citeauthoryear{Huang and Janson}{Huang and Janson}{2020}]{huang2020relaxing}
Huang, D. and L.~Janson (2020).
\newblock Relaxing the assumptions of knockoffs by conditioning.
\newblock {\em The Annals of Statistics\/}~{\em 48\/}(5), 3021--3042.

\bibitem[\protect\citeauthoryear{Ising}{Ising}{1925}]{ising1925beitrag}
Ising, E. (1925).
\newblock Beitrag zur theorie des ferromagnetismus.
\newblock {\em Zeitschrift f{\"u}r Physik\/}~{\em 31\/}(1), \,253--258.

\bibitem[\protect\citeauthoryear{Jalali, Johnson, and Ravikumar}{Jalali et~al.}{2011}]{jalali2011learning}
Jalali, A., C.~C. Johnson, and P.~K. Ravikumar (2011).
\newblock On {L}earning {D}iscrete {G}raphical {M}odels using {G}reedy {M}ethods.
\newblock In {\em Advances in Neural Information Processing Systems}, pp.\  \,1935--1943.

\bibitem[\protect\citeauthoryear{Jiang, Sun, and Fan}{Jiang et~al.}{2018}]{jiang2018bernstein}
Jiang, B., Q.~Sun, and J.~Fan (2018).
\newblock Bernstein's inequality for general markov chains.
\newblock {\em arXiv preprint arXiv:1805.10721\/}.

\bibitem[\protect\citeauthoryear{Kindermann and Snell}{Kindermann and Snell}{1980}]{kindermann1980markov}
Kindermann, R. and J.~L. Snell (1980).
\newblock {\em Markov random fields and their applications}, Volume~1.
\newblock American Mathematical Society.

\bibitem[\protect\citeauthoryear{Kurtz, M{\"u}ller, Miraldi, Littman, Blaser, and Bonneau}{Kurtz et~al.}{2015}]{kurtz2015sparse}
Kurtz, Z.~D., C.~L. M{\"u}ller, E.~R. Miraldi, D.~R. Littman, M.~J. Blaser, and R.~A. Bonneau (2015).
\newblock Sparse and compositionally robust inference of microbial ecological networks.
\newblock {\em PLoS Computational Biology\/}~{\em 11\/}(5), e1004226.

\bibitem[\protect\citeauthoryear{Manning, Manning, and Sch{\"u}tze}{Manning et~al.}{1999}]{manning1999foundations}
Manning, C.~D., C.~D. Manning, and H.~Sch{\"u}tze (1999).
\newblock {\em Foundations of {S}tatistical {N}atural {L}anguage {P}rocessing}.
\newblock MIT press.

\bibitem[\protect\citeauthoryear{Meier, van~de Geer, and B{\"u}hlmann}{Meier et~al.}{2009}]{meier2009high}
Meier, L., S.~van~de Geer, and P.~B{\"u}hlmann (2009).
\newblock High-dimensional additive modeling.
\newblock {\em The Annals of Statistics\/}~{\em 37\/}(6B), 3779--3821.

\bibitem[\protect\citeauthoryear{Meinshausen and B{\"u}hlmann}{Meinshausen and B{\"u}hlmann}{2006}]{meinshausen2006high}
Meinshausen, N. and P.~B{\"u}hlmann (2006).
\newblock High-dimensional graphs and variable selection with the lasso.
\newblock {\em The Annals of Statistics\/}~{\em 34\/}(3), 1436--1462.

\bibitem[\protect\citeauthoryear{Miasojedow and Rejchel}{Miasojedow and Rejchel}{2018}]{miasojedow2018sparse}
Miasojedow, B. and W.~Rejchel (2018).
\newblock Sparse {E}stimation in {I}sing {M}odel via {P}enalized {M}onte {C}arlo {M}ethods.
\newblock {\em The Journal of Machine Learning Research\/}~{\em 19\/}(1), \,2979--3004.

\bibitem[\protect\citeauthoryear{Nielsen and Nock}{Nielsen and Nock}{2009}]{nielsen2009sided}
Nielsen, F. and R.~Nock (2009).
\newblock Sided and {S}ymmetrized {B}regman {C}entroids.
\newblock {\em IEEE Transactions on Information Theory\/}~{\em 55\/}(6), \,2882--2904.

\bibitem[\protect\citeauthoryear{Ning and Liu}{Ning and Liu}{2017}]{ning2017general}
Ning, Y. and H.~Liu (2017).
\newblock A {G}eneral {T}heory of {H}ypothesis {T}ests and {C}onfidence {R}egions for {S}parse {H}igh {D}imensional {M}odels.
\newblock {\em The Annals of Statistics\/}~{\em 45\/}(1), \,158--195.

\bibitem[\protect\citeauthoryear{Raskutti, Wainwright, and Yu}{Raskutti et~al.}{2010}]{raskutti2010restricted}
Raskutti, G., M.~J. Wainwright, and B.~Yu (2010).
\newblock Restricted eigenvalue properties for correlated gaussian designs.
\newblock {\em The Journal of Machine Learning Research\/}~{\em 11}, 2241--2259.

\bibitem[\protect\citeauthoryear{Ripley}{Ripley}{1984}]{ripley1984spatial}
Ripley, B.~D. (1984).
\newblock Spatial {S}tatistics: {D}evelopments 1980-3, {C}orrespondent {P}aper.
\newblock {\em International Statistical Review/Revue Internationale de Statistique\/}, \,141--150.

\bibitem[\protect\citeauthoryear{Robert and Casella}{Robert and Casella}{2013}]{robert2013monte}
Robert, C. and G.~Casella (2013).
\newblock {\em Monte {C}arlo {S}tatistical {M}ethods}.
\newblock Springer Science \& Business Media.

\bibitem[\protect\citeauthoryear{Romano, Sesia, and Cand{\`e}s}{Romano et~al.}{2020}]{romano2020deep}
Romano, Y., M.~Sesia, and E.~Cand{\`e}s (2020).
\newblock Deep knockoffs.
\newblock {\em Journal of the American Statistical Association\/}~{\em 115\/}(532), 1861--1872.

\bibitem[\protect\citeauthoryear{Rue and Held}{Rue and Held}{2005}]{rue2005gaussian}
Rue, H. and L.~Held (2005).
\newblock {\em Gaussian Markov random fields: theory and applications}.
\newblock CRC press.

\bibitem[\protect\citeauthoryear{Shi, Song, Lu, and Li}{Shi et~al.}{2021}]{shi2021statistical}
Shi, C., R.~Song, W.~Lu, and R.~Li (2021).
\newblock Statistical inference for high-dimensional models via recursive online-score estimation.
\newblock {\em Journal of the American Statistical Association\/}~{\em 116\/}(535), 1307--1318.

\bibitem[\protect\citeauthoryear{Speed and Kiiveri}{Speed and Kiiveri}{1986}]{speed1986gaussian}
Speed, T.~P. and H.~T. Kiiveri (1986).
\newblock Gaussian {M}arkov {D}istributions over {F}inite {G}raphs.
\newblock {\em The Annals of Statistics\/}, \,138--150.

\bibitem[\protect\citeauthoryear{Storey}{Storey}{2002}]{storey2002direct}
Storey, J.~D. (2002).
\newblock A direct approach to false discovery rates.
\newblock {\em Journal of the Royal Statistical Society: Series B (Methodological)\/}~{\em 64\/}(3), 479--498.

\bibitem[\protect\citeauthoryear{Tibshirani}{Tibshirani}{1996}]{tibshirani1996regression}
Tibshirani, R. (1996).
\newblock Regression {S}hrinkage and {S}election via the {L}asso.
\newblock {\em Journal of the Royal Statistical Society: Series B (Methodological)\/}~{\em 58\/}(1), \,267--288.

\bibitem[\protect\citeauthoryear{van~de Geer}{van~de Geer}{2008}]{van2008high}
van~de Geer, S.~A. (2008).
\newblock High-dimensional {G}eneralized {L}inear {M}odels and the {L}asso.
\newblock {\em The Annals of Statistics\/}~{\em 36\/}(2), \,614--645.

\bibitem[\protect\citeauthoryear{van~de Geer and B{\"u}hlmann}{van~de Geer and B{\"u}hlmann}{2009}]{van2009conditions}
van~de Geer, S.~A. and P.~B{\"u}hlmann (2009).
\newblock On the conditions used to prove oracle results for the lasso.
\newblock {\em Electronic Journal of Statistics\/}~{\em 3}, 1360--1392.

\bibitem[\protect\citeauthoryear{Van~der Vaart}{Van~der Vaart}{2000}]{van2000asymptotic}
Van~der Vaart, A.~W. (2000).
\newblock {\em Asymptotic statistics}, Volume~3.
\newblock Cambridge university press.

\bibitem[\protect\citeauthoryear{Vovk and Wang}{Vovk and Wang}{2019}]{vovk2019true}
Vovk, V. and R.~Wang (2019).
\newblock True and false discoveries with e-values.
\newblock {\em arXiv preprint arXiv:1912.13292\/}~{\em 54}.

\bibitem[\protect\citeauthoryear{Vovk and Wang}{Vovk and Wang}{2021}]{vovk2021values}
Vovk, V. and R.~Wang (2021).
\newblock E-values: Calibration, combination and applications.
\newblock {\em The Annals of Statistics\/}~{\em 49\/}(3), 1736--1754.

\bibitem[\protect\citeauthoryear{Wainwright and Jordan}{Wainwright and Jordan}{2008}]{wainwright2008graphical}
Wainwright, M.~J. and M.~I. Jordan (2008).
\newblock {\em Graphical {M}odels, {E}xponential {F}amilies, and {V}ariational {I}nference}.
\newblock Now Publishers Inc.

\bibitem[\protect\citeauthoryear{Wang and Ramdas}{Wang and Ramdas}{2022a}]{2020False}
Wang, R. and A.~Ramdas (2022a).
\newblock False discovery rate control with e-values.
\newblock {\em Journal of the Royal Statistical Society: Series B (Methodological)\/}~{\em 84\/}(3), 822--852.

\bibitem[\protect\citeauthoryear{Wang and Ramdas}{Wang and Ramdas}{2022b}]{wang2020false}
Wang, R. and A.~Ramdas (2022b).
\newblock False discovery rate control with e-values.
\newblock {\em Journal of the Royal Statistical Society: Series B (Methodological)\/}~{\em 84\/}(3), 822--852.

\bibitem[\protect\citeauthoryear{Wermuth and Lauritzen}{Wermuth and Lauritzen}{1982}]{wermuth1982graphical}
Wermuth, N. and S.~L. Lauritzen (1982).
\newblock {\em Graphical and {R}ecursive {M}odels for {C}ontigency {T}ables}.
\newblock Institut for Elektroniske Systemer, Aalborg Universitetscenter.

\bibitem[\protect\citeauthoryear{Woods}{Woods}{1978}]{woods1978markov}
Woods, J. (1978).
\newblock Markov {I}mage {M}odeling.
\newblock {\em IEEE Transactions on Automatic Control\/}~{\em 23\/}(5), \,846--850.

\bibitem[\protect\citeauthoryear{Yu}{Yu}{2010}]{yu2010high}
Yu, Y. (2010).
\newblock High-dimensional {V}ariable {S}election in {C}ox {M}odel with {G}eneralized {L}asso-type {C}onvex {P}enalty.

\bibitem[\protect\citeauthoryear{Zhang and Chen}{Zhang and Chen}{2021}]{zhang2020concentration}
Zhang, H. and S.~X. Chen (2021).
\newblock Concentration inequalities for statistical inference.
\newblock {\em Communications in Mathematical Research\/}~{\em 37\/}(1), 1--85.

\bibitem[\protect\citeauthoryear{{Zhang} and {Jia}}{{Zhang} and {Jia}}{2022}]{zhang2017elastic}
{Zhang}, H. and J.~{Jia} (2022).
\newblock Elastic-net regularized high-dimensional negative binomial regression: Consistency and weak signals detection.
\newblock {\em Statistica Sinica\/}~{\em 32}, 181--207.

\bibitem[\protect\citeauthoryear{Zou and Hastie}{Zou and Hastie}{2005}]{zou2005regularization}
Zou, H. and T.~Hastie (2005).
\newblock Regularization and {V}ariable {S}election via the {E}lastic {N}et.
\newblock {\em Journal of the Royal Statistical Society: Series B (Methodological)\/}~{\em 67\/}(2), \,301--320.

\end{thebibliography}
%% or include bibliography directly:
%%%%%%%%%%%%%%%%%%%%%%%%%%%%%%%%%%%%%%%%%%%%%%
\newpage
\begin{appendix}
%\centerline{\Large\bf Appendix of Inference and FDR Control }
% \vspace{2pt} 
% \centerline{\Large\bf for Simulated Markov Random
% Fields in High-dimension}
% \vspace{.3cm}
% \centerline{Haoyu Wei$^1$, Xiaoyu Lei$^2$, Huiming Zhang$^{3,4}$} % \vspace{.3cm}
% \centerline{\it 1. Department of Economics, University of California San Diego, La Jolla, USA.}
% \centerline{\it 2. Department of Statistics, University of Wisconsin–Madison. Madison, USA.}
% \centerline{\it 3. Institute of Artificial Intelligence, Beihang University, Beijing, China}
% \centerline{\it 4. Zhuhai UM Science \& Technology Research Institute, Zhuhai, China}

\section*{Appendix}

%\sout{\subsection*{Concentration Inequalities for Markov Chain}}{\color{red}[Could we replace the subsection with Section A? ]}
\section{Concentration Inequalities for Markov Chain}\label{app_ConIneq_MC}
%{\color{red}[Does this part is independent with Section B?]}
Concentration inequalities provide essential non-asymptotic error bounds for sums of random variables, crucial for high-dimensional statistical inference. They are notably used in analyzing how closely these sums approximate their expected values, as demonstrated in applications like Oracle inequalities in linear models and testing in high-dimensional regression models \cite{bickel2009simultaneous,ning2017general,zhang2020concentration}.

In MCMC-MLE, the Markov chain's trajectory average approximates the normalizing constant $C(\vthe)$. To evaluate how this average aligns with $C(\vthe)$, we require a concentration inequality for the Markov chain. Our setup involves a Markov chain $\{\Y_{i},i\ge 1\}$ on state space ${\mathcal{X}}$, with transition kernel $P(\y,\x)$, initial distribution $q(\Y)$, and stationary distribution $h(\Y)$. We assume the chain is irreducible and aperiodic, typical in MCMC, ensuring a unique stationary distribution and ergodicity. Key quantities are defined as
\begin{equation}\label{2-13}
    M_{1}:=\operatorname{E}_{\Y\sim h}\left( \frac{p(\Y|\vthe^{*})}{h(\Y)} \right)^{2}=\frac{1}{C^{2}(\theta^{*})}\operatorname{E}_{\Y\sim h}\left( \frac{e^{\vthe^{*\top}\varphi(\Y)}}{h(\Y)} \right)^{2} \quad\text{and}\quad M_{2}:=\max_{\Y\in\bm{\mathcal{X}}}\frac{p(\Y|\vthe^{*})}{h(\Y)},
\end{equation}
where $M_{1}$ and $M_{2}$ gauge the deviation between the stationary distribution $h(\y)$ and the true distribution $p(\y \mid \vthe^{*})$.

Denote the Hilbert space $L^{2}(h)$ with the inner product $\langle f,g \rangle = \sum_{\y\in\bm{\mathcal{X}}}f(\y)g(\y)h(\y)$. A linear operator $P$, connected to the transition kernel $P(\y, \x)$, is defined as $(Pf)(\y):=\sum_{\x\in {\mathcal{X}}}f(\x)P(\y, \x)$.
The spectral gap of the Markov chain, denoted as $1-\kappa$, is characterized by $\kappa=\sup\{|\rho|:\rho\in  \operatorname{Spec} (P)\}$,
where $\operatorname{Spec}(P)$ represents the spectrum of $P$. We define $\beta_{1}$ and $\beta_{2}$ to describe the convergence rate of a trajectory's average to its expectation:
\begin{equation}\label{2-16}
    \beta_{1}:=\frac{1+\kappa}{1-\kappa} \quad\text{and}\quad \beta_{2}:=
    \begin{cases}
    \frac{1}{3}, & \text{if}\quad\kappa=0 \\
    \frac{5}{1-\kappa}, & \text{if}\quad\kappa\in(0,1)
    \end{cases}.
\end{equation}
These definitions set the stage for a crucial concentration inequality for Markov chains, pivotal in the proof of this article. 
\begin{lemma}[Theorem 1 in \cite{jiang2018bernstein}]\label{2-17}
Let $\{ \Y_{1},\cdots,\Y_{m} \}$ be a stationary Markov chain with invariant distribution $h(\Y)$ and non-zero absolute spectral gap $1-\kappa>0$. Suppose $g_{i}:\bm{\mathcal{X}}\longrightarrow [-M,+M]$ is a sequence of functions with $\operatorname{E}_{\Y\sim h}g_{i}(\Y)=0$. Let $\sigma^{2}=\frac{1}{m}\sum_{i=1}^{m}\operatorname{E}_{\Y\sim h}g_{i}^{2}(Y)$. Then for any $t>0$, we have
\begin{equation*}
    \pr \left( \frac{1}{m}\sum_{i=1}^{m}g_{i}(\Y_{i}) \ge t \right)\le \exp\left( -\frac{mt^{2}}{2(\beta_{1}\sigma^{2}+\beta_{2}Mt)} \right).
\end{equation*}
%\sout{where $\beta_{1}$ and $\beta_{2}$ are defined in \eqref{2-16}.}
\end{lemma}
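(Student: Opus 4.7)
The plan is to follow the classical spectral / Cramér--Chernoff approach to Bernstein-type inequalities for Markov chains, originating with Lezaud and sharpened by Paulin and Jiang. The architecture has three ingredients: (i) a Chernoff bound reducing the tail probability to a moment generating function (MGF), (ii) a spectral estimate on that MGF via tilted transition operators, and (iii) optimization of the free Chernoff parameter to obtain the Bernstein form.

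The starting point is the exponential Markov inequality: for every $\lambda > 0$,
\begin{equation*}
\pr\!\left(\frac{1}{m}\sum_{i=1}^{m} g_i(\Y_i) \geq t\right) \leq e^{-\lambda m t}\, \E\!\left[\exp\!\left(\lambda \sum_{i=1}^{m} g_i(\Y_i)\right)\right],
\end{equation*}
so the whole game reduces to controlling the MGF on the right. I would then pass to the Hilbert space $L^2(h)$: let $P$ denote the Markov transition operator and $M_i$ the operator of pointwise multiplication by $e^{\lambda g_i / 2}$. A direct chain-rule expansion yields
\begin{equation*}
\E\!\left[\exp\!\left(\lambda \sum_{i=1}^m g_i(\Y_i)\right)\right] = \langle \mathbf{1}, M_1 P M_2 P \cdots P M_m \mathbf{1}\rangle_{L^2(h)},
\end{equation*}
so the MGF is dominated by the $L^2(h)$-operator norms of the tilted operators $M_i P M_i$.

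The spectral gap enters at this stage. Writing $P$ as the rank-one stationary projector $\Pi_h f := (\E_h f)\, \mathbf{1}$ plus a residual $R$ with $\|R\|_{L^2(h)} \leq \kappa$ separates a ``mean'' piece from a geometrically contractive remainder. Expanding $M_i = I + (\lambda / 2) g_i + O(\lambda^2 g_i^2)$ and using $\E_h g_i = 0$ with $|g_i| \leq M$, one obtains a per-step bound on $\log \|M_i P M_i\|$ whose leading term is proportional to $\lambda^2 \E_h g_i^2 / (1-\kappa)$, the factor $(1-\kappa)^{-1}$ arising from summing the geometric series of corrections induced by $R$ --- this is the source of both $\beta_1$ and $\beta_2$ in \eqref{2-16}. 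Summing over $i$ and substituting $\sigma^2 = m^{-1} \sum_i \E_h g_i^2$ gives an MGF bound of the shape $\log \E[e^{\lambda \sum g_i(\Y_i)}] \leq m\lambda^2 \beta_1 \sigma^2/2 + m\lambda^3 \beta_2 M / [2(1-\lambda M)]$, valid for $\lambda M < 1$. Plugging this into the Chernoff bound and applying the Bernstein trick $\lambda = t / (\beta_1 \sigma^2 + \beta_2 M t)$ produces, after elementary algebra, exactly the stated tail.

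The main obstacle is the careful spectral bookkeeping needed to extract the precise constants $\beta_1 = (1+\kappa)/(1-\kappa)$ and $\beta_2 = 5/(1-\kappa)$, and to accommodate the non-reversible case where $P$ is not self-adjoint on $L^2(h)$. The standard remedy is to pass to the multiplicative reversibilization $P^* P$, whose spectral gap coincides with the absolute spectral gap $1-\kappa$ of $P$, and then apply Kato-type perturbation inequalities to the top eigenvalue of the symmetrized tilted operator. Tightening the combinatorial accounting of the Taylor remainders of $e^{\lambda g_i}$ while dominating higher-order terms by a geometric series in $\lambda M$ is what ultimately yields the precise constants --- the factor $5$ in $\beta_2$ in particular reflects a sharp tail-summation bound (cf.\ Theorem 1 of Jiang 2018).
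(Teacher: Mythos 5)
This lemma is not proved in the paper at all: it is imported verbatim as Theorem~1 of \cite{jiang2018bernstein}, so there is no internal proof to compare against. Your outline does follow the route by which such results are actually established in that literature (Chernoff bound, operator-norm control of the tilted transition operators in $L^2(h)$, separation of the stationary projector from a residual of norm $\kappa$, then optimization of the Chernoff parameter), so the architecture is sound and matches the cited source in spirit.

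As a standalone proof, however, it has a genuine gap: the entire content of the lemma is the specific pair of constants $\beta_1=(1+\kappa)/(1-\kappa)$ and $\beta_2$ (equal to $1/3$ when $\kappa=0$ and $5/(1-\kappa)$ when $\kappa\in(0,1)$), and these are precisely what you never derive. You assert that the leading term of $\log\|M_iPM_i\|$ is proportional to $\lambda^2\E_h g_i^2/(1-\kappa)$ and that "tightening the combinatorial accounting" yields the factor $5$, but you then point back to Theorem~1 of Jiang (2018) for that step, which is circular since that is the statement being proved. Two further concrete problems: (i) your MGF identity takes $M_i$ to be multiplication by $e^{\lambda g_i/2}$, so $\langle\mathbf 1, M_1PM_2\cdots PM_m\mathbf 1\rangle$ computes the MGF at $\lambda/2$ rather than $\lambda$; the correct identity uses the full multipliers $e^{\lambda g_i}$, after which one symmetrizes and regroups to reach the operators $M_iPM_i$ — a fixable but real bookkeeping error whose resolution affects the constants you are trying to pin down; (ii) the case $\kappa=0$, where $\beta_2=1/3$ and the bound must degenerate to the classical Bernstein inequality for independent variables, is never addressed and does not follow from a bound of the form $5/(1-\kappa)$. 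Until the per-step eigenvalue perturbation estimate is carried out explicitly and the two branches of $\beta_2$ are obtained, the proposal is a correct plan rather than a proof.
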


\section{$\ell_{1}$-Consistency}\label{sec-3}

In this part, we will address the oracle inequality and demonstrate the $\ell_{1}$-consistency of MCMC-MLE penalized by Elastic-net under certain regularity conditions.

%Recall that $s$ is the number of non-zero elements in the true parameter $\vthe^*$.{\color{red}[It would be better if you define $s$ in Section 3]} 
To study the Elastic-net penalty, consider a more generalized framework with the concept of symmetric Bregman divergence
\[
    D^{s}_{g}(\widehat{\vthe},\vthe):=(\widehat{\vthe}-\vthe)^{\top} \Big[ \nabla \mathcal{L}(\widehat{\vthe})-\nabla \mathcal{L}(\vthe)+\lambda_{2}\big(\nabla g(\widehat{\vthe})-\nabla g(\vthe) \big) \Big],
\]
where $\widehat{\vthe}(\lambda_{1},\lambda_{2})=\mathop{\arg\min}_{\vthe}\mathcal{L}(\vthe)+\lambda_{1}\left\|\vthe\right\|+\lambda_{2}g(\vthe)$ is the generalized Lasso-type convex penalty (GLCP). In our case, we have $g(\cdot) = \| \cdot \|_2$. The following lemma provides an upper bound for the symmetric Bregman divergence \cite{nielsen2009sided,yu2010high,zhang2017elastic}.

\begin{lemma}\label{2-11}
    For GLCP maximum likelihood estimation, we have for any $T \subseteq [p]$
    \[
        D^{s}_{g}(\widehat{\vthe},\vthe^{*})\le (\lambda_{1}+z^{*})\|\widehat{\vthe}_{T}-\vthe^{*}_{T}\|_{1} - (\lambda_{1}-z^{*})\|\widehat{\vthe}_{T^{c}} - {\vthe}_{T^{c}}^*\|_{1},
    \]
    where  $z^{*}:=\left\|\nabla \mathcal{L}(\vthe^{*})+\lambda_{2}\nabla g(\vthe^{*})\right\|_{\infty}$.
\end{lemma}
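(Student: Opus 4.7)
The starting point is the first-order (KKT) optimality condition for the penalized program defining $\widehat{\vthe}$. Since $\mathcal{L}$ and $g$ are differentiable while $\|\cdot\|_1$ is only subdifferentiable, there exists a subgradient $\widehat{\bm s}\in\partial\|\widehat{\vthe}\|_1$ with $\|\widehat{\bm s}\|_\infty\le 1$ and $\widehat{\bm s}^\top\widehat{\vthe}=\|\widehat{\vthe}\|_1$, satisfying $\nabla\mathcal{L}(\widehat{\vthe})+\lambda_1\widehat{\bm s}+\lambda_2\nabla g(\widehat{\vthe})=\bm 0$. Substituting $\nabla\mathcal{L}(\widehat{\vthe})+\lambda_2\nabla g(\widehat{\vthe})=-\lambda_1\widehat{\bm s}$ directly into the definition of the symmetric Bregman divergence rewrites it as
\[
D^s_g(\widehat{\vthe},\vthe^*)=-\lambda_1\widehat{\bm s}^\top(\widehat{\vthe}-\vthe^*)-(\widehat{\vthe}-\vthe^*)^\top\bigl[\nabla\mathcal{L}(\vthe^*)+\lambda_2\nabla g(\vthe^*)\bigr].
\]

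I would then bound the two terms separately. The second one is handled by H\"older's inequality together with the definition of $z^*$, yielding $-(\widehat{\vthe}-\vthe^*)^\top[\nabla\mathcal{L}(\vthe^*)+\lambda_2\nabla g(\vthe^*)]\le z^*\|\widehat{\vthe}-\vthe^*\|_1$, which decomposes along $T$ and $T^c$ as $z^*\|\widehat{\vthe}_T-\vthe^*_T\|_1+z^*\|\widehat{\vthe}_{T^c}-\vthe^*_{T^c}\|_1$. For the first, I combine $\widehat{\bm s}^\top\widehat{\vthe}=\|\widehat{\vthe}\|_1$ with $\widehat{\bm s}^\top\vthe^*\le\|\vthe^*\|_1$ (the latter from $\|\widehat{\bm s}\|_\infty\le 1$), obtaining $-\lambda_1\widehat{\bm s}^\top(\widehat{\vthe}-\vthe^*)\le \lambda_1(\|\vthe^*\|_1-\|\widehat{\vthe}\|_1)$.

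The final step is to decompose $\|\vthe^*\|_1-\|\widehat{\vthe}\|_1$ along $T$ and $T^c$. Taking $T$ to contain the support of $\vthe^*$ (the usual oracle-set convention), $\vthe^*_{T^c}=\bm 0$; the reverse triangle inequality then yields $\|\vthe^*_T\|_1-\|\widehat{\vthe}_T\|_1\le\|\widehat{\vthe}_T-\vthe^*_T\|_1$ and $-\|\widehat{\vthe}_{T^c}\|_1=-\|\widehat{\vthe}_{T^c}-\vthe^*_{T^c}\|_1$. Collecting the four contributions produces exactly the stated bound with the asymmetric coefficients $(\lambda_1+z^*)$ on the $T$ part and $-(\lambda_1-z^*)$ on the $T^c$ part. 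The main obstacle is really just careful book-keeping around the $\ell_1$ subgradient: one must invoke $\widehat{\bm s}^\top\widehat{\vthe}=\|\widehat{\vthe}\|_1$ globally before splitting along $T$, not componentwise, otherwise the asymmetric signs $\pm(\lambda_1\mp z^*)$ will not emerge cleanly. A minor subtlety is that for a truly arbitrary $T$ not containing $\mathrm{supp}(\vthe^*)$ an extra additive term $2\lambda_1\|\vthe^*_{T^c}\|_1$ would appear, so the lemma is most naturally read under the convention $T\supseteq\mathrm{supp}(\vthe^*)$ standard in Lasso-type oracle inequalities.
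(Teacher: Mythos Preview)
Your argument is correct and is the standard KKT-based route for this type of oracle bound. The paper does not actually supply its own proof of this lemma; it is stated with citations to \cite{nielsen2009sided,yu2010high,zhang2017elastic}, and the argument in those references is precisely the one you give: write the optimality condition $\nabla\mathcal{L}(\widehat{\vthe})+\lambda_2\nabla g(\widehat{\vthe})=-\lambda_1\widehat{\bm s}$ with $\widehat{\bm s}\in\partial\|\widehat{\vthe}\|_1$, substitute into $D^s_g$, apply H\"older on the $\vthe^*$ term, and use $\widehat{\bm s}^\top\widehat{\vthe}=\|\widehat{\vthe}\|_1$ together with the triangle inequality to split over $T$ and $T^c$.

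Your closing remark about the ``for any $T$'' phrasing is also on point. The inequality as written genuinely requires $\vthe^*_{T^c}=\bm 0$; otherwise an additive $2\lambda_1\|\vthe^*_{T^c}\|_1$ survives, and simple one-dimensional examples show the stated bound can fail when $T\not\supseteq\mathrm{supp}(\vthe^*)$. The paper's subsequent development (the cone-set argument after \eqref{3-2}, Lemma~\ref{3-8}, and Theorem~\ref{thm-1}) always takes $T$ with $|T|=s=\|\vthe^*\|_0$, i.e.\ the active set of $\vthe^*$, so the statement should be read under that convention.
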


We define the cone set, $\mathcal{C}(\zeta,T) := \big\{\vthe:\|\vthe_{T^{c}}\|_{1}\le \zeta\|\vthe_{T}\|_{1} \big\}$ with parameter $\zeta$ and index set $T$.
If constant $\zeta > 1$ satifies that $z^{*} \le \frac{\zeta - 1}{\zeta + 1}\lambda_{1}$, we have $\lambda_{1} - z^{*} \ge \frac{2}{\zeta + 1}\lambda_{1}$ and $\lambda_{1} + z^{*} \le \frac{2\zeta}{\zeta + 1}\lambda_{1}$. Hence,
\begin{equation}\label{3-2}
    \frac{2}{\zeta+1}\lambda_{1}\|\widehat{\vthe}_{T^{c}} - {\vthe}_{T^{c}}^*\|_{1}\le D^{s}_{g}(\widehat{\vthe},\vthe^{*})+\frac{2}{\zeta+1}\lambda_{1}\|\widehat{\vthe}_{T^{c}} - {\vthe}_{T^{c}}^*\|_{1}\le
    \frac{2\zeta}{\zeta+1}\lambda_{1}\|\widehat{\vthe}_{T} - {\vthe}_{T}^*\|_{1},
\end{equation}
which implies that $\widehat{\vthe} - \vthe^*$ is within the cone $\mathcal{C}(\zeta,T)$ on the event $\Omega_{1} := {z^{*} \le \frac{\zeta - 1}{\zeta + 1}\lambda_{1}}$ \cite{van2008high,meier2009high,buhlmann2013statistical}. Another essential aspect for deriving $\ell_{1}$-consistency is the compatibility factor \cite{bickel2009simultaneous,van2009conditions,raskutti2010restricted}
\begin{equation}\label{3-3}
F(\zeta,T,\Sigma):=\inf_{\bm{0}\neq\vthe\in\mathcal{C}(\zeta,T)}\frac{s(\vthe^{\top}\Sigma\vthe)}{\|\vthe_{T}\|_{1}^{2}},
\end{equation}
where $\Sigma$ is a $p \times p$ non-negative-definite matrix. Denote 
%$\Sigma(\vthe)=\cov_{\vthe} \varphi(\X)$, 
$H^{*}:= \cov_{\vthe^{*}}(\varphi(\X))$ as the variance of $\varphi(\X)$ under the true parameter $\vthe^*$, $\overline{F}(\zeta,T) := F\big(\zeta,T,\nabla^{2}\mathcal{L}_{n}^{m}(\vthe^{*}) \big)$, and $F(\zeta,T):=F(\zeta,T,H^{*})$.
%{\color{red}[Where do you use $\Sigma(\vthe)$]?} 
To demonstrate the desired consistency, we need the following technical assumptions.

\begin{ass}\label{3-4}
     %The $\ell_{\infty}$-norm of the covariance matrix $H^{*}$ is bounded: $\|H^{*}\|_{\infty}\le K$ for a constant $K > 0$.
     The link function $\varphi(\cdot)$  is continuous and bounded, with the existence of a constant $K > 0$ such that $\| \varphi(\y) \|_{\infty}\leq K$ for any $\y \in \mathcal{X}$.
\end{ass}
\begin{ass}\label{3-5}
    $F(\zeta,T,H^{*})$ has a uniform positive lower bound such that 
    $
        \inf_{T \subseteq [p]} \inf_{\zeta > 1} F(\zeta,T,H^{*})\ge C_{\min}
    $
    for some $C_{\min}>0$.
\end{ass}
\begin{ass}\label{3-6}
    The $\ell_{\infty}$-norm of $\vthe^{*}$ is bounded: $\|\vthe^{*}\|_{\infty}\le B$ for some $B>0$.
\end{ass}

%\noindent Define
%\begin{equation}\label{3-7}
% \overline{F}(\zeta,T) := F\big(\zeta,T,\nabla^{2}\mathcal{L}_{n}^{m}(\vthe^{*}) \big) \qquad \text{ and }\qquad F(\zeta,T):=F(\zeta,T,H^{*}).
%\end{equation}

Lemma \ref{3-8} controls $\|\widecheck{\vthe}\|_{1} = \| \widehat{\vthe}-\vthe^{*} \|$ by $\overline{F}(\zeta,T)$.
\begin{lemma}\label{3-8}
    Suppose Assumption \ref{3-4} hold. Define the event $
        \mathcal{P}=\left\{\frac{(\zeta+1)s\lambda_{1}}{2\overline{F}(\zeta,T)}\le\frac{1}{4K \e} \right\}.$
    On the event $\mathcal{P} \, \cap \, \Omega_{1}$, it holds that
    \begin{equation}\label{3-8-1}
        \| \widehat{\vthe}-\vthe^{*} \|\le \frac{\e(\zeta+1)s\lambda_{1}}{2\overline{F}(\zeta,T)}.
    \end{equation}
\end{lemma}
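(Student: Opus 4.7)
The plan is to establish \eqref{3-8-1} via a self-bounding inequality for $u:=\|\widehat{\vthe}-\vthe^{*}\|_{1}$. Write $\widecheck{\vthe}:=\widehat{\vthe}-\vthe^{*}$. The argument fuses three ingredients: the symmetric Bregman upper bound from Lemma \ref{2-11}, a local self-concordance--type lower bound on $D_{g}^{s}$ coming from the exponential-family structure of the MCMC likelihood, and the compatibility factor $\overline{F}(\zeta,T)$.

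First, on the event $\Omega_{1}$ the derivation culminating in \eqref{3-2} already supplies two key facts: the cone inclusion $\widecheck{\vthe}\in \mathcal{C}(\zeta,T)$, which yields $u\le (\zeta+1)\|\widecheck{\vthe}_{T}\|_{1}$, and the upper bound $D_{g}^{s}(\widehat{\vthe},\vthe^{*})\le \tfrac{2\zeta}{\zeta+1}\lambda_{1}\|\widecheck{\vthe}_{T}\|_{1}$. The cone inclusion immediately activates the compatibility factor inequality $\widecheck{\vthe}^{\top}\nabla^{2}\mathcal{L}_{n}^{m}(\vthe^{*})\widecheck{\vthe}\ge \overline{F}(\zeta,T)\|\widecheck{\vthe}_{T}\|_{1}^{2}/s$ via definition \eqref{3-3} applied with $\Sigma=\nabla^{2}\mathcal{L}_{n}^{m}(\vthe^{*})$.

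The central technical step is a matching lower bound on $D_{g}^{s}$ in the same quadratic form. Since $g=\|\cdot\|_{2}^{2}$ is convex, the term $\lambda_{2}\widecheck{\vthe}^{\top}(\nabla g(\widehat{\vthe})-\nabla g(\vthe^{*}))$ is nonnegative and may be dropped. By Taylor's theorem,
\[
    D_{g}^{s}(\widehat{\vthe},\vthe^{*}) \ge \int_{0}^{1}\widecheck{\vthe}^{\top}\nabla^{2}\mathcal{L}_{n}^{m}(\vthe^{*}+t\widecheck{\vthe})\widecheck{\vthe}\,dt.
\]
By the closed form \eqref{2-20}, the integrand is the variance of $\widecheck{\vthe}^{\top}\varphi(\Y_{i})$ under the probability weights $\mu_{i}(\vthe):=w_{i}(\vthe)/\sum_{j}w_{j}(\vthe)$. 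Assumption \ref{3-4} gives $|t\widecheck{\vthe}^{\top}\varphi(\Y_{i})|\le Ktu$, whence the pointwise lower bound $\mu_{i}(\vthe^{*}+t\widecheck{\vthe})/\mu_{i}(\vthe^{*})\ge e^{-2Ktu}$; the variational identity $\var_{\mu}(f)=\min_{c}\E_{\mu}(f-c)^{2}$ then lifts this pointwise weight comparison to the Hessian comparison
\[
    \widecheck{\vthe}^{\top}\nabla^{2}\mathcal{L}_{n}^{m}(\vthe^{*}+t\widecheck{\vthe})\widecheck{\vthe} \ge e^{-2Ktu}\,\widecheck{\vthe}^{\top}\nabla^{2}\mathcal{L}_{n}^{m}(\vthe^{*})\widecheck{\vthe}.
\]
Taking the uniform lower bound in $t\in[0,1]$ and chaining with compatibility yields $D_{g}^{s}\ge e^{-2Ku}\overline{F}(\zeta,T)\|\widecheck{\vthe}_{T}\|_{1}^{2}/s$.

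Matching the upper and lower bounds on $D_{g}^{s}$, dividing by $\|\widecheck{\vthe}_{T}\|_{1}$, and multiplying through by $\zeta+1$ produces a self-referential inequality of the form $u\le C\,e^{2Ku}$ with $C$ of order $\frac{(\zeta+1)s\lambda_{1}}{\overline{F}(\zeta,T)}$. The main obstacle is converting this implicit bound into the explicit one \eqref{3-8-1}. The event $\mathcal{P}$ is calibrated precisely so that $C$ lies below the critical value $\frac{1}{2Ke}$ beyond which $u\mapsto Ce^{2Ku}-u$ never changes sign; a standard continuity/bootstrap argument (using that strict convexity of the penalized objective for $\lambda_{2}>0$ makes $u$ depend continuously on the problem data, so along a continuous deformation from a regime where $u$ is trivially small, $u$ cannot skip over the forbidden interval separating the two branches of the fixed-point curve $u=Ce^{2Ku}$ without violating the self-bound) forces $u$ onto the small branch, so that $u\le \tfrac{1}{2K}$. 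Substituting $e^{2Ku}\le e$ back into $u\le C\,e^{2Ku}$ then delivers \eqref{3-8-1}.
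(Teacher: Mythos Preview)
Your lower-bound machinery is sound and in fact slightly sharper than the paper's: the weight-ratio estimate $\mu_i(\vthe^*+t\widecheck{\vthe})/\mu_i(\vthe^*)\ge e^{-2Ktu}$ combined with the variational form of the variance gives the Hessian comparison with factor $e^{-2Ku}$, whereas the paper's direct algebraic manipulation of the weighted sums yields $e^{-4Kt}$. Both then chain with the compatibility factor and the Bregman upper bound from Lemma~\ref{2-11} to produce an implicit inequality of the type $ue^{-cKu}\le C$.

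The genuine gap is in the last step. Your self-bound $ue^{-2Ku}\le C$ is obtained only at the single point $u=\|\widecheck\vthe\|_1$, and you then invoke an unspecified ``continuous deformation from a regime where $u$ is trivially small'' to land on the small branch. This is not justified: the bound $ue^{-2Ku}\le C$ was derived using $\Omega_1$ and the compatibility factor $\overline F(\zeta,T)$ for the \emph{given} data, so along any deformation of the data both $C$ and the validity of $\Omega_1$ change, and there is no reason the self-bound persists. The paper avoids this entirely by a different device: it introduces the monotone function $g(t)=(\vthe^\dagger)^\top[\nabla\mathcal{L}_n^m(\vthe^*+t\vthe^\dagger)-\nabla\mathcal{L}_n^m(\vthe^*)]$ with $\vthe^\dagger=\widecheck\vthe/\|\widecheck\vthe\|_1$, and defines $\widetilde t$ as the largest $t$ for which $g(t)$ stays below the Bregman upper bound. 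Monotonicity gives $\|\widecheck\vthe\|_1\le\widetilde t$, and now the inequality $te^{-4Kt}\le C$ holds for \emph{every} $t\in(0,\widetilde t)$, not just at $u$. Since this interval is connected and contains arbitrarily small $t$, it must lie in the left component $[0,t_1]$ of $\{t:te^{-4Kt}\le C\}$ once $C<\tfrac{1}{4Ke}$ (i.e.\ on $\mathcal P$), forcing $\widetilde t\le\tfrac{1}{4K}$ without any bootstrap.

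A secondary issue is constants: your upper bound $D_g^s\le\tfrac{2\zeta}{\zeta+1}\lambda_1\|\widecheck\vthe_T\|_1$ together with $\|\widecheck\vthe_T\|_1\ge u/(\zeta+1)$ yields $C=\tfrac{2\zeta s\lambda_1}{\overline F(\zeta,T)}$, not $\tfrac{(\zeta+1)s\lambda_1}{2\overline F(\zeta,T)}$; the event $\mathcal P$ as stated does not place this $C$ below the critical value $\tfrac{1}{2Ke}$, and the final bound would not be exactly \eqref{3-8-1}. The paper recovers the tighter constant via the identity $\tfrac{2\zeta}{\zeta+1}\|\vthe_T^\dagger\|_1-\tfrac{2}{\zeta+1}\|\vthe_{T^c}^\dagger\|_1\le\tfrac{\zeta+1}{2}\|\vthe_T^\dagger\|_1^2$ (a perfect-square inequality using $\|\vthe^\dagger\|_1=1$), which your route does not access.
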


%Before presenting the main theorem on consistency, 
We need the following results of $\nabla\mathcal{L}_{n}^{m}(\vthe^{*})$ and $\nabla^{2}\mathcal{L}_{n}^{m}(\vthe^{*})$ to prove consistency.

%to prove two concentration inequalities for $\nabla\mathcal{L}_{n}^{m}(\vthe^{*})$ and $\nabla^{2}\mathcal{L}_{n}^{m}(\vthe^{*})$.

%which show $\nabla\mathcal{L}_{n}^{m}(\vthe^{*})$ and $\nabla^{2}\mathcal{L}_{n}^{m}(\vthe^{*})$ respectively concentrate around $\bm{0}$ and $H^{*}$.
\begin{lemma}\label{3-9}
     Suppose Assumption \ref{3-4} holds. For any $t \geq 0$, we have
    \begin{equation*}
        \pr \big(\|\nabla\mathcal{L}_{n}^{m}(\vthe^{*} \big)\|_{\infty}\le t)\ge 1-2p \e^{-\frac{n t^{2}}{8K^{2}}} - 2p \e^{-\frac{mt^{2}}{16K(8K\beta_{1}M_{1}+t\beta_{2}M_{2})}} - \e^{-\frac{m}{4(2\beta_{1}M_{1}+\beta_{2}M_{2})}},
    \end{equation*}
    and
    \begin{equation*}
        \pr \big(\|\nabla^{2}\mathcal{L}_{n}^{m}(\vthe^{*})-H^{*}\|_{\infty}\le t \big)\ge 1-2\exp\left\{-\frac{m}{4(2\beta_{1}M_{1}+\beta_{2}M_{2})}\right\}-2p^{2}\e^{-\frac{m t^{2}}{16 K^{2}( 8K^{2}\beta_{1}M_{1}+ t\beta_{2}M_{2} )}}-2p \e^{-\frac{mt^{2}}{64K^{3}(8K\beta_{1}M_{1}+t\beta_{2}M_{2})}}.
    \end{equation*}
\end{lemma}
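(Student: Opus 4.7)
I would prove both bounds by decomposing each quantity into an i.i.d.\ error term and an MCMC approximation error, then applying Hoeffding and the Markov-chain Bernstein inequality (Lemma \ref{2-17}), respectively.

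For the gradient bound, using $\E_{\vthe^*}\varphi(\X) = \nabla\log C(\vthe^*)$, I would write
\begin{equation*}
\nabla\mathcal{L}_n^m(\vthe^*) = \Bigl[\E\varphi(\X) - \frac{1}{n}\sum_{i=1}^n\varphi(\x_i)\Bigr] + \bigl[\overline{\varphi}(\vthe^*) - \E\varphi(\X)\bigr],
\end{equation*}
splitting the target tolerance $t$ equally between the two pieces. The first bracket is a centered average of coordinatewise-bounded i.i.d.\ vectors, so Hoeffding plus a union bound over $p$ coordinates yields the first exponential term $2p e^{-nt^2/(8K^2)}$. For the second bracket, I would introduce $v_i := p(\Y_i \mid \vthe^*)/h(\Y_i) = w_i(\vthe^*)/C(\vthe^*)$ (so that $\E_h v_i = 1$, $\E_h v_i^2 = M_1$, $v_i \le M_2$) and express
\begin{equation*}
\overline{\varphi}(\vthe^*) - \E\varphi(\X) = \frac{N_m}{D_m}, \quad N_m := \frac{1}{m}\sum_i v_i(\varphi(\Y_i) - \E\varphi(\X)), \quad D_m := \frac{1}{m}\sum_i v_i.
\end{equation*}
On the favorable event $\{D_m \ge 1/2\}$ one has $\|N_m/D_m\|_\infty \le 2\|N_m\|_\infty$. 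I would apply Lemma \ref{2-17} once to $v_i - 1$ at tolerance $1/2$ to secure this event, producing the $\exp(-m/[4(2\beta_1 M_1 + \beta_2 M_2)])$ term, and again coordinatewise to $v_i(\varphi_j(\Y_i) - \E\varphi_j(\X))$, which is $h$-centered, uniformly bounded by $2KM_2$, and variance-bounded by $4K^2 M_1$; a union bound over the $p$ coordinates yields the remaining $2p e^{-mt^2/(16K(\cdots))}$ contribution.

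For the Hessian bound, using $H^* = \E\varphi(\X)^{\otimes 2} - (\E\varphi(\X))^{\otimes 2}$ and the explicit form in (2-20), I would decompose
\begin{equation*}
\nabla^2\mathcal{L}_n^m(\vthe^*) - H^* = \Bigl[\frac{m^{-1}\sum_i v_i\,\varphi(\Y_i)^{\otimes 2}}{D_m} - \E\varphi(\X)^{\otimes 2}\Bigr] - \bigl[\overline{\varphi}(\vthe^*)^{\otimes 2} - (\E\varphi(\X))^{\otimes 2}\bigr],
\end{equation*}
again splitting $t$ equally. The first bracket has exactly the ratio structure of the MCMC portion above, now with entries of $\varphi^{\otimes 2}$ in place of $\varphi$ (entry bound $K^2$, second-moment bound $K^4 M_1$) and a union bound over $p^2$ indices, giving the $2p^2 e^{-mt^2/(16K^2(\cdots))}$ term together with a reuse of the $D_m \ge 1/2$ event (hence the factor $2$ in the $\exp(-m/[4(2\beta_1 M_1 + \beta_2 M_2)])$ contribution). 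For the second bracket I would invoke the bilinear identity $a_j a_k - b_j b_k = a_j(a_k - b_k) + (a_j - b_j)b_k$ with $a = \overline{\varphi}(\vthe^*)$, $b = \E\varphi(\X)$ (both coordinatewise bounded by $K$) to obtain $\|a^{\otimes 2} - b^{\otimes 2}\|_\infty \le 2K\|a - b\|_\infty$, thereby reducing the task to controlling $\|\overline{\varphi}(\vthe^*) - \E\varphi(\X)\|_\infty$ at tolerance $t/(4K)$ via the MCMC analysis of part~(i); rescaling $t \mapsto t/(2K)$ in that bound produces the final $2p e^{-mt^2/(64K^3(\cdots))}$ term.

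The principal obstacle is handling the ratio $N_m/D_m$: the Markov-chain Bernstein tool applies to sums of mean-zero functions of the chain, but $\overline{\varphi}(\vthe^*)$ is a ratio of two \emph{dependent} such sums, so one must decouple them on a good event $\{D_m \ge 1/2\}$ and carefully apportion the probability budget between numerator and denominator. For the Hessian this gymnastics must be iterated — once for the quadratic ratio and once again for the squared-mean correction via the bilinear identity — which is precisely what accounts for the extra powers of $K$ in the Hessian denominators relative to the gradient.
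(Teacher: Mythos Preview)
Your proposal is correct and follows essentially the same route as the paper's proof: the same i.i.d./MCMC splitting for the gradient (Hoeffding plus Lemma~\ref{2-17} on the ratio via the denominator event $\{D_m\ge 1/2\}$), and the same $F+E$ decomposition for the Hessian with the bilinear identity $a_ja_k-b_jb_k=a_j(a_k-b_k)+(a_j-b_j)b_k$ reducing the outer-product correction to the first-moment bound. The only (cosmetic) differences are your normalization $v_i=w_i(\vthe^*)/C(\vthe^*)$ and slightly sharper constants in the last Hessian term; the paper's stated bound there is a mild relaxation of what your rescaling $t\mapsto t/(2K)$ actually gives.
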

% \begin{lemma}\label{3-10}
%     Suppose Assumption \ref{3-4} holds. For any $t \geq 0$, we have
%     \begin{equation*}
%         \pr \big(\|\nabla^{2}\mathcal{L}_{n}^{m}(\vthe^{*})-H^{*}\|_{\infty}\le t \big)\ge 1-2\exp\left\{-\frac{m}{4(2\beta_{1}M_{1}+\beta_{2}M_{2})}\right\}-2p^{2}\e^{-\frac{m t^{2}}{16 K^{2}( 8K^{2}\beta_{1}M_{1}+ t\beta_{2}M_{2} )}}-2p \e^{-\frac{mt^{2}}{64K^{3}(8K\beta_{1}M_{1}+t\beta_{2}M_{2})}}.
%     \end{equation*}
% \end{lemma}
% {\color{red}[Could you please combine Lemma B.3 and Lemma B.4 into one Lemma?] }
% \begin{rem}\label{3-11}
%     From the above two lemmas, we can see $\|\nabla\mathcal{L}_{n}^{m}(\vthe^{*})\|_{\infty}$ and $\|\nabla\mathcal{L}_{n}^{m}(\vthe^{*})-H^{*}\|_{\infty}$ are stochastically bounded, i.e.
%     \begin{equation*}
%         \|\nabla\mathcal{L}_{n}^{m}(\vthe^{*})\|_{\infty}=O_{p}\left(\sqrt{\frac{\log p}{n}}+\sqrt{\frac{\log p}{m}}\right) \quad\text{and}\quad \|\nabla\mathcal{L}_{n}^{m}(\vthe^{*})-H^{*}\|_{\infty}=O_{p}\left(\sqrt{\frac{\log p}{m}}\right).
%     \end{equation*}
% \end{rem}
\noindent The above lemma implies that $\big\|\nabla\mathcal{L}_{n}^{m}(\vthe^{*})\big\|_{\infty}$ and $\big\|\nabla^2\mathcal{L}_{n}^{m}(\vthe^{*})-H^{*}\big\|_{\infty}$ are stochastically bounded, i.e.,
\begin{equation}\label{3-11}
    \big\|\nabla\mathcal{L}_{n}^{m}(\vthe^{*})\big\|_{\infty}=O_{p}\left(\sqrt{\frac{\log p}{n}}+\sqrt{\frac{\log p}{m}} + \frac{\log p}{m}\right) \text{ and } \big\|\nabla^2\mathcal{L}_{n}^{m}(\vthe^{*})-H^{*}\big\|_{\infty}=O_{p}\left(\sqrt{\frac{\log p}{m}} + \frac{\log p}{m}\right).
\end{equation}

%Another challenge here is that the right-hand side of \eqref{3-8-1} contains the random variable $\overline{F}(\zeta,T)$. 

%\sout{\noindent We now present the main theorem of this section.}
\begin{theorem}\label{thm-1}
Suppose Assumptions \ref{3-4}-\ref{3-6} hold. Let $\tau_{1}=\frac{\zeta-1}{\zeta+1}\lambda_{1}-2\lambda_{2}B$ and $\tau_{2}=\frac{C_{\min}}{2s(\zeta+1)^{2}}$ with constant $\zeta>1$ satisfying $\frac{\zeta-1}{\zeta+1}\lambda_{1}-2\lambda_{2}B>0$ and $\frac{(\zeta+1)s\lambda_{1}}{C_{\min}}\le \frac{1}{4K \e}$.
%and there exists a constant $\zeta>1$ satisfying $\frac{\zeta-1}{\zeta+1}\lambda_{1}-2\lambda_{2}B>0$ and $\frac{(\zeta+1)s\lambda_{1}}{C_{\min}}\le \frac{1}{4K \e}$. Define $\tau_{1}:=\frac{\zeta-1}{\zeta+1}\lambda_{1}-2\lambda_{2}B$ and $\tau_{2}:=\frac{C_{\min}}{2s(\zeta+1)^{2}}$. Then, it is established that
We have
\begin{equation*}
    \big\|\widehat{\vthe}-\vthe^{*}\big\|_{1}\le \e C_{\min}^{-1}(\zeta+1)s\lambda_{1},
\end{equation*}
with a probability of at least $1-\delta_{1}-\delta_{2}$, where
$
    \delta_{1} = 2p \e^{-\frac{n \tau_{1}^{2}}{8K^{2}}}+2p \e^{-\frac{m\tau_{1}^{2}}{16K(8K\beta_{1}M_{1}+\tau_{1}\beta_{2}M_{2})}} + \e^{-\frac{m}{4(2\beta_{1}M_{1}+\beta_{2}M_{2})}}$
and
$
    \delta_{2} = 2\e^{-\frac{m}{4(2\beta_{1}M_{1}+\beta_{2}M_{2})}}+2p^{2} \e^{-\frac{m \tau_{2}^{2}}{16 K^{2}( 8K^{2}\beta_{1}M_{1}+ \tau_{2}\beta_{2}M_{2} )}}+2p \e^{-\frac{m\tau_{2}^{2}}{64K^{3}(8K\beta_{1}M_{1}+\tau_{2}\beta_{2}M_{2})}}$.
\end{theorem}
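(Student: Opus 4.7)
The plan is to reduce the claim to Lemma \ref{3-8}, whose conclusion $\|\widehat{\vthe}-\vthe^*\|_1 \le e(\zeta+1)s\lambda_1/(2\overline{F}(\zeta,T))$ holds on $\mathcal{P}\cap\Omega_1$. Two probabilistic facts remain to be verified: the cone event $\Omega_1=\{z^*\le \frac{\zeta-1}{\zeta+1}\lambda_1\}$, and a strengthening of $\mathcal{P}$ that allows the empirical compatibility factor $\overline{F}(\zeta,T)$ to be replaced by $C_{\min}/2$, thereby turning the data-dependent output of Lemma \ref{3-8} into the stated deterministic bound $eC_{\min}^{-1}(\zeta+1)s\lambda_1$.

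For $\Omega_1$, since $g(\vthe)=\|\vthe\|_2^2$ gives $\nabla g(\vthe^*)=2\vthe^*$, Assumption \ref{3-6} yields $\|\lambda_2\nabla g(\vthe^*)\|_\infty\le 2\lambda_2 B$, and therefore $z^*\le \|\nabla\mathcal{L}_n^m(\vthe^*)\|_\infty+2\lambda_2 B$. Hence $\Omega_1$ is implied by $\|\nabla\mathcal{L}_n^m(\vthe^*)\|_\infty\le \tau_1$, a threshold that is positive by hypothesis. The first tail bound of Lemma \ref{3-9} at $t=\tau_1$ then delivers $\pr(\Omega_1)\ge 1-\delta_1$.

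The central step is transferring the compatibility factor from $H^*$ to $\nabla^2\mathcal{L}_n^m(\vthe^*)$. For any $\vthe\in\mathcal{C}(\zeta,T)$, the cone inequality $\|\vthe\|_1\le (\zeta+1)\|\vthe_T\|_1$, combined with the elementary bound $|\vthe^{\top}A\vthe|\le \|A\|_\infty\|\vthe\|_1^2$ and Assumption \ref{3-5}, gives
\begin{equation*}
\vthe^{\top}\nabla^2\mathcal{L}_n^m(\vthe^*)\vthe \;\ge\; \frac{C_{\min}}{s}\|\vthe_T\|_1^2 - (\zeta+1)^2\,\|\nabla^2\mathcal{L}_n^m(\vthe^*)-H^*\|_\infty\,\|\vthe_T\|_1^2.
\end{equation*}
Multiplying by $s/\|\vthe_T\|_1^2$ and taking the infimum over the cone yields $\overline{F}(\zeta,T)\ge C_{\min}-s(\zeta+1)^2\|\nabla^2\mathcal{L}_n^m(\vthe^*)-H^*\|_\infty$. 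The second tail bound of Lemma \ref{3-9} at $t=\tau_2=C_{\min}/(2s(\zeta+1)^2)$ then forces $\overline{F}(\zeta,T)\ge C_{\min}/2$ with probability at least $1-\delta_2$. On this event, the hypothesis $(\zeta+1)s\lambda_1/C_{\min}\le 1/(4Ke)$ implies $\mathcal{P}$, and Lemma \ref{3-8} gives $\|\widehat{\vthe}-\vthe^*\|_1\le eC_{\min}^{-1}(\zeta+1)s\lambda_1$. A union bound over the complements of $\Omega_1$ and the compatibility event produces the target probability $1-\delta_1-\delta_2$.

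The main obstacle is the compatibility-transfer step: the cone restriction on $\vthe$ is indispensable because it converts the crude $\ell_1$-perturbation bound $\|\vthe\|_1^2$ into one proportional to $\|\vthe_T\|_1^2$, matching the normalization in the compatibility factor; without this cone containment, the entrywise-max concentration of Lemma \ref{3-9} would not yield a nontrivial lower bound on $\overline{F}(\zeta,T)$, and the jump from the sample Hessian's compatibility factor to the population $C_{\min}$ would break down.
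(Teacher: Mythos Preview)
Your proposal is correct and follows essentially the same approach as the paper: reduce to Lemma~\ref{3-8} on $\mathcal{P}\cap\Omega_1$, control $\Omega_1$ via the first bound of Lemma~\ref{3-9} at $t=\tau_1$ using $z^*\le\|\nabla\mathcal{L}_n^m(\vthe^*)\|_\infty+2\lambda_2 B$, and control $\mathcal{P}$ by lower-bounding $\overline F(\zeta,T)$ via the second bound of Lemma~\ref{3-9} at $t=\tau_2$. The compatibility-transfer inequality you derive inline is exactly Lemma~\ref{3-13} of the paper, so the two arguments coincide.
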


\noindent Theorem \ref{thm-1} confirms the consistency of the estimator $\widehat{\theta}$.Set $\lambda_{2}=\frac{1}{4B}\frac{\zeta-1}{\zeta+1}\lambda_{1}$, which yields $\tau_{1}=\frac{\zeta-1}{2(\zeta+1)}\lambda_{1}$. By choosing a constant $r_{1}>1$ and setting
\begin{equation}\label{lam-condi-1}
    \lambda_{1} > \frac{4K(\zeta+1)}{\zeta-1}\sqrt{\frac{2r_{1}\log p}{n}},
\end{equation}
it follows that $\e^{\frac{n\tau_{1}^{2}}{8K^{2}}}>p^{r_{1}}$, and thus we have $2p \e^{-\frac{n \tau_{1}^{2}}{8K^{2}}}\rightarrow 0$. Note that 
\begin{equation}\label{lam-condi-2}
    \max\left\{ \frac{32K(\zeta+1)}{\zeta-1}\sqrt{\frac{\beta_{1}M_{1}r_{1}\log p}{m}}, \frac{4K(\zeta+1)}{\zeta-1}\sqrt{\frac{2r_{1}\log p}{n}}\right\} <  \lambda_{1} < \frac{16K(\zeta+1)}{\zeta-1}\frac{\beta_{1}M_{1}}{\beta_{2}M_{2}}
 \end{equation}
ensures $8K\beta_{1}M_{1}>\tau_{1}\beta_{2}M_{2}$, and then we have 
$
    \e^{\frac{m\tau_{1}^{2}}{16K(8K\beta_{1}M_{1}+\tau_{1}\beta_{2}M_{2})}}> \e^{\frac{m\tau_{1}^{2}}{256K^{2}\beta_{1}M_{1}}}> p^{r_{1}},
$
and  $2p \e^{-\frac{m\tau_{1}^{2}}{16K(8K\beta_{1}M_{1}+\tau_{1}\beta_{2}M_{2})}}\to 0$. Furthermore, when we have
\begin{equation}\label{m-p}
    m>\max\{ 16K^{2}(8K^{2}\beta_{1}M_{1}+\tau_{2}\beta_{2}M_{2}) , 64K^{3}(8K\beta_{1}M_{1}+\tau_{2}\beta_{2}M_{2})\} \cdot r_{2} \cdot \log p
\end{equation}
for some constant $r_{2}>2$, it holds
$
    2p^{2} \e^{-\frac{m \tau_{2}^{2}}{16 K^{2}( 8K^{2}\beta_{1}M_{1}+ \tau_{2}\beta_{2}M_{2} )}}<2p^{2-r_{2}}\to 0
    \quad\text{and}\quad
    2p \e^{-\frac{m\tau_{2}^{2}}{64K^{3}(8K\beta_{1}M_{1}+\tau_{2}\beta_{2}M_{2})}}<2p^{1-r_{2}}\to 0.
$
Therefore, the other components of $\delta_{1}$ and $\delta_{2}$ also converge to $0$.

Indeed, if we set aside the probability term $1 - \delta_1 - \delta_2$ in Theorem \ref{thm-1}, the theorem implies that $\|\widehat{\vthe}-\vthe^{*}\|_{1}\lesssim s\lambda_{1}$. For $\ell_{1}$-consistency, it's necessary that $\lambda_{1}=o(1)$, which translates to $\sqrt{\frac{\log p}{n}}=o(1)$ and $\sqrt{\frac{\log p}{m}}=o(1)$ as per conditions \eqref{lam-condi-1} and \eqref{lam-condi-2}.
Given that $\sqrt{\frac{\log p}{m}}=o(1)$, the condition in \eqref{m-p} is naturally satisfied. Consequently, in this scenario, both $\delta_{1}$ and $\delta_{2}$ approach zero. This result confirms the consistency of the estimator $\widehat{\vthe}$ under these conditions, indicating that as the sample size increases (for both $n$ and $m$), the probability of the estimator deviating significantly from the true value $\vthe^{*}$ diminishes. We state this fact in the following corollary.

% If we ignore the probability $1 - \delta_1 - \delta_2$ in Theorem \ref{thm-1}, Theorem \ref{thm-1} also gives $\|\widehat{\vthe}-\vthe^{*}\|_{1}\lesssim s\lambda_{1}$, we need $\lambda_{1}=o(1)$ to get the $\ell_{1}$-consistency, which means $\sqrt{\frac{\log p}{n}}=o(1)$ and $\sqrt{\frac{\log p}{m}}=o(1)$ according to \eqref{lam-condi-1} and \eqref{lam-condi-2}. And under the condition $\sqrt{\frac{\log p}{m}}=o(1)$, \eqref{m-p} is automatically satisfied. In this case, we have $\delta_{1},\delta_{2} \, \longrightarrow \, 0$. 

%From which we have the following important corollary on the $\ell_{1}$ consistency of the penalized MCMC-MLE.

\begin{col}\label{3-14}
    Under the assumption that $s=O(1)$ and the parameters satisfy $\lambda_{1} \asymp \bigg(\sqrt{\frac{\log p}{n}}+\sqrt{\frac{\log p}{m}} + \frac{\log p}{m}\bigg)$, $\lambda_{1}\asymp\lambda_{2}$, and $\lambda_{1}=o(1)$, and given that Assumptions \ref{3-4}, \ref{3-5}, and \ref{3-6} hold, the $\ell_1$-error $\|\widehat{\vthe}-\vthe^{*}\|_{1}$ is stochastically bounded as follows:
    \begin{equation*}
       \|\widehat{\vthe}-\vthe^{*}\|_{1} =O_{p}\left(s\left(\sqrt{\frac{\log p}{n}}+\sqrt{\frac{\log p}{m}} + \frac{\log p}{m} \right)\right) = o_p(1).
    \end{equation*}
\end{col}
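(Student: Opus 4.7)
The plan is to treat Corollary~\ref{3-14} as a direct asymptotic repackaging of Theorem~\ref{thm-1}: once we fix a valid $\zeta$ and ridge parameter $\lambda_2$, the deterministic bound $\|\widehat{\vthe}-\vthe^{*}\|_{1}\le \mathrm{e}\, C_{\min}^{-1}(\zeta+1)s\lambda_{1}$ is already of the stated order $O(s\lambda_1)$, so the work is entirely in verifying (i) that the hypotheses of Theorem~\ref{thm-1} are eventually satisfied for the rate regime of the corollary, and (ii) that the high-probability event has probability $1-o(1)$ under the stated scalings.

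First I would instantiate Theorem~\ref{thm-1} with a fixed constant $\zeta>1$ (for concreteness $\zeta=3$) and, following the remark after the theorem, set $\lambda_{2}=\frac{1}{4B}\frac{\zeta-1}{\zeta+1}\lambda_{1}$. The condition $\lambda_{1}\asymp\lambda_{2}$ in the corollary is compatible with this choice, and it gives $\tau_{1}=\frac{\zeta-1}{2(\zeta+1)}\lambda_{1}\asymp\lambda_1$, which in particular makes the precondition $\tfrac{\zeta-1}{\zeta+1}\lambda_{1}-2\lambda_{2}B>0$ automatic. Because $s=O(1)$ and $\lambda_{1}=o(1)$, for all sufficiently large $n,m$ the second precondition $\tfrac{(\zeta+1)s\lambda_{1}}{C_{\min}}\le\tfrac{1}{4K\mathrm{e}}$ also holds (here $C_{\min}$ from Assumption~\ref{3-5} bounds $\overline F(\zeta,T)$ from below, up to the $o_p(1)$ deviation of $\nabla^{2}\mathcal L_n^m(\vthe^*)$ from $H^*$ controlled by Lemma~\ref{3-9}). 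Thus the bound $\|\widehat{\vthe}-\vthe^{*}\|_{1}\le \mathrm{e}\,C_{\min}^{-1}(\zeta+1)s\lambda_{1}$ applies on the event of Theorem~\ref{thm-1}.

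Next I would show $\delta_{1}+\delta_{2}\to 0$ by handling the six exponential terms in turn. For $\delta_{1}$: from $\tau_{1}\asymp\lambda_{1}\gtrsim\sqrt{\log p/n}$ we get $n\tau_{1}^{2}\gtrsim\log p$ with arbitrarily large hidden constant, so $2p\,\mathrm{e}^{-n\tau_{1}^{2}/(8K^{2})}\to 0$; likewise $\lambda_1\gtrsim\sqrt{\log p/m}$ gives $m\tau_{1}^{2}\gtrsim\log p$, and since $\tau_1\to 0$ the denominator $16K(8K\beta_1M_1+\tau_1\beta_2 M_2)$ is eventually bounded, so the middle term vanishes; the third term $\mathrm{e}^{-m/(4(2\beta_{1}M_{1}+\beta_{2}M_{2}))}$ vanishes because $m\to\infty$ (which follows from $\log p/m=o(1)$ together with $p\to\infty$). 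For $\delta_{2}$: $\tau_{2}=\Theta(1)$ under $s=O(1)$, and $\log p/m=o(1)$ forces $m/\log p\to\infty$, so both $2p^{2}\mathrm{e}^{-cm\tau_2^2}$ and $2p\,\mathrm{e}^{-cm\tau_2^2}$ vanish for the relevant constants $c>0$ coming from the denominators (which are $O(1)$ when $\tau_2=O(1)$).

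Combining the two pieces: on an event of probability $1-\delta_{1}-\delta_{2}=1-o(1)$ we have the deterministic bound $\|\widehat{\vthe}-\vthe^{*}\|_{1}\lesssim s\lambda_{1}$, which with the assumed scaling $\lambda_{1}\asymp\sqrt{\log p/n}+\sqrt{\log p/m}+\log p/m$ yields the stated $O_p\!\bigl(s(\sqrt{\log p/n}+\sqrt{\log p/m}+\log p/m)\bigr)$ bound, and the final $o_p(1)$ conclusion follows since $s=O(1)$ and $\lambda_{1}=o(1)$. The main (mild) obstacle is purely bookkeeping: making sure in the $\delta_1$ bounds that one uses the two separate terms in $\lambda_1$ (the $\sqrt{\log p/n}$ term for the $n$-exponential and the $\sqrt{\log p/m}$ term for the $m$-exponential) to match $\log p$ in each numerator, while confirming that $\tau_1\beta_2 M_2$ remains negligible against $8K\beta_1 M_1$ in the Bernstein denominators as $\lambda_1\to 0$.
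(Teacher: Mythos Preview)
Your proposal is correct and follows essentially the same approach as the paper: fix a $\zeta>1$, take $\lambda_2=\tfrac{1}{4B}\tfrac{\zeta-1}{\zeta+1}\lambda_1$ so that $\tau_1\asymp\lambda_1$, verify the two preconditions of Theorem~\ref{thm-1} from $s=O(1)$ and $\lambda_1=o(1)$, and then show $\delta_1+\delta_2\to0$ term by term using the scalings $\lambda_1\gtrsim\sqrt{\log p/n}$, $\lambda_1\gtrsim\sqrt{\log p/m}$, and $m/\log p\to\infty$. The paper's own argument (the discussion following Theorem~\ref{thm-1}) is exactly this, with the constants made slightly more explicit via the auxiliary parameters $r_1>1$ and $r_2>2$.
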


\noindent Corollary \ref{3-14} shows that under the assumption $\lambda_{1}\asymp\lambda_{2}=o(1)$, the $\ell_1$-error of the bias $\widehat{\vthe}$ is asymptotically negligible. Specifically, we have $|\widehat{\vthe} - \vthe^*|{1}=o{p}(1).$ This result implies that consistency for high-dimensional estimators requires sparsity of $\vthe^{*}$ and the standard assumption $\sqrt{\frac{\log p}{n}}=o(1).$ Additionally, when using Monte Carlo methods, ensuring the Markov chain convergence rate satisfies $\sqrt{\frac{\log p}{m}}=o(1)$ is crucial.

%Corollary \ref{3-14}, in conjunction with the assumption $\lambda_{1}\asymp\lambda_{2}=o(1)$, indicates that the $\ell_1$-error of bias $\widehat{\vthe}$ is asymptotically negligible, i.e., $\|\widehat{\vthe} - \vthe^*\|_{1}=o_{p}(1)$. {\color{red}This result implies that, in addition to the standard assumption $\sqrt{\frac{\log p}{n}}=o(1)$ and the sparsity of $\vthe^{*}$ required for consistency in high-dimensional estimators, the convergence rate of the MCMC method, characterized by $\sqrt{\frac{\log p}{m}}=o(1)$, is crucial for ensuring consistency when implementing Monte Carlo methods.[Could you please use a complete sentence without so many comma?]}
% \begin{rem}
%     From the Corollary \ref{3-14} and our assumption $\lambda_{1}\asymp\lambda_{2}=o(1)$, we can see the $\ell_1$-error of bias  $\|\widecheck{\vthe}\|_{1}=o_{p}(1)$. We notice besides the conventional assumption $\sqrt{\frac{\log p}{n}}=o(1)$ and the sparsity of $\vthe^{*}$ for the consistency of high dimensional estimators, we need to further assume $\sqrt{\frac{\log p}{m}}=o(1)$ to ensure the necessary convergence rate of MCMC method to obtain the consistency when Monte Carlo method is introduced,
% \end{rem}

\begin{lemma}\label{3-13}
    Given $\overline{F}(\zeta,T)$ and $F(\zeta,T)$ %as defined in \eqref{3-7}, 
    it holds
    \begin{equation*}
        \overline{F}(\zeta,T)\ge F(\zeta,T)-s(\zeta+1)^{2} \big\|\nabla^{2}\mathcal{L}_{n}^{m} (\vthe^*)-H^{*}\big\|_{\infty}.
    \end{equation*}
\end{lemma}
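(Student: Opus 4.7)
The plan is to compare the two quadratic forms appearing in $\overline{F}(\zeta,T)$ and $F(\zeta,T)$ by treating their difference as a perturbation governed by the entrywise $\ell_\infty$-norm of the Hessian discrepancy $\Delta := \nabla^{2}\mathcal{L}_{n}^{m}(\vthe^*) - H^{*}$. For any $\vthe \in \mathbb{R}^p$, I would start from the identity $\vthe^{\top}\nabla^{2}\mathcal{L}_{n}^{m}(\vthe^*)\vthe = \vthe^{\top}H^{*}\vthe + \vthe^{\top}\Delta\vthe$, then apply the standard bound $|\vthe^{\top}\Delta\vthe| \le \|\Delta\|_{\infty}\|\vthe\|_{1}^{2}$ (a one-line consequence of Hölder's inequality applied twice).

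Next, I would exploit the cone constraint: for $\vthe \in \mathcal{C}(\zeta,T)$, one has $\|\vthe\|_{1} = \|\vthe_{T}\|_{1} + \|\vthe_{T^{c}}\|_{1} \le (1+\zeta)\|\vthe_{T}\|_{1}$, so $\|\vthe\|_{1}^{2} \le (\zeta+1)^{2}\|\vthe_{T}\|_{1}^{2}$. Combining these two inequalities gives, for every nonzero $\vthe \in \mathcal{C}(\zeta,T)$,
\[
    \frac{s\,\vthe^{\top}\nabla^{2}\mathcal{L}_{n}^{m}(\vthe^*)\vthe}{\|\vthe_{T}\|_{1}^{2}} \;\ge\; \frac{s\,\vthe^{\top}H^{*}\vthe}{\|\vthe_{T}\|_{1}^{2}} - s(\zeta+1)^{2}\|\Delta\|_{\infty}.
\]

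Finally I would take the infimum over $\bm{0} \neq \vthe \in \mathcal{C}(\zeta,T)$ on both sides. Since the perturbation term $s(\zeta+1)^{2}\|\Delta\|_{\infty}$ does not depend on $\vthe$, the infimum on the right-hand side affects only the first term and yields $F(\zeta,T)$, while the left-hand side becomes $\overline{F}(\zeta,T)$ by definition. This produces exactly the claimed inequality.

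There is no real obstacle here: the argument is essentially a deterministic perturbation bound for a Rayleigh-type quotient restricted to a cone, and the only nontrivial ingredient is using the cone condition to convert $\|\vthe\|_{1}$ into $\|\vthe_{T}\|_{1}$ so that the perturbation scales with the same denominator appearing in the definition of the compatibility factor. The probabilistic content (control of $\|\Delta\|_{\infty}$) is deferred to Lemma~\ref{3-9}, not needed in this lemma itself.
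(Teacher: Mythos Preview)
Your argument is correct and is precisely the standard perturbation bound one expects here. The paper states Lemma~\ref{3-13} without providing a proof, so there is nothing to compare against; your derivation---Hölder's inequality to get $|\vthe^{\top}\Delta\vthe|\le\|\Delta\|_{\infty}\|\vthe\|_{1}^{2}$, the cone bound $\|\vthe\|_{1}\le(\zeta+1)\|\vthe_{T}\|_{1}$, and taking the infimum---is exactly the intended route.
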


%\sout{\subsection*{Proofs of Theorem and Lemmas}\label{App-A}}{\color{red}[Could we replace the subsection with Section C? ]}
\section{Proofs of Theorem and Lemmas in Section \ref{sec-3}}\label{App-A}
\noindent \textbf{Proof of Lemma \ref{3-8}:}\label{App-A.1}
\begin{proof}

Define $\widecheck{\vthe} = \widehat{\vthe} - \vthe^*$, $\vthe^{\dagger}=\frac{\widecheck{\vthe}}{\|\widecheck{\vthe}\|_{1}}$,  and 
$
    g(t)=(\vthe^{\dagger})^{\top}\big[\nabla\mathcal{L}_{n}^{m}(\vthe^{*}+t\vthe^{\dagger})-\nabla\mathcal{L}_{n}^{m}(\vthe^{*}) \big].
$
Note that $\vthe^{\dagger}\in\mathcal{C}(\zeta,T)$, $\|\vthe^{\dagger}\|_{1}=1$, and $g(t)$ is non-decreasing for $t \geq 0$ as $\nabla^{2}\mathcal{L}_{n}^{m}$ is convex. By Lemma \ref{2-11} and \eqref{3-2}, for $t\in(0,\|\widecheck{\vthe}\|_{1})$, we have
\begin{equation}\label{A.1-2}
\begin{aligned}
g(t) \le g(\|\widecheck{\vthe}\|_{1})
&=(\vthe^{\dagger})^{\top} \big[ \nabla\mathcal{L}_{n}^{m} (\vthe^{*}+\|\widetilde{\vthe}\|_{1}\vthe^{\dagger})-\nabla\mathcal{L}_{n}^{m}(\vthe^{*}) \big] \\
&=\|\widecheck{\vthe}\|_{1}^{-1} \Big[ (\widehat{\vthe}-\vthe^{*})^{\top}\big( \nabla\mathcal{L}_{n}^{m}(\widehat{\vthe})-\nabla\mathcal{L}_{n}^{m}(\vthe^{*}) \big) \Big] \\
%&=\|\widecheck{\vthe}\|_{1}^{-1}D^{s}(\widehat{\vthe},\vthe^{*}) \\
& \le \|\widecheck{\vthe}\|_{1}^{-1}D^{s}(\widehat{\vthe},\vthe^{*}) \\
&\le \|\widecheck{\vthe}\|_{1}^{-1}\left(\frac{2\zeta}{\zeta+1}\lambda_{1}\|\widecheck{\vthe}_{T}\|_{1}-\frac{2}{\zeta+1}\lambda_{1}\|\widecheck{\vthe}_{T^{c}}\|_{1} \right) \\
&=\frac{2\zeta}{\zeta+1}\lambda_{1}\|\vthe^{\dagger}_{T}\|_{1}-\frac{2}{\zeta+1}\lambda_{1}\|\vthe^{\dagger}_{T^{c}}\|_{1},
\end{aligned}
\end{equation}
where $D^s (\cdot, \cdot)= D^s_{\|\cdot \|_2} (\cdot, \cdot)$.
Let $\widetilde{t}$ be the maximal value satsfiyig 
$
    g(t)\le\frac{2\zeta}{\zeta+1}\lambda_{1}\|\vthe^{\dagger}_{T}\|_{1}-\frac{2}{\zeta+1}\lambda_{1}\|\vthe^{\dagger}_{T^{c}}\|_{1}.
$
Note that
$
    w_{i}(\vthe^{*}+t\vthe^{\dagger})=\frac{\e^{(\vthe^{*}+t\vthe^{\dagger})^{\top} \varphi\left(\Y_{i}\right)}}{h\left(\Y_{i}\right)} = w_{i}(\vthe^{*})\e^{t(\vthe^{\dagger})^{\top} \varphi\left(\Y_{i}\right)}.
$
According to \eqref{2-19}, we have
\begin{align*}
\nabla\mathcal{L}_{n}^{m}(\vthe^{*}+t\vthe^{\dagger})-\nabla\mathcal{L}_{n}^{m}(\vthe^{*})
&=\frac{\sum_{i=i}^{m}w_{i}(\vthe^{*}+t\vthe^{\dagger})\varphi(\Y_{i})}{\sum_{i=i}^{m}w_{i}(\vthe^{*}+t\vthe^{\dagger})}-\frac{\sum_{i=i}^{m}w_{i}(\vthe^{*})\varphi(\Y_{i})}{\sum_{i=i}^{m}w_{i}(\vthe^{*})} \notag\\
&=\frac{\sum_{i=i}^{m}\e^{t(\vthe^{\dagger})^{\top} \varphi\left(\Y_{i}\right)}w_{i}(\vthe^{*})\varphi(\Y_{i})}{\sum_{i=i}^{m}\e^{t(\vthe^{\dagger})^{\top} \varphi\left(\Y_{i}\right)}w_{i}(\vthe^{*})}-\frac{\sum_{i=i}^{m}w_{i}(\vthe^{*})\varphi(\Y_{i})}{\sum_{i=i}^{m}w_{i}(\vthe^{*})}.
\end{align*}
Let $a_{i}:=t [\vthe^{\dagger} ]^{\top}(\varphi(\Y_{i})-\overline{\varphi}(\vthe^{*}))$ and $w_{i}=w_{i}(\vthe^{*})$, we have
\begin{align*}
t\vthe^{\dagger} \big[ \nabla\mathcal{L}_{n}^{m}(\vthe^{*}+t\vthe^{\dagger})-\nabla\mathcal{L}_{n}^{m}(\vthe^{*}) \big]
&=\frac{\sum_{i=i}^{m}w_{i}(\vthe^{*}) \e^{t(\vthe^{\dagger})^{\top} \varphi\left(\Y_{i}\right)}t(\vthe^{\dagger})^{\top}\varphi(\Y_{i})}{\sum_{i=i}^{m}w_{i}(\vthe^{*})\e^{t(\vthe^{\dagger})^{\top} \varphi\left(\Y_{i}\right)}}-\frac{\sum_{i=i}^{m}w_{i}(\vthe^{*})t(\vthe^{\dagger})^{\top}\varphi(\Y_{i})}{\sum_{i=i}^{m}w_{i}(\vthe^{*})}  \notag\\
&=\frac{\sum_{i=i}^{m}w_{i}\e^{a_{i}}(a_{i}+t(\vthe^{\dagger})^{\top}\overline{\varphi}(\vthe^{*}))}{\sum_{i=i}^{m}w_{i}\e^{a_{i}}}-\frac{\sum_{i=i}^{m}w_{i}(a_{i}+t(\vthe^{\dagger})^{\top}\overline{\varphi}(\vthe^{*}))}{\sum_{i=i}^{m}w_{i}} \\
&=\frac{\sum_{i=i}^{m}w_{i}\e^{a_{i}}a_{i}}{\sum_{i=i}^{m}w_{i}\e^{a_{i}}}-\frac{\sum_{i=i}^{m}w_{i}a_{i}}{\sum_{i=i}^{m}w_{i}} \\
%=\frac{\sum_{1\le i,j\le m}w_{i}w_{j}a_{i}e^{a_{i}}-w_{i}w_{j}a_{i}e^{a_{j}}}{\sum_{1\le i,j\le m}w_{i}w_{j}e^{a_{i}}} \notag\\
&=\frac{\sum_{1\le i,j\le m}w_{i}w_{j}a_{i}(\e^{a_{i}}-\e^{a_{j}})}{\sum_{1\le i,j\le m}w_{i}w_{j} \e^{a_{i}}} \notag\\
&=\frac{\sum_{1\le i,j\le m}w_{i}w_{j}(a_{i}-a_{j})(\e^{a_{i}}-\e^{a_{j}})}{2\sum_{1\le i,j\le m}w_{i}w_{j}\e^{a_{i}}}.
\end{align*}
By Assumption \ref{3-4} and $\sum_{1\le i,j\le m}w_{i}w_{j}(a_{i}-a_{j})^{2}=2\sum_{1\le i\le m}w_{i}\sum_{1\le j\le m}w_{j}a_{j}^{2}$
due to $\sum_{1\le i\le m}w_{i}a_{i}=0$, we have  $|a_{i}|\le t\|\vthe^{\dagger}\|_{1}\|\varphi(\Y_{i})-\overline{\varphi}(\vthe^{*})\|_{\infty}\le 2Kt$ and %by Assumption \ref{3-4} and $$\sum_{1\le i,j\le m}w_{i}w_{j}(a_{i}-a_{j})^{2}=2\sum_{1\le i\le m}w_{i}\sum_{1\le j\le m}w_{j}a_{j}^{2}$$due to $\sum_{1\le i\le m}w_{i}a_{i}=0$. Using the inequality $$\frac{\e^{y}-\e^{x}}{y-x}\ge \e^{-(|y|\vee|x|)},$$ we have
\begin{align*}
\frac{\sum_{1\le i,j\le m}w_{i}w_{j}(a_{i}-a_{j})(\e^{a_{i}} - \e^{a_{j}})}{2\sum_{1\le i,j\le m}w_{i}w_{j}e^{a_{i}}}
&\ge \e^{-4Kt}\frac{\sum_{1\le i,j\le m}w_{i}w_{j}(a_{i}-a_{j})^{2}}{2\sum_{1\le i,j\le m}w_{i}w_{j}}\notag\\
&=\e^{-4Kt}\frac{2\sum_{1\le i\le m}w_{i}\sum_{1\le j\le m}w_{j}a_{j}^{2}}{2\sum_{1\le i\le m}w_{i}\sum_{1\le j\le m}w_{j}} \notag \\
&=\e^{-4Kt}\frac{\sum_{1\le i\le m}w_{i}a_{i}^{2}}{\sum_{1\le i\le m}w_{i}} \\
& =\e^{-4Kt}t^{2} [\vthe^{\dagger} ]^{\top}\nabla^{2}\mathcal{L}_{n}^{m}(\vthe^{*})\vthe^{\dagger},
\end{align*}
where the last step is from the reformulation of $\mathcal{L}_{n}^{m}(\vthe^{*})$ in \eqref{2-20}. Therefore, we have
\begin{equation*}
    t [\vthe^{\dagger}]^{ \top} \big[ \nabla\mathcal{L}_{n}^{m}(\vthe^{*}+t\vthe^{\dagger})-\nabla\mathcal{L}_{n}^{m}(\vthe^{*}) \big] \ge
\e^{-4Kt}t^{2}[\vthe^{\dagger}]^{\top}\nabla^{2}\mathcal{L}_{n}^{m}(\vthe^{*})\vthe^{\dagger}.
\end{equation*}
\noindent By the definition of $\overline{F}(\zeta,T)$ and \eqref{A.1-2}, we have
\begin{align*}
t \e^{-4Kt} \overline{F}(\zeta,T)\|\vthe_{T}^{\dagger}\|_{1}^{2}
&\le t \e^{-4Kt} s[\vthe^{\dagger}]^{\top}\nabla^{2}\mathcal{L}_{n}^{m}(\vthe^{*})\vthe^{\dagger}\notag\\
&\le s[\vthe^{\dagger}]^{\top}\big[\nabla\mathcal{L}_{n}^{m}(\vthe^{*}+t\vthe^{\dagger})-\nabla\mathcal{L}_{n}^{m}(\vthe^{*}) \big]\\
&\le \frac{2\zeta}{\zeta+1}s\lambda_{1}\|\vthe^{\dagger}_{T}\|_{1}-\frac{2}{\zeta+1}s\lambda_{1}\|\vthe^{\dagger}_{T^{c}}\|_{1}\notag\\
&\le \frac{\zeta+1}{2}s\lambda_{1}\|\vthe_{T}^{\dagger}\|_{1}^{2},
\end{align*}
which means 
$
    t \e^{-4Kt} \le \frac{(\zeta+1)s\lambda_{1}}{2\overline{F}(\zeta,T)}${ for any } $ t\in (0,\widetilde{t}).
$
 It is easy to verify that $y(t)=t \e^{-4Kt}$ achieves its unique maximum $y_{\max}=\frac{1}{4K \e}$ at $t_{\max}=\frac{1}{4K}$. Based on the event
$
    \mathcal{P} \, \cap \, \Omega_{1}=\left\{\frac{(\zeta+1)s\lambda_{1}}{2\overline{F}(\zeta,T)}\le\frac{1}{4K \e} \right\} \, \cap \, \Omega_{1},
$
we have $\|\widecheck{\vthe}\|_{1}\le \widetilde{t}\le \frac{1}{4K}$, 
%by which we have $e^{-1}\le e^{-4K\widetilde{t}}$, 
and therefore $\|\widecheck{\vthe}\|_{1} \e^{-1}\le \widetilde{t} \e^{-4K\widetilde{t}} \le \frac{(\zeta+1)s\lambda_{1}}{2\overline{F}(\zeta,T)}.$
\end{proof}
\vspace{6ex}

\noindent \textbf{Proof of Lemma \ref{3-9}:}\label{App-A.2}

\begin{proof}
Consider $j$-th element in $\nabla\mathcal{L}_{n}^{m}(\vthe^{*})$, it can be written as
\begin{align*}
\nabla\mathcal{L}_{n,j}^{m}(\vthe^{*})&=-\frac{1}{n}\sum_{i=1}^{n}\varphi_{j}(\Y_{i})+\frac{\sum_{i=1}^{m}w
_{i}(\vthe^{*})\varphi_{j}(\Y_{i})}{\sum_{i=1}^{m}w
_{i}(\vthe^{*})} =-\frac{1}{n}\sum_{i=1}^{n}(\varphi_{j}(\Y_{i})-\operatorname{E}_{\vthe^{*}}\varphi_{j}(\X))+\frac{\sum_{i=1}^{m}w
_{i}(\vthe^{*})(\varphi_{j}(\Y_{i})-\operatorname{E}_{\vthe^{*}}\varphi_{j}(\X))}{\sum_{i=1}^{m}w
_{i}(\vthe^{*})}.
\end{align*}
Define events
\begin{align}
&A_{j}=\left\{\left|\frac{1}{n}\sum_{i=1}^{n}\varphi_{j}(\X_{i})-\operatorname{E}_{\vthe^{*}}\varphi_{j}(\X)\right|\ge\frac{t}{2}\right\}, \quad B_{j}=\left\{\left|\frac{\sum_{i=1}^{m}w
_{i}(\vthe^{*})(\varphi_{j}(\Y_{i})-\operatorname{E}_{\vthe^{*}}\varphi_{j}(\X))}{\sum_{i=1}^{m}w
_{i}(\vthe^{*})}\right|\ge\frac{t}{2}\right\}, \notag \\
&C_{j}=\left\{\left| \frac{1}{m}\sum_{i=1}^{m}w
_{i}(\vthe^{*})(\varphi_{j}(\Y_{i})-\operatorname{E}_{\vthe^{*}}\varphi_{j}(\X))\right|\ge \frac{t}{4}C(\vthe^{*}) \right\} 
\quad\text{and}\quad
D=\left\{\frac{1}{m}\sum_{i=1}^{m}w
_{i}(\vthe^{*})\le \frac{1}{2}C(\vthe^{*}) \right\}.\notag
\end{align}
Note that we have the following relationships:
\begin{itemize}
    \item[(i)] $\big\{\|\nabla\mathcal{L}_{n}^{m}(\vthe^{*})\|_{\infty}\le t\big\}=\bigcap_{j = 1}^p \{|\nabla\mathcal{L}_{n,j}^{m}(\vthe^{*})|\le t\}$;
    \item[(ii)] $\{|\nabla\mathcal{L}_{n,j}^{m}(\vthe^{*})|\ge t\}\subseteq A_{j}\cup B_{j}$;
    \item[(iii)] $B_{j} \backslash D\subseteq C_{j} \, \Longrightarrow \, B_{j}\subset C_{j}\cup D$,
    %\text{ due to the decomposition of } \nabla\mathcal{L}^{m}_{n,j}(\vthe^{*})$
\end{itemize}
% \begin{align*}
% &(i)\{\|\nabla\mathcal{L}_{n}^{m}(\vthe^{*})\|_{\infty}\le t\}=\bigcap_{1\le j\le p}\{|\nabla\mathcal{L}_{n,j}^{m}(\vthe^{*})|\le t\}, \\
% &(ii)\{|\nabla\mathcal{L}_{n,j}^{m}(\vthe^{*})|\ge t\}\subseteq A_{j}\cup B_{j} \text{ due to the decomposition of } \nabla\mathcal{L}^{m}_{n,j}(\vthe^{*}), \\  
% &(iii)\quad B_{j} \backslash D\subseteq C_{j}\longrightarrow B_{j}\subset C_{j}\cup D,
% \end{align*}
which implies $\big\{\|\nabla\mathcal{L}_{n}^{m}(\vthe^{*})\|_{\infty}\ge t\big\} \, \subseteq \bigcup_{j = 1}^p A_{j} \, \cup \,  C_{j} \, \cup \,  D.$
By Hoeffding inequality and $|\varphi_{j}(Y_{i})|\le K$, we have
\begin{equation}\label{A.2-2}
   \pr (A_{j})\le 2\exp\left(-\frac{2\left(\frac{n t}{2}\right)^{2}}{\sum_{i=1}^{n}(2K)^{2}}\right) = 2\e^{-\frac{n t^{2}}{8K^{2}}}.
\end{equation}
Let
$
    g_j(\Y) :=\frac{\exp \{[\vthe^{*}]^{\top}\varphi(\Y)\}(\varphi_{j}(\Y)-\operatorname{E}_{\vthe^{*}}\varphi_{j}(\X))}{h(\Y)}.
$
Note that $|\varphi_{j}(\Y)-\operatorname{E}_{\vthe^{*}}\varphi_{j}(\X) \le |\varphi_{j}(\Y)|+\operatorname{E}_{\vthe^{*}}|\varphi_{j}(\X)| \le 2K.$ Since
\begin{equation*}
    \operatorname{E}_{\Y\sim h}g(\Y) = \int_{\y \in \mathcal{X}} \e^{[\vthe^{*}]^{\top}\varphi(\y)}\varphi_{j}(\y) \, \mathrm{d} \y  -C(\vthe^{*})\operatorname{E}_{\vthe^{*}}\varphi_{j}(\X)=0,
\end{equation*}
%use Lemma \ref{2-17} and let $g_{i}(\Y)=g(\Y)$. 
and  
\begin{equation*}
   \operatorname{E}_{\Y\sim h} g_j^{2}(\Y)\le 4K^{2}\operatorname{E}_{\Y\sim h}\left( \frac{\e^{(\vthe^{*})^{\top}\varphi(\Y)}}{h(\Y)} \right)^{2}=(2KC(\vthe^{*}))^{2}M_{1},
\end{equation*} 
%{\color{red}[What does this mean? Since.. and .. due to..]}
 by Lemma \ref{2-17} with $\|g\|_{\infty}\le 2K M_{2} C(\vthe^{*})$ and $M=2KM_{2}C(\vthe^{*})$,
%$\sigma^{2}\le (2KC(\vthe^{*}))^{2}M_{1}$, 
we obtain
\begin{align}
\pr(C_{j})
&\le 2\exp\left(-\frac{m(\frac{t}{4}C(\vthe^{*}))^{2}}{2(\beta_{1}(2KC(\vthe^{*}))^{2}M_{1}+\beta_{2}2KC(\vthe^{*})M_{2}(\frac{t}{4}C(\vthe^{*}))}\right)= 2\exp\left(-\frac{mt^{2}}{16K(8K\beta_{1}M_{1}+t\beta_{2}M_{2})}\right).  \label{A.2-5}
\end{align}

We define the function
$
q(\Y)=\frac{\e^{[\vthe^{*}]^{\top}\varphi(\Y)}}{h(\Y)}-C(\vthe^{*}).
%\,\text{noticing}\, \operatorname{E}_{\Y\sim h}g(\Y)=0\,\,\text{and}\,\, \|g\|_{\infty}\le  M_{2} C(\vthe^{*}).
$
Since $\E q(\Y) = 0$, $\|q\|_{\infty}\le  M_{2} C(\vthe^{*})$,
\begin{equation*}
    \operatorname{E}_{\Y\sim h} g^{2}(\Y) \le \operatorname{E}_{\Y\sim h} \left( \frac{e^{(\vthe^{*})^{\top}\varphi(\Y)}}{h(\Y)} \right)^{2}=C^{2}(\vthe^{*})M_{1}
\end{equation*}
thus by applying Lemma \ref{2-17} again, 
%let $g_{i}(\Y)=g(\Y)$ and $M=C(\vthe^{*})M_{2}$. Noticing $\sigma^{2}\le C^{2}(\vthe^{*})M_{1}$, 
we obtain
\begin{align}\label{A.2-7}
\pr(D)
&\le \exp\left( -\frac{m(\frac{C(\vthe^{*})}{2})^{2}}{2(\beta_{1}C^{2}(\vthe^{*})M_{1}+\beta_{2}C(\vthe^{*})M_{2}\frac{C(\vthe^{*})}{2})} \right)=\exp\left( -\frac{m}{4(2\beta_{1}M_{1}+\beta_{2}M_{2})} \right).
\end{align}
Combining \eqref{A.2-2}, \eqref{A.2-5} and \eqref{A.2-7}, we conclude
\begin{align*}
\pr\big(\|\nabla\mathcal{L}_{n}^{m}(\vthe^{*})\|_{\infty}\le t \big)
& =1-\pr \big(\|\nabla\mathcal{L}_{n}^{m}(\vthe^{*})\|_{\infty}\ge t \big) \notag  \ge 1-2p \e^{-\frac{n t^{2}}{8K^{2}}}-2p e^{-\frac{mt^{2}}{16K(8K\beta_{1}M_{1}+t\beta_{2}M_{2})}}-\e^{-\frac{m}{4(2\beta_{1}M_{1}+\beta_{2}M_{2})}},
\end{align*}  
which gives the first result in the lemma. For the second inequality in the lemma, according to the reformulation of $\nabla^{2}\mathcal{L}_{n}^{m}(\vthe^{*})$ in \eqref{2-20} and the definition of $H^{*}=\operatorname{cov}_{\vthe^{*}}\varphi(\X)$, we have
\begin{align*}
\nabla\mathcal{L}_{n}^{m}(\vthe^{*})-H^{*}&=\left(\frac{\sum_{i=1}^{m}w_{i}(\vthe)\varphi(\Y_{i})^{\otimes 2}}{\sum_{i=1}^{m}w_{i}(\vthe)}-\operatorname{E}_{\vthe^{*}}\varphi(\X)^{\otimes 2}\right)-\left[\left(\frac{\sum_{i=i}^{m}w_{i}(\vthe)\varphi(\Y_{i})}{\sum_{i=i}^{m}w_{i}(\vthe)}\right)^{\otimes 2}-\left(\operatorname{E}_{\vthe^{*}}\varphi(\X)\right)^{\otimes 2}\right]\\
&=:F+E,
\end{align*}
where
$
F_{k l}=\frac{\frac{1}{m}\sum_{i=i}^{m}w_{i}(\vthe)\left(\varphi_{k}(\Y_{i})\varphi_{l}(\Y_{i})-\operatorname{E}_{\vthe^{*}}\varphi_{k}(\X)\varphi_{l}(\X)\right)}{\frac{1}{m}\sum_{i=i}^{m}w_{i}(\vthe)} $ for $1\le k,l\le m.
$
Similar as the proof in Lemma \ref{3-9}, we define the events
\begin{align*}
&A_{k l}:=\left\{\left|F_{k l}\right|\ge \frac{t}{2}\right\},
B_{k l}:=\left\{\left|\frac{1}{m}\sum_{i=i}^{m}w_{i}(\vthe)\big[ \varphi_{k}(\Y_{i})\varphi_{l}(\Y_{i})-\operatorname{E}_{\vthe^{*}}\varphi_{k}(\X)\varphi_{l}(\X)\big]\right|\ge \frac{t}{4}C(\vthe^{*})\right\}, D:=\left\{\frac{1}{m}\sum_{i=1}^{m}w
_{i}(\vthe^{*})\le \frac{1}{2}C(\vthe^{*}) \right\},
\end{align*}
then $A_{kl}\backslash D\subseteq B_{k l} \, \Longrightarrow \, A_{k l}\subseteq B_{k l}\cup D$. Let
$
    g_{kl}(\Y)=\frac{\e^{[\vthe^{*}]^{\top}\varphi(\Y)}(\varphi_{k}(\Y)\varphi_{l}(\Y)-\operatorname{E}_{\vthe^{*}}\varphi_{k}(\X)\varphi_{l}(\X))}{h(\Y)}.
$
Note that 
\begin{equation*}
    |\varphi_{k}(\Y)\varphi_{l}(\Y)-\operatorname{E}_{\vthe^{*}}\varphi_{k}(\X)\varphi_{l}(\X)|
    \le |\varphi_{k}(\Y)|\cdot|\varphi_{l}(\Y)|+\operatorname{E}_{\vthe^{*}}|\varphi_{k}(\Y)|\cdot|\varphi_{l}(\Y)|\le 2K^{2}.
\end{equation*}
We have
\[
    \operatorname{E}_{\Y\sim h}g_{kl}(\Y) = \int_{\y \in \mathcal{X}} \e^{[\vthe^{*}]^{\top}\varphi(\y)}\varphi_{k}(\y)\varphi_{l}(\y) \, \mathrm{d} \y -C(\vthe^{*})\operatorname{E}_{\vthe^{*}}\varphi_{k}(\x)\varphi_{l}(\x)=0,
\]
and
\[
    \operatorname{E}_{\Y\sim h}g_{kl}^{2}(\Y)\le 4K^{4}\operatorname{E}_{\Y\sim h}\left( \frac{\e^{[\vthe^{*}]^{\top}\varphi(\Y)}}{h(\Y)} \right)^{2}=(2K^{2}C(\vthe^{*}))^{2}M_{1}.
\]
By Lemma \ref{2-17} with $\|g_{kl}\|_{\infty}\le 2K^{2}C(\vthe^{*})M_{2}$, we have
\begin{align*}
\pr \left( B_{kl}\right)
&\le 2  \exp \left( -\frac{m(\frac{t}{4}C(\vthe^{*}))^{2}}{2(\beta_{1}(2K^{2}C(\vthe^{*}))^{2}M_{1}+\beta_{2}2K^{2}C(\vthe^{*})M_{2}(\frac{t}{4}C(\vthe^{*})) )} \right) = 2\exp \left( -\frac{m t^{2}}{16 K^{2}( 8K^{2}\beta_{1}M_{1}+ t\beta_{2}M_{2} )} \right).
\end{align*}
Thus,
\begin{align}\label{A.3-5}
    \pr \left(\|F\|_{\infty}\ge \frac{t}{2}\right) = \pr\left( \bigcup_{1\le k,l\le p} \big[ B_{kl} \, \cup \,  D \big] \right)\le 2p^{2} \e^{-\frac{m t^{2}}{16 K^{2}( 8K^{2}\beta_{1}M_{1}+ t\beta_{2}M_{2} )}} + \e^{-\frac{m}{4(2\beta_{1}M_{1}+\beta_{2}M_{2})}},
\end{align}
where we use the upper bound for $\pr(D)$ in the proof of Lemma \ref{3-9} in the last step. Now we turn to bound $\|E\|_{\infty}$. We rewrite its elements as
\[
\begin{aligned}
E_{kl}
&=\left(\frac{\sum_{i=1}^{m}w
_{i}(\vthe^{*})\varphi_{k}(\Y_{i})}{\sum_{i=1}^{m}w
_{i}(\vthe^{*})}\right)\left(\frac{\sum_{i=1}^{m}w
_{i}(\vthe^{*})\varphi_{l}(\Y_{i})}{\sum_{i=1}^{m}w
_{i}(\vthe^{*})}\right)-\operatorname{E}_{\vthe^{*}}\varphi_{k}(\X)\cdot \operatorname{E}_{\vthe^{*}}\varphi_{l}(\X)\notag\\
&=\left(\frac{\sum_{i=1}^{m}w
_{i}(\vthe^{*})\varphi_{k}(\Y_{i})}{\sum_{i=1}^{m}w
_{i}(\vthe^{*})}-\operatorname{E}_{\vthe^{*}}\varphi_{k}(\X)\right)\left(\frac{\sum_{i=1}^{m}w
_{i}(\vthe^{*})\varphi_{l}(\Y_{i})}{\sum_{i=1}^{m}w
_{i}(\vthe^{*})}\right)+\left(\frac{\sum_{i=1}^{m}w
_{i}(\vthe^{*})\varphi_{l}(\Y_{i})}{\sum_{i=1}^{m}w
_{i}(\vthe^{*})}-\operatorname{E}_{\vthe^{*}}\varphi_{l}(\X)\right)\operatorname{E}_{\vthe^{*}}\varphi_{k}(\X).
\end{aligned}
\]
Therefore,
\begin{align*}
\| E\|_{\infty} & = \max_{k, l \in [p]}|E_{kl}| \le K \max_{k \in [p]} \left| \frac{\sum_{i=1}^{m}w
_{i}(\vthe^{*})\varphi_{k}(\Y_{i})}{\sum_{i=1}^{m}w
_{i}(\vthe^{*})}-\operatorname{E}_{\vthe^{*}}\varphi_{k}(\X) \right|+K \max_{l \in [p]} \left| \frac{\sum_{i=1}^{m}w
_{i}(\vthe^{*})\varphi_{l}(\Y_{i})}{\sum_{i=1}^{m}w
_{i}(\vthe^{*})}-\operatorname{E}_{\vthe^{*}}\varphi_{l}(\X) \right|  \notag\\
&\le 2K \left\| \frac{\sum_{i=1}^{m}w
_{i}(\vthe^{*})\varphi(\Y_{i})}{\sum_{i=1}^{m}w
_{i}(\vthe^{*})}-\operatorname{E}_{\vthe^{*}}\varphi(\X) \right\|_{\infty}.
\end{align*}
Similar to the proof in Lemma \ref{3-9}, we establish the upper bound for $\|E\|_{\infty}$
\begin{align}\label{A.3-7}
\pr \big(\|E\|_{\infty} \ge {t} / {2} \big)
&\le \pr \left( \left\| \frac{\sum_{i=1}^{m}w
_{i}(\vthe^{*})\varphi(\Y_{i})}{\sum_{i=1}^{m}w
_{i}(\vthe^{*})}-\operatorname{E}_{\vthe^{*}}\varphi(\X) \right\|_{\infty} \ge \frac{t}{4K} \right) \le 2p \e^{-\frac{mt^{2}}{64K^{3}(8K\beta_{1}M_{1}+t\beta_{2}M_{2})}} + \e^{-\frac{m}{4(2\beta_{1}M_{1}+\beta_{2}M_{2})}}.
\end{align}
Finally, by using \eqref{A.3-5} and \eqref{A.3-7}, we have
\begin{align*}
    \pr(\|\nabla\mathcal{L}_{n}^{m}(\vthe^{*})-H^{*}\|_{\infty}\le t) &\ge 1-2p^{2}\e^{-\frac{m t^{2}}{16 K^{2}( 8K^{2}\beta_{1}M_{1}+ t\beta_{2}M_{2} )}} - \e^{-\frac{m}{4(2\beta_{1}M_{1}+\beta_{2}M_{2})}}  -2p \e^{-\frac{mt^{2}}{64K^{3}(8K\beta_{1}M_{1}+t\beta_{2}M_{2})}} - \e^{-\frac{m}{4(2\beta_{1}M_{1}+\beta_{2}M_{2})}},
%&=1-2e^{-\frac{m}{4(2\beta_{1}M_{1}+\beta_{2}M_{2})}}-2p^{2}e^{-\frac{m t^{2}}{16 K^{2}( 8K^{2}\beta_{1}M_{1}+ t\beta_{2}M_{2} )}}-2p e^{-\frac{mt^{2}}{64K^{3}(8K\beta_{1}M_{1}+t\beta_{2}M_{2})}}.
\end{align*}
which concludes this result.
\end{proof}
\vspace{6ex}

\noindent \textbf{Proof of Theorem \ref{3-14}}\label{App-A.5}
\vspace{2ex}

\begin{proof}
Define the events
$
    \Omega_{2}:=\left\{ s(\zeta+1)^{2}\|\nabla^{2}\mathcal{L}_{n}^{m}(\vthe^{*})-H^{*}\|_{\infty}\le \frac{C_{\min}}{2} \right\}$ and $ \Omega_{3}:=\left\{ \frac{(\zeta+1)s\lambda_{1}}{C_{\min}}\le \frac{1}{4K \e} \right\}$.
Conditional on event $\Omega_{2}$, we have
\begin{equation*}
    \overline{F}(\zeta,T)\ge F(\zeta,T)-s(\zeta+1)^{2}\|\nabla^{2}\mathcal{L}_{n}^{m}(\vthe^{*}) - H^{*}\|_{\infty}\ge C_{\min}-\frac{C_{\min}}{2}=\frac{C_{\min}}{2},
\end{equation*}
and on the event $\Omega_{2} \, \cap \, \Omega_{3}$, we have
\begin{equation*}
    \frac{(\zeta+1)s\lambda_{1}}{2\overline{F}(\zeta,T)}\le \frac{(\zeta+1)s\lambda_{1}}{C_{\min}}\le \frac{1}{4K \e}.
\end{equation*}
Therefore, by Lemma \ref{3-8},  we have 
$
    \|\widehat{\vthe} - \vthe^* \|_{1}\le \frac{\e (\zeta+1)s\lambda_{1}}{2\overline{F}(\zeta,T)}\le 
    \frac{\e(\zeta+1)s\lambda_{1}}{C_{\min}},
$
on the event $\Omega_{1} \, \cap \, \Omega_{2} \, \cap \, \Omega_{3}$. Under Assumption \ref{3-6}, $z^{*}\le\|\nabla\mathcal{L}(\vthe^{*})\|_{\infty}+2\lambda_{2}B$. Thus by Lemma \ref{3-9} with $\tau_{1}=\frac{\zeta-1}{\zeta+1}\lambda_{1} -2\lambda_{2}B$, we have
\begin{align*}
\pr \left(\Omega_{1}^{c}\right)
& = \pr \left(\left\{z^{*}\ge \frac{\zeta-1}{\zeta+1}\lambda_{1}\right\}\right)\le
\pr \left( \|\nabla\mathcal{L}(\vthe^{*})\|_{\infty}\ge \frac{\zeta-1}{\zeta+1}\lambda_{1} -2\lambda_{2}B \right) \\
&\le 2p \e^{-\frac{n \tau_{1}^{2}}{8K^{2}}} + 2p \e^{-\frac{m\tau_{1}^{2}}{16K(8K\beta_{1}M_{1}+\tau_{1}\beta_{2}M_{2})}} + \e^{-\frac{m}{4(2\beta_{1}M_{1}+\beta_{2}M_{2})}}=:\delta_{1}.
\end{align*}
For $\Omega_{2}$, note that $\tau_{2}=\frac{C_{\min}}{2s(\zeta+1)^{2}}$, by Lemma \ref{3-10},
\begin{align*}
    \pr (\Omega_{2}^{c})
    &=\pr \left(s(\zeta+1)^{2}\|\nabla^{2}\mathcal{L}_{n}^{m}-H^{*}\|_{\infty}\ge \frac{C_{\min}}{2}\right) = \pr (\|\nabla^{2}\mathcal{L}_{n}^{m}-H^{*}\|_{\infty}\ge\tau_{2}) \\
    &\le 2 \e^{-\frac{m}{4(2\beta_{1}M_{1}+\beta_{2}M_{2})}}+2p^{2}\e^{-\frac{m \tau_{2}^{2}}{16 K^{2}( 8K^{2}\beta_{1}M_{1}+ \tau_{2}\beta_{2}M_{2} )}}+2p \e^{-\frac{m\tau_{2}^{2}}{64K^{3}(8K\beta_{1}M_{1}+\tau_{2}\beta_{2}M_{2})}}=:\delta_{2}.
\end{align*}
By $\frac{(\zeta+1)s\lambda_{1}}{C_{\min}}\le\frac{1}{4K e}$ in the theorem, we have $\pr(\Omega_{3})=1$. Thus,
\begin{equation*}
\pr \left( \|\widehat{\vthe} - \vthe^* \|_{1} \le 
    \frac{\e(\zeta+1)s\lambda_{1}}{C_{\min}} \right) \ge \pr \left(  \Omega_{1}\cap\Omega_{2}\cap\Omega_{3}\right) = \pr \left(  \Omega_{1}\cap\Omega_{2}\right)\ge 1-\pr(\Omega_{1}^{c}) - \pr(\Omega_{2}^{c}) \geq 1 - \delta_1 - \delta_2,
\end{equation*}
which completes the proof.
\end{proof}

\section{Proofs of Theorem and Lemmas in Section \ref{sec-4}}\label{App-B}
\textbf{Proof of Lemma \ref{4-1}:}\label{App-B.1}
 
\begin{proof}
Since $\var\left( \frac{1}{n}\sum_{i=1}^{n}\bm{v}^{\top}\varphi(\X_{i}) \right) = \frac{1}{n}\bm{v}^{\top}H^{*}\bm{v}$.
According to the reformulation of $\nabla\mathcal{L}_{n}^{m}(\vthe^{*})$ in \eqref{2-19},
\begin{align*}
\frac{\sqrt{n}\bm{v}^{\top}\nabla\mathcal{L}_{n}^{m}(\vthe^{*})}{\sqrt{\bm{v}^{\top}H^{*}\bm{v}}}
&=\frac{-\frac{1}{n}\sum_{i=1}^{n}\bm{v}^{\top}(\varphi(\Y_{i})-\operatorname{E}_{\vthe^{*}}\varphi(\Y))}{\sqrt{\operatorname{var}(\frac{1}{n}\sum_{i=1}^{n}\bm{v}^{\top}\varphi(\X_{i}))}}+\frac{\sqrt{n}}{\sqrt{\bm{v}^{\top}H^{*}\bm{v}}}\frac{\sum_{i=1}^{m}w_{i}(\vthe)\bm{v}^{\top}(\varphi(\Y_{i})-\operatorname{E}_{\vthe^{*}}\varphi(\Y))}{\sum_{i=1}^{m}w_{i}(\vthe)}\notag\\
&=:A+B.
\end{align*}
By the central limit theorem, we have $A \, \rightsquigarrow \, \mathcal{N}(0,1)$.
If we can show $B=o_{p}(1)$, we will get the desired result due to Slutsky's theorem.
Indeed, using the fact $\|\bm{v}\|_{1}\le \sqrt{\|\bm{v}\|_{0}} \|\bm{v}\|_{2}$, we have
\begin{equation*}
|B|\le \frac{\sqrt{n}\sqrt{\|\bm{v}\|_{0}}}{\sqrt{\lambda_{\min}}} \left\| \frac{\sum_{i=1}^{m}w_{i}(\vthe)(\varphi(\Y_{i})-\operatorname{E}_{\vthe^{*}}\varphi(\X))}{\sum_{i=1}^{m}w_{i}(\vthe)} \right\|_{\infty}
\end{equation*}
Similar to the proof Lemma \ref{3-9}, we can show that
\begin{equation*}
\pr \left( \left\| \frac{\sum_{i=1}^{m}w_{i}(\vthe)(\varphi(\Y_{i})-\operatorname{E}_{\vthe^{*}}\varphi(\X))}{\sum_{i=1}^{m}w_{i}(\vthe)} \right\|_{\infty}\ge t\right) \le 2p\e^{-\frac{mt^{2}}{8K(4K\beta_{1}M_{1}+t\beta_{2}M_{2})}}+\e^{-\frac{m}{4(2\beta_{1}M_{1}+\beta_{2}M_{2})}}.
\end{equation*}
Therefore, for any $\epsilon > 0$, we have
\begin{align*}
\pr (|B|\ge \epsilon)
& \le \pr \left( \left\| \frac{\sum_{i=1}^{m}w_{i}(\vthe)(\varphi(\Y_{i})-\operatorname{E}_{\vthe^{*}}\varphi(\X))}{\sum_{i=1}^{m}w_{i}(\vthe)} \right\|_{\infty}\ge \frac{\epsilon\sqrt{\lambda_{\min}}}{\sqrt{n}\sqrt{\|\bm{v}\|_{0}}} \right) \\
&\le 2p\exp\left(-\frac{\lambda_{\min}\epsilon^{2}m}{8K\|\bm{v}\|_{0}n\left( 4K\beta_{1}M_{1}+\beta_{2}M_{2}\left(\frac{\epsilon\sqrt{\lambda_{\min}}}{\sqrt{n\|\bm{v}\|_{0}}}\right)\right)} \right)+\exp\left(-\frac{m}{4(2\beta_{1}M_{1}+\beta_{2}M_{2})}\right) \\
&=2pI_{1}+I_{2}.
\end{align*}
With the assumption $\|\bm{v}\|_{0}=O(1)$ and $\frac{m}{n} \gtrsim \log p$, there exists a constant $r_{3}>1$ such that
\[
    \frac{\lambda_{\min}\epsilon^{2}m}{8K\|\bm{v}\|_{0}n\left( 4K\beta_{1}M_{1}+\beta_{2}M_{2} \right)}>r_{3}\log p,
\]
for sufficiently large $n$.
%then we have for sufficient large $n$, 
Thus, we have $2p I_{1}\le 2p^{1-r_{3}} \to 0$ and $B=o_{p}(1)$. 
\end{proof}
\vspace{6ex}

\noindent \textbf{Proof of Lemma \ref{4-4}}\label{App-B.2}
\begin{proof}
Define the function $M :\bm{w} \, \mapsto \, \frac{1}{2}\bm{w}^{\top}\nabla^{2}_{\vbeta\vbeta}\mathcal{L}_{n}^{m}(\widehat{\vthe})\bm{w}-\bm{w}^{\top}\nabla^{2}_{\alpha\vbeta}\mathcal{L}_{n}^{m}(\widehat{\vthe})+\lambda^{\prime}\|\bm{w}\|_{1}$ mapping from $\mathbb{R}^p$ to $\mathbb{R}$. Let $\widehat{\boldsymbol{\Delta}}:=\widehat{\bm{w}}-\bm{w}^{*}$. Note that $\widehat{\bm{w}}$ minimizes $M(\bm{w})$, and thus we have $M(\widehat{\bm{w}})\le M(\bm{w}^{*})$, which implies 
\[
    \begin{aligned}
        \frac{1}{2}\widehat{\boldsymbol{\Delta}}^{\top}\nabla^{2}_{\vbeta \vbeta}\mathcal{L}_{n}^{m}(\widehat{\vthe})\widehat{\boldsymbol{\Delta}} & \leq \widehat{\boldsymbol{\Delta}}^{\top}\nabla^{2}_{\alpha\vbeta}\mathcal{L}_{n}^{m}(\widehat{\vthe})-\widehat{\boldsymbol{\Delta}}^{\top}\nabla^{2}_{\vbeta\vbeta}\mathcal{L}_{n}^{m}(\widehat{\vthe})\bm{w}^{*}+\lambda^{\prime}\|\bm{w}^{*}\|_{1}-\lambda^{\prime}\|\widehat{\bm{w}}\|_{1} \\
        & \leq \widehat{\boldsymbol{\Delta}}^{\top}\big(\nabla^{2}_{\alpha\vbeta}\mathcal{L}_{n}^{m}(\vthe^{*})-\nabla^{2}_{\vbeta\vbeta}\mathcal{L}_{n}^{m}(\vthe^{*})\bm{w}^{*}\big)+\lambda^{\prime}\|\bm{w}^{*}\|_{1}-\lambda^{\prime}\|\widehat{\bm{w}}\|_{1} \\
        & ~~~~~~~ + \widehat{\boldsymbol{\Delta}}^{\top}\big(\nabla^{2}_{\alpha\vbeta}\mathcal{L}_{n}^{m}(\widehat{\vthe})-\nabla^{2}_{\alpha\vbeta}\mathcal{L}_{n}^{m}(\vthe^{*}) \big) - \widehat{\boldsymbol{\Delta}}^{\top}(\nabla^{2}_{{\vbeta\vbeta}}\mathcal{L}_{n}^{m}(\widehat{\vthe})-\nabla^{2}_{\alpha\vbeta}\mathcal{L}_{n}^{m}(\vthe^{*}))\bm{w}^{*}\\
        &:=I_{1}+I_{2}+I_{3}-I_{4},
    \end{aligned}
\]
% \begin{align*}
% \frac{1}{2}\widehat{\boldsymbol{\Delta}}^{\top}\nabla^{2}_{\vbeta \vbeta}\mathcal{L}_{n}^{m}(\widehat{\vthe})\widehat{\boldsymbol{\Delta}}
% &\le \widehat{\boldsymbol{\Delta}}^{\top}\nabla^{2}_{\alpha\vbeta}\mathcal{L}_{n}^{m}(\widehat{\vthe})-\widehat{\boldsymbol{\Delta}}^{\top}\nabla^{2}_{\vbeta\vbeta}\mathcal{L}_{n}^{m}(\widehat{\vthe})\bm{w}^{*}+\lambda^{\prime}\|\bm{w}^{*}\|_{1}-\lambda^{\prime}\|\widehat{\bm{w}}\|_{1} \notag\\
% &=\widehat{\boldsymbol{\Delta}}^{\top}(\nabla^{2}_{\alpha\vbeta}\mathcal{L}_{n}^{m}(\vthe^{*})-\nabla^{2}_{\vbeta\vbeta}\mathcal{L}_{n}^{m}(\vthe^{*})\bm{w}^{*})+\lambda^{\prime}\|\bm{w}^{*}\|_{1}-\lambda^{\prime}\|\widehat{\bm{w}}\|_{1}+ \\
% &+\widehat{\boldsymbol{\Delta}}^{\top}(\nabla^{2}_{\alpha\vbeta}\mathcal{L}_{n}^{m}(\widehat{\vthe})-\nabla^{2}_{\alpha\vbeta}\mathcal{L}_{n}^{m}(\vthe^{*}))-\widehat{\boldsymbol{\Delta}}^{\top}(\nabla^{2}_{{\vbeta\vbeta}}\mathcal{L}_{n}^{m}(\widehat{\vthe})-\nabla^{2}_{\alpha\vbeta}\mathcal{L}_{n}^{m}(\vthe^{*}))\bm{w}^{*}\\
% &=I_{1}+I_{2}+I_{3}-I_{4},
% \end{align*}
where $I_{1}=\widehat{\boldsymbol{\Delta}}^{\top}\big(\nabla^{2}_{\alpha\vbeta}\mathcal{L}_{n}^{m}(\vthe^{*})-\nabla^{2}_{\vbeta\vbeta}\mathcal{L}_{n}^{m}(\vthe^{*})\bm{w}^{*}\big)$, $I_{2}=\lambda^{\prime}\|\bm{w}^{*}\|_{1}-\lambda^{\prime}\|\widehat{\bm{w}}\|_{1}$, $I_{3} = \widehat{\boldsymbol{\Delta}}^{\top}\big(\nabla^{2}_{\alpha\vbeta}\mathcal{L}_{n}^{m}(\widehat{\vthe})-\nabla^{2}_{\alpha\vbeta}\mathcal{L}_{n}^{m}(\vthe^{*}) \big)$, and $I_{4}=\widehat{\boldsymbol{\Delta}}^{\top}(\nabla^{2}_{{\vbeta\vbeta}}\mathcal{L}_{n}^{m}(\widehat{\vthe})-\nabla^{2}_{\alpha\vbeta}\mathcal{L}_{n}^{m}(\vthe^{*}))\bm{w}^{*}$.
For $I_{1}$, we have
\begin{align}\label{B.2-1}
|I_{1}|
&\le \|\widehat{\boldsymbol{\Delta}}\|_{1}\cdot\big\|\nabla^{2}_{\alpha\vbeta}\mathcal{L}_{n}^{m}(\vthe^{*})-\nabla^{2}_{\vbeta\vbeta}\mathcal{L}_{n}^{m}(\vthe^{*})\bm{w}^{*} \big\|_{\infty} \notag \\
&= \big\|\widehat{\boldsymbol{\Delta}}\|_{1}\cdot\|(\nabla^{2}_{\alpha\vbeta}\mathcal{L}_{n}^{m}(\vthe^{*})-H^{*}_{\alpha\vbeta})-(\nabla^{2}_{\vbeta\vbeta}\mathcal{L}_{n}^{m}(\vthe^{*})-H^{*}_{\vbeta\vbeta})\bm{w}^{*} \big\|_{\infty} \notag \\
&\le \|\widehat{\boldsymbol{\Delta}}\|_{1}\cdot\big(\|\nabla^{2}_{\alpha\vbeta}\mathcal{L}_{n}^{m}(\vthe^{*})-H^{*}_{\alpha\vbeta} \|_{\infty}+s^{\prime} B \| \nabla^{2}_{\vbeta\vbeta}\mathcal{L}_{n}^{m}(\vthe^{*})-H^{*}_{\vbeta\vbeta}\|_{\infty}\big) \notag\\
& \le C \|\widehat{\boldsymbol{\Delta}}\|_{1} \cdot \| \nabla^{2}\mathcal{L}_{n}^{m}(\vthe^{*})-H^{*}    \|_{\infty} \notag\\
& \lesssim \left(\sqrt{\frac{\log p}{n}}+\sqrt{\frac{\log p}{m}} \right)\|\widehat{\boldsymbol{\Delta}}\|_{1},
\end{align}
where the last inequality holds by  \eqref{3-11}. For $I_{2}$, let $S \subseteq [p]$ be the support of $\bm{w}^{*}$. By the fact $\|\bm{w}^{*}_{S} \|_{1}-\|\widehat{\bm{w}}_{S}\|_{1}\le \|\widehat{\boldsymbol{\Delta}}_{S}\|_{1}$ and $\widehat{\boldsymbol{\Delta}}_{S^{c}}=\widehat{\bm{w}}_{S^{c}}$, we have
\begin{align}\label{B.2-2}
|I_{2}| &=\lambda^{\prime}\|\bm{w}^{*}\|_{1}-\lambda^{\prime}\|\widehat{\bm{w}}\|_{1} =\lambda^{\prime}\|\bm{w}^{*}_{S} \|_{1}-\lambda^{\prime}\|\widehat{\bm{w}}_{S}\|_{1}-\lambda^{\prime}\|\widehat{\bm{w}}_{S^{c}}\|_{1}\le \lambda^{\prime}\|\widehat{\boldsymbol{\Delta}}_{S}\|_{1}-\lambda^{\prime}\|\widehat{\boldsymbol{\Delta}}_{S^{c}}\|_{1}.
\end{align}

Next for $I_{3}$,
%note that the same treatment can be applied to $I_{4}.$
according to the reformulation of $\nabla^{2}\mathcal{L}_{n}^{m}(\vthe)$ in \eqref{2-20}, we rewrite 
$\nabla^{2}_{\alpha\vbeta}\mathcal{L}_{n}^{m}(\widehat{\vthe})$ as
\begin{equation*}
\nabla^{2}_{\alpha\vbeta}\mathcal{L}_{n}^{m}(\widehat{\vthe})=\frac{\sum_{i=1}^{m}w_{i}(\widehat{\vthe})(\varphi_{\alpha}(\Y_{i})-\overline{\varphi}_{\alpha})(\varphi_{\vbeta}(\Y_{i})-\overline{\varphi}_{\vbeta})}{\sum_{i=1}^{m}w_{i}(\widehat{\vthe})},
\end{equation*}
where $\varphi(\Y_{i})=(\varphi_{\alpha}(\Y_{i}),\varphi_{\vbeta}(\Y_{i})^{\top})^{\top}$ and $\overline{\varphi}=(\overline{\varphi}_{\alpha},\overline{\varphi}_{\vbeta}^{\top})^{\top}$.
Define $g_{i}=\widecheck{\vthe}^{\top}(\varphi(\Y_i) -\overline{\varphi})$, $w_{i}=w_{i}(\vthe^{*}) $, $b_{i}=\varphi_{\alpha}(\Y_{i})-\overline{\varphi}_{\alpha}$, and $ h_{i}:=\widehat{\boldsymbol{\Delta}}^{\top}(\varphi_{\vbeta}(\Y_{i})-\overline{\varphi}_{\vbeta})$.
Note that
$
w_{i}(\widehat{\vthe})=\frac{\e^{\widecheck{\vthe}^{\top}\varphi(\Y_{i})}\e^{{\vthe}^{*\top}\varphi(\Y_{i})}}{h(\Y_{i})}= \e^{\widecheck{\vthe}^{\top}\varphi(\Y_{i})}w_{i}(\vthe^{*}),
$
then we reformulate $\widehat{\boldsymbol{\Delta}}^{\top}\nabla^{2}_{\alpha\vbeta}\mathcal{L}_{n}^{m}(\widehat{\vthe})$ and $\widehat{\boldsymbol{\Delta}}^{\top}\nabla^{2}_{\alpha\vbeta}\mathcal{L}_{n}^{m}(\vthe^{*})$ as
$
\widehat{\boldsymbol{\Delta}}^{\top}\nabla^{2}_{\alpha\vbeta}\mathcal{L}_{n}^{m}(\widehat{\vthe})=\frac{\sum_{i=1}^{m}w_{i}b_{i}h_{i}\e^{g_{i}}}{\sum_{i=1}^{m}w_{i}\e^{g_{i}}}$, and $
\widehat{\boldsymbol{\Delta}}^{\top}\nabla^{2}_{\alpha\vbeta}\mathcal{L}_{n}^{m}(\vthe^{*})=\frac{\sum_{i=1}^{m}w_{i}b_{i}h_{i}}{\sum_{i=1}^{m}w_{i}},
$
respectively. Thus, for $I_{3}$, we have
\begin{align*}
|I_{3}|
&\le \left| \frac{\sum_{i=1}^{m}w_{i}b_{i}h_{i}(\e^{g_{i}}-1)}{\sum_{i=1}^{m}w_{i}} \right|+\left| \left(\sum_{i=1}^{m}w_{i}b_{i}h_{i} \e^{g_{i}}\right)\left(\frac{1}{\sum_{i=1}^{m}w_{i}\e^{g_{i}}}-\frac{1}{\sum_{i=1}^{m}w_{i}}\right) \right|=:I_{31}+I_{32}.
\end{align*}
Since $\|\varphi \|_{\infty} \leq K$, by Cauchy' inequality
\[
    \begin{aligned}
        |I_{31}|&\le 2K \left| \frac{\sum_{i=1}^{m}w_{i}h_{i}(\e^{g_{i}}-1)}{\sum_{i=1}^{m}w_{i}} \right| \le 2K \frac{\sqrt{\sum_{i=1}^{m}w_{i}h_{i}^{2}}}{\sqrt{\sum_{i=1}^{m}w_{i}}}\cdot\frac{\sqrt{\sum_{i=1}^{m}w_{i}(\e^{g_{i}}-1)^{2}}}{\sqrt{\sum_{i=1}^{m}w_{i}}} \lesssim \frac{\sqrt{\sum_{i=1}^{m}w_{i}h_{i}^{2}}}{\sqrt{\sum_{i=1}^{m}w_{i}}}\cdot\frac{\sqrt{\sum_{i=1}^{m}w_{i}g_{i}^{2}}}{\sqrt{\sum_{i=1}^{m}w_{i}}},
    \end{aligned}
\]
% \begin{align*}
% |I_{31}|
% &\le 2K \left| \frac{\sum_{i=1}^{m}w_{i}h_{i}(e^{g_{i}}-1)}{\sum_{i=1}^{m}w_{i}} \right|
% \le 2K \frac{\sqrt{\sum_{i=1}^{m}w_{i}h_{i}^{2}}}{\sqrt{\sum_{i=1}^{m}w_{i}}}\cdot\frac{\sqrt{\sum_{i=1}^{m}w_{i}(e^{g_{i}}-1)^{2}}}{\sqrt{\sum_{i=1}^{m}w_{i}}} \\
% &\lesssim \frac{\sqrt{\sum_{i=1}^{m}w_{i}h_{i}^{2}}}{\sqrt{\sum_{i=1}^{m}w_{i}}}\cdot\frac{\sqrt{\sum_{i=1}^{m}w_{i}g_{i}^{2}}}{\sqrt{\sum_{i=1}^{m}w_{i}}},
% \end{align*}
where last inequality holds by the fact that $e^{g_{i}}-1\asymp g_{i}$, $g_{i} \asymp \widecheck{\vthe} = o_p(1)$. For the above bound, we rewrite
\begin{align*}
\frac{\sum_{i=1}^{m}w_{i}g_{i}^{2}}{\sum_{i=1}^{m}w_{i}}
&=\widecheck{\vthe}^{\top}\nabla^{2}\mathcal{L}_{n}^{m}(\vthe^{*})\widecheck{\vthe} \le \widecheck{\vthe}^{\top} H^{*}\widecheck{\vthe} + \big|\widecheck{\vthe}^{\top}(\nabla^{2}\mathcal{L}_{n}^{m}(\vthe^{*})-H^{*})\widecheck{\vthe}\big| \le \lambda_{\max}\|\widecheck{\vthe}\|_{1}^{2}+\|\nabla^{2}\mathcal{L}_{n}^{m}(\vthe^{*})-H^{*}\|_{\infty} \|\widecheck{\vthe}\|_{1}^{2} \lesssim \|\widecheck{\vthe}\|_{1}^{2}.
\end{align*}
% By Corollary \ref{3-14} and the fact that
% \begin{equation*}
% \frac{\sum_{i=1}^{m}w_{i}h_{i}^{2}}{\sum_{i=1}^{m}w_{i}}=\widehat{\boldsymbol{\Delta}}^{\top}\nabla^{2}_{\vbeta\vbeta}\mathcal{L}_{n}^{m}(\vthe^{*})\widehat{\boldsymbol{\Delta}},
% \end{equation*}
% we have an upper bound for $I_{31}$,
By Corollary \ref{3-14}, 
\begin{equation}\label{B.2-3}
    \begin{aligned}
        |I_{31}| & \lesssim \|\widecheck{\vthe}\|_{1} \sqrt{\frac{\sum_{i=1}^{m}w_{i}h_{i}^{2}}{\sum_{i=1}^{m}w_{i}}}  \lesssim s\left(\sqrt{\frac{\log p}{n}}+\sqrt{\frac{\log p}{m}}+ \frac{\log p}{m}\right)\sqrt{\widehat{\boldsymbol{\Delta}}^{\top}\nabla^{2}_{\vbeta\vbeta}\mathcal{L}_{n}^{m}(\vthe^{*})\widehat{\boldsymbol{\Delta}}}.
    \end{aligned}
\end{equation}
Similarly for $I_{32}$, we have
\begin{equation}\label{B.2-4}
    \begin{aligned}
        I_{32} &=\left| \frac{\sum_{i=1}^{m}w_{i}b_{i}h_{i} \e^{g_{i}}}{\sum_{i=1}^{m}w_{i}\e^{g_{i}}} \right|\cdot\left| \frac{\sum_{i=1}^{m}w_{i}(\e^{g_{i}}-1)}{\sum_{i=1}^{m}w_{i}} \right|  \lesssim \sqrt{\frac{\sum_{i=1}^{m}w_{i}h_{i}^{2}}{\sum_{i=1}^{m}w_{i}}}\cdot\sqrt{\frac{\sum_{i=1}^{m}w_{i}g_{i}^{2}}{\sum_{i=1}^{m}w_{i}}} \lesssim s\left(\sqrt{\frac{\log p}{n}}+\sqrt{\frac{\log p}{m}} + \frac{\log p}{m}\right)\sqrt{\widehat{\boldsymbol{\Delta}}^{\top}\nabla^{2}_{\vbeta\vbeta}\mathcal{L}_{n}^{m}(\vthe^{*})\widehat{\boldsymbol{\Delta}}}.
    \end{aligned}
\end{equation}
% \begin{align*}
% I_{32}
% &=\left| \frac{\sum_{i=1}^{m}w_{i}b_{i}h_{i} \e^{g_{i}}}{\sum_{i=1}^{m}w_{i}\e^{g_{i}}} \right|\cdot\left| \frac{\sum_{i=1}^{m}w_{i}(e^{g_{i}}-1)}{\sum_{i=1}^{m}w_{i}} \right| \lesssim \sqrt{\frac{\sum_{i=1}^{m}w_{i}h_{i}^{2}}{\sum_{i=1}^{m}w_{i}}}\cdot\sqrt{\frac{\sum_{i=1}^{m}w_{i}g_{i}^{2}}{\sum_{i=1}^{m}w_{i}}},
% \end{align*}
% where we use the fact $e^{g_{i}}-1\asymp g_{i}$ and $e^{g_{i}}\asymp 1$ when $g_{i}=o(1)$. With the same argument for $I_{31}$, we have
% \begin{equation}\label{B.2-4}
% |I_{32}|\lesssim s\left(\sqrt{\frac{\log p}{n}}+\sqrt{\frac{\log p}{m}}\right)\sqrt{\widehat{\boldsymbol{\Delta}}^{\top}\nabla^{2}_{\vbeta\vbeta}\mathcal{L}_{n}^{m}(\vthe^{*})\widehat{\boldsymbol{\Delta}}}.
% \end{equation}
Therefore, according to \eqref{B.2-3} and \eqref{B.2-4}, we have
\begin{equation}\label{B.2-5}
|I_{3}|\lesssim s\left(\sqrt{\frac{\log p}{n}}+\sqrt{\frac{\log p}{m}} + \frac{\log p}{m}\right)\sqrt{\widehat{\boldsymbol{\Delta}}^{\top}\nabla^{2}_{\vbeta\vbeta}\mathcal{L}_{n}^{m}(\vthe^{*})\widehat{\boldsymbol{\Delta}}}.    
\end{equation}
Combining \eqref{B.2-1}, \eqref{B.2-2} and \eqref{B.2-5},  we get the conclusion with a similar proof of Lemma 1 in \cite{fang2017testing}.
\end{proof}
\vspace{6ex}
%\subsection{A Technical Lemma}\label{App-B.3}

%For proving Theorem \ref{4-11}, we need the following lemma. 

\begin{lemma}\label{B.3-1}
Suppose Assumptions \ref{3-4}-\ref{3-6}, \ref{4-5}-\ref{4-6} hold. Let $\lambda_{1} \asymp \left(\sqrt{\frac{\log p}{n}}+\sqrt{\frac{\log p}{m}} \right)$, $\lambda_{1}\asymp\lambda_{2}\asymp\lambda^{\prime}=o(1)$ and $s=s^{\prime}=O(1)$. We have
\begin{align*}
& \big\| \nabla^{2}_{\alpha\vbeta}\mathcal{L}_{n}^{m}(\widehat{\vthe})-\bm{w}^{*\top}\nabla^{2}_{\vbeta\vbeta}\mathcal{L}_{n}^{m}(\widehat{\vthe}) \big\|_{\infty}=O_{p}\left( s \left( \sqrt{\frac{\log p}{n}}+\sqrt{\frac{\log p}{m}} + \frac{\log p}{m}\right) \right), \\
& \big\| \nabla^{2}_{\alpha\vbeta}\mathcal{L}_{n}^{m}(\widehat{\vthe}) - \widehat{\bm{w}}^{\top}\nabla^{2}_{\vbeta\vbeta}\mathcal{L}_{n}^{m}(\widehat{\vthe}) \big\|_{\infty}=O_{p}\left( (s+s^{\prime}) \left(\sqrt{\frac{\log p}{n}}+\sqrt{\frac{\log p}{m}} + \frac{\log p}{m}\right) \right).
\end{align*}
\end{lemma}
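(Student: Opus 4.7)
\textbf{Proof proposal for Lemma \ref{B.3-1}.}

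The plan is to exploit the defining identity $H^{*}_{\alpha\vbeta}=\bm{w}^{*\top}H^{*}_{\vbeta\vbeta}$ and then combine three ingredients that are already available: the entrywise concentration bound for $\nabla^{2}\mathcal{L}_{n}^{m}(\vthe^{*})-H^{*}$ from Lemma \ref{3-9}, an $\ell_{\infty}$-Lipschitz bound for the MCMC Hessian that will be obtained by recycling the ``$e^{g_i}-1\approx g_i$'' trick used for $I_{3}$ in the proof of Lemma \ref{4-4}, and the consistency of $\widehat{\vthe}$ (Corollary \ref{3-14}) and $\widehat{\bm{w}}$ (Lemma \ref{4-4}).

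For the first inequality, I would insert $H^{*}_{\alpha\vbeta}-\bm{w}^{*\top}H^{*}_{\vbeta\vbeta}=0$ and write
\begin{align*}
\nabla^{2}_{\alpha\vbeta}\mathcal{L}_{n}^{m}(\widehat{\vthe})-\bm{w}^{*\top}\nabla^{2}_{\vbeta\vbeta}\mathcal{L}_{n}^{m}(\widehat{\vthe})
&= \bigl[\nabla^{2}_{\alpha\vbeta}\mathcal{L}_{n}^{m}(\widehat{\vthe})-\nabla^{2}_{\alpha\vbeta}\mathcal{L}_{n}^{m}(\vthe^{*})\bigr] + \bigl[\nabla^{2}_{\alpha\vbeta}\mathcal{L}_{n}^{m}(\vthe^{*})-H^{*}_{\alpha\vbeta}\bigr] \\
&\quad - \bm{w}^{*\top}\bigl[\nabla^{2}_{\vbeta\vbeta}\mathcal{L}_{n}^{m}(\widehat{\vthe})-\nabla^{2}_{\vbeta\vbeta}\mathcal{L}_{n}^{m}(\vthe^{*})\bigr] - \bm{w}^{*\top}\bigl[\nabla^{2}_{\vbeta\vbeta}\mathcal{L}_{n}^{m}(\vthe^{*})-H^{*}_{\vbeta\vbeta}\bigr].
\end{align*}
The two ``$\vthe^{*}$ vs.\ $H^{*}$'' blocks are controlled directly by Lemma \ref{3-9}, which gives $\|\nabla^{2}\mathcal{L}_{n}^{m}(\vthe^{*})-H^{*}\|_{\infty}=O_{p}(\sqrt{\log p/m}+\log p/m)$. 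For the two ``$\widehat{\vthe}$ vs.\ $\vthe^{*}$'' blocks, I will prove a Hessian-Lipschitz statement: each entry of $\nabla^{2}\mathcal{L}_{n}^{m}(\widehat{\vthe})-\nabla^{2}\mathcal{L}_{n}^{m}(\vthe^{*})$ is a difference of two weighted covariances of $\varphi$, and by repeating the $I_{3}$ computation (using $w_{i}(\widehat{\vthe})=e^{g_{i}}w_{i}(\vthe^{*})$ with $g_{i}=\widecheck{\vthe}^{\top}(\varphi(\Y_{i})-\overline{\varphi})$, $|g_{i}|\le 2K\|\widecheck{\vthe}\|_{1}=o_{p}(1)$, and the Taylor expansion $e^{g_{i}}-1\asymp g_{i}$) one obtains
\[
\bigl\|\nabla^{2}\mathcal{L}_{n}^{m}(\widehat{\vthe})-\nabla^{2}\mathcal{L}_{n}^{m}(\vthe^{*})\bigr\|_{\infty}\lesssim \|\widehat{\vthe}-\vthe^{*}\|_{1}=O_{p}\!\left(s\bigl(\sqrt{\log p/n}+\sqrt{\log p/m}+\log p/m\bigr)\right).
\]
The premultiplication by $\bm{w}^{*\top}$ is absorbed via $\|\bm{w}^{*\top}A\|_{\infty}\le \|\bm{w}^{*}\|_{1}\|A\|_{\infty}\le D s'\|A\|_{\infty}$ (Assumption \ref{4-6} and $\|\bm{w}^{*}\|_{0}=s'$). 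Since $s=s'=O(1)$, collecting the four terms yields the stated rate.

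For the second inequality, I would reduce to the first one by inserting $\bm{w}^{*\top}\nabla^{2}_{\vbeta\vbeta}\mathcal{L}_{n}^{m}(\widehat{\vthe})$ and apply the triangle inequality:
\[
\bigl\|\nabla^{2}_{\alpha\vbeta}\mathcal{L}_{n}^{m}(\widehat{\vthe})-\widehat{\bm{w}}^{\top}\nabla^{2}_{\vbeta\vbeta}\mathcal{L}_{n}^{m}(\widehat{\vthe})\bigr\|_{\infty}
\le \text{(part 1)} + \bigl\|(\widehat{\bm{w}}-\bm{w}^{*})^{\top}\nabla^{2}_{\vbeta\vbeta}\mathcal{L}_{n}^{m}(\widehat{\vthe})\bigr\|_{\infty}.
\]
The residual term is at most $\|\widehat{\bm{w}}-\bm{w}^{*}\|_{1}\cdot \|\nabla^{2}_{\vbeta\vbeta}\mathcal{L}_{n}^{m}(\widehat{\vthe})\|_{\infty}$, and the entries of the Hessian are uniformly bounded by a constant multiple of $K^{2}$ since $\|\varphi\|_{\infty}\le K$; therefore Lemma \ref{4-4} gives exactly the $O_{p}((s+s')(\sqrt{\log p/n}+\sqrt{\log p/m}+\log p/m))$ rate, completing the bound.

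The main obstacle I anticipate is the Hessian-Lipschitz step, i.e., turning the quadratic-form estimate of $I_{3}$ in the proof of Lemma \ref{4-4} into a genuine entrywise $\|\cdot\|_{\infty}$ bound on $\nabla^{2}\mathcal{L}_{n}^{m}(\widehat{\vthe})-\nabla^{2}\mathcal{L}_{n}^{m}(\vthe^{*})$. The derivation is analogous (Cauchy--Schwarz on the weighted sums plus $e^{g_{i}}-1\asymp g_{i}$ under $\|\widecheck{\vthe}\|_{1}=o_{p}(1)$), but one must verify that the $\sqrt{\widehat{\boldsymbol{\Delta}}^{\top}\nabla^{2}_{\vbeta\vbeta}\mathcal{L}_{n}^{m}(\vthe^{*})\widehat{\boldsymbol{\Delta}}}$ factor appearing in the quadratic-form version can be replaced by the coordinatewise constant bound $4K^{2}$, which is precisely what lets $\|\nabla^{2}\mathcal{L}_{n}^{m}(\widehat{\vthe})-\nabla^{2}\mathcal{L}_{n}^{m}(\vthe^{*})\|_{\infty}$ inherit the rate of $\|\widehat{\vthe}-\vthe^{*}\|_{1}$.
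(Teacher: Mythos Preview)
Your proposal is correct and follows essentially the same route as the paper: the same four-term decomposition via $H^{*}_{\alpha\vbeta}=\bm{w}^{*\top}H^{*}_{\vbeta\vbeta}$ (the paper labels these $J_{1},J_{2},J_{3},J_{4}$), the same use of Lemma \ref{3-9}/\eqref{3-11} for the ``$\vthe^{*}$ vs.\ $H^{*}$'' terms, the same $e^{g_{i}}-1\asymp g_{i}$ argument from the $I_{3}$ step of Lemma \ref{4-4} to get the entrywise Hessian-Lipschitz bound $\|\nabla^{2}\mathcal{L}_{n}^{m}(\widehat{\vthe})-\nabla^{2}\mathcal{L}_{n}^{m}(\vthe^{*})\|_{\infty}\lesssim\|\widecheck{\vthe}\|_{1}$, and the same reduction of the second claim to the first via $\|\widehat{\bm{w}}-\bm{w}^{*}\|_{1}\|\nabla^{2}_{\vbeta\vbeta}\mathcal{L}_{n}^{m}(\widehat{\vthe})\|_{\infty}$. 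The ``main obstacle'' you anticipate is exactly what the paper resolves by replacing the $\widehat{\boldsymbol{\Delta}}$-dependent factor with the coordinatewise bounds $|a_{i}|,|b_{ij}|\le 2K$.
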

\begin{proof}
We only prove the first claim, the same techniques can be applied to prove the second claim.
\begin{equation*}
	\big\|\nabla^{2}_{\alpha\vbeta}\mathcal{L}_{n}^{m}(\widehat{\vthe}) -\widehat{\bm{w}}^{\top}\nabla^{2}_{\vbeta\vbeta}\mathcal{L}_{n}^{m}(\widehat{\vthe}) \big\|_{\infty}
	\le \big\|\nabla^{2}_{\alpha\vbeta}\mathcal{L}_{n}^{m}(\widehat{\vthe})-\bm{w}^{*\top}\nabla^{2}_{\vbeta\vbeta}\mathcal{L}_{n}^{m}(\widehat{\vthe}) \big\|_{\infty}+\|\widehat{\bm{w}}-\bm{w}^{*}\|_{1}\|\nabla^{2}_{\vbeta\vbeta}\mathcal{L}_{n}^{m}(\widehat{\vthe}) \|_{\infty}.
\end{equation*}
Using the fact $H^{*}_{\alpha\vbeta}=\bm{w}^{*\top}H^{*}_{\vbeta\vbeta}$ and by the triangle inequality, we have
\begin{align*}
\big\| \nabla^{2}_{\alpha\vbeta}\mathcal{L}_{n}^{m}(\widehat{\vthe})-\bm{w}^{*\top}\nabla^{2}_{\vbeta\vbeta}\mathcal{L}_{n}^{m}(\widehat{\vthe}) \big\|_{\infty}
\le&  \big\| \nabla^{2}_{\alpha\vbeta}\mathcal{L}_{n}^{m}(\widehat{\vthe})-\nabla^{2}_{\alpha\vbeta}\mathcal{L}_{n}^{m}(\vthe^{*}) \big\|_{\infty} + \big\| \nabla^{2}_{\alpha\vbeta}\mathcal{L}_{n}^{m}(\vthe^{*})-H^{*}_{\alpha\vbeta} \big\|_{\infty}+ \\
&+ \big\| \bm{w}^{*\top}(\nabla^{2}_{\vbeta\vbeta}\mathcal{L}_{n}^{m}(\widehat{\vthe})-\nabla^{2}_{\vbeta\vbeta}\mathcal{L}_{n}^{m}(\vthe^{*})) \big\|_{\infty} +
\big\| \bm{w}^{*\top}(\nabla^{2}_{\vbeta\vbeta}\mathcal{L}_{n}^{m}(\vthe^{*})-H^{*}_{\vbeta}) \big\|_{\infty}\\
=:& J_{1}+J_{2}+J_{3}+J_{4}.
\end{align*}
Since $\big\| \nabla^{2}_{\alpha\vbeta}\mathcal{L}_{n}^{m}(\vthe^{*})-H^{*}_{\alpha\vbeta} \big\|_{\infty}\le \| \nabla^{2}\mathcal{L}_{n}^{m}(\vthe^{*})-H^{*} \|_{\infty}$,
%and $s=O(1)$, 
we have $ J_{2}=O_{p}\left( \sqrt{\frac{\log p}{m}} + \frac{\log p}{m}\right)$
by \eqref{3-11}.
Since $\|\bm{w}^{*}\|_{\infty}\le D$ and $\|\bm{w}^{*}\|_{0}=s^{\prime}=O(1)$,
\begin{equation*}
\big\| \bm{w}^{*\top}(\nabla^{2}_{\vbeta\vbeta}\mathcal{L}_{n}^{m}(\vthe^{*})-H^{*}_{\vbeta}) \big\|_{\infty}\le s^{\prime}D \| \nabla^{2}\mathcal{L}_{n}^{m}(\vthe^{*})-H^{*} \|_{\infty}\lesssim \| \nabla^{2}\mathcal{L}_{n}^{m}(\vthe^{*})-H^{*} \|_{\infty},
\end{equation*} 
we have $J_{4}=O_{p}\left( s \left(\sqrt{\frac{\log p}{m}} + \frac{\log p}{m} \right) \right)$
by \eqref{3-11} and $s=O(1)$. It remains to prove the rate of $J_1$ and $J_3$.
% \begin{equation*}
%     J_{1}=O_{p}\left( s(\sqrt{\frac{\log p}{n}}+\sqrt{\frac{\log p}{m}}) \right).
% \end{equation*}
% Let $\nabla^{2}_{\alpha\vbeta,j}\mathcal{L}_{n}^{m}(\widehat{\vthe})$ and $\varphi_{\vbeta,j}(\Y_{i})$  denote the $j$-th element of the vector $\nabla^{2}_{\alpha\vbeta}\mathcal{L}_{n}^{m}(\widehat{\vthe})$ and $\varphi_{\vbeta}(\Y_{i})$.
Note that
\begin{align*}
\nabla^{2}_{\alpha\vbeta,j}\mathcal{L}_{n}^{m}(\widehat{\vthe})&=\frac{\sum_{i=1}^{m}w_{i}(\widehat{\vthe})(\varphi_{\alpha}(\Y_{i})-\overline{\varphi}_{\alpha})(\varphi_{\vbeta,j}(\Y_{i})-\overline{\varphi}_{\vbeta,j})}{\sum_{i=1}^{m}w_{i}(\widehat{\vthe})}=\frac{\sum_{i=1}^{m}w_{i}\e^{g_{i}}a_{i}b_{i j}}{\sum_{i=1}^{m}w_{i}\e^{g_{i}}},\\
\nabla^{2}_{\alpha\vbeta,j}\mathcal{L}_{n}^{m}(\vthe^{*})&=\frac{\sum_{i=1}^{m}w_{i}(\vthe^{*})(\varphi_{\alpha}(\Y_{i})-\overline{\varphi}_{\alpha})(\varphi_{\vbeta,j}(\Y_{i})-\overline{\varphi}_{\vbeta,j})}{\sum_{i=1}^{m}w_{i}(\vthe^{*})}=\frac{\sum_{i=1}^{m}w_{i}a_{i}b_{i j}}{\sum_{i=1}^{m}w_{i}},
\end{align*}
where $a_{i}=\varphi_{\alpha}(\Y_{i})-\overline{\varphi}_{\alpha}$, $b_{i j}=\varphi_{\vbeta,j}(\Y_{i})-\overline{\varphi}_{\vbeta,j}$, $w_{i}=w_{i}(\vthe^{*})$, and $g_{i}=\widecheck{\vthe}^{\top}\varphi(\Y_{i})$ as in the proof of Lemma \ref{4-4}.
%and use the fact $w_{i}(\widehat{\vthe})=e^{\widetilde{\vthe}^{\top}\varphi(\Y_{i})}w_{i}(\vthe^{*})$.
Then, by the similar proof in Lemma \ref{4-4}, we have
\begin{align*}
\big| \nabla^{2}_{\alpha\vbeta,j}\mathcal{L}_{n}^{m}(\widehat{\vthe})-\nabla^{2}_{\alpha\vbeta,j}\mathcal{L}_{n}^{m}(\vthe^{*}) \big|
& =\left| \frac{\sum_{i=1}^{m}w_{i} \e^{g_{i}}a_{i}b_{i j}}{\sum_{i=1}^{m}w_{i} \e^{g_{i}}}-\frac{\sum_{i=1}^{m}w_{i}a_{i}b_{i j}}{\sum_{i=1}^{m}w_{i}} \right| \\
&\le \left| \frac{\sum_{i=1}^{m}w_{i}(\e^{g_{i}}-1)a_{i}b_{i j}}{\sum_{i=1}^{m}w_{i} \e^{g_{i}}} \right| + \left| \left(\sum_{i=1}^{m}w_{i}a_{i}b_{i j} \right)\left( \frac{1}{\sum_{i=1}^{m}w_{i}\e^{g_{i}}}-\frac{1}{\sum_{i=1}^{m}w_{i}} \right) \right|  \\
&= \left| \frac{\sum_{i=1}^{m}w_{i}(\e^{g_{i}}-1)a_{i}b_{i j}}{\sum_{i=1}^{m}w_{i}\e^{g_{i}}} \right| + \left| \frac{\left( \sum_{i=1}^{m}w_{i}a_{i}b_{i j} \right) \left( \sum_{i=1}^{m}w_{i}(\e^{g_{i}}-1) \right)}{\left( \sum_{i=1}^{m}w_{i}\e^{g_{i}} \right) \left( \sum_{i=1}^{m}w_{i} \right)} \right| \\
&\lesssim \|\widecheck{\vthe} \|_{1} = O_{p}\left( s\left(\sqrt{\frac{\log p}{n}}+\sqrt{\frac{\log p}{m}} + \frac{\log p}{m}\right) \right),
\end{align*}
which implies $J_1 = O_{p}\left( s\left(\sqrt{\frac{\log p}{n}}+\sqrt{\frac{\log p}{m}} + \frac{\log p}{m}\right) \right)$. Similarly, we have $J_3 = O_{p}\bigg( s\bigg(\sqrt{\frac{\log p}{n}}+\sqrt{\frac{\log p}{m}} + \frac{\log p}{m}\bigg) \bigg)$.
% Notice $|a_{i}|\le 2K$, $|b_{i j}|\le 2K$ and $|g_{i}|\le K \|\widetilde{\vthe} \|_{1}$. Using the fact $e^{g_{i}}-1\asymp g_{i}$ and $e^{g_{i}} \asymp 1$ when $g_{i}=o(1)$, we have
% \begin{equation*}
% J_{11}\lesssim \|\widetilde{\vthe} \|_{1}
% \quad\text{and}\quad
% J_{12}\lesssim \|\widetilde{\vthe} \|_{1}
% \end{equation*}
%By the Corollary \ref{3-14}, we have $J_{1}=O_{p}\left( s(\sqrt{\frac{\log p}{n}}+\sqrt{\frac{\log p}{m}}) \right)$.
%\par For the term $J_{3}$, using the same method for $J_{1}$, we have $J_{3}=O_{p}\left( s(\sqrt{\frac{\log p}{n}}+\sqrt{\frac{\log p}{m}}) \right)$, thus we get the first claim.  $\hfill \square$
\end{proof}
\vspace{6ex}

%\sout{{\color{red}\noindent Now, we can prove Theorem \ref{4-11}.}}

\noindent \textbf{Proof of Theorem \ref{4-11}:}\label{App-B.4}

\begin{proof}
We begin the proof by decomposing the $\widehat{U}(\alpha_{0},\widehat{\vbeta})$ as
\begin{align*}
& \widehat{U}(\alpha_{0},\widehat{\vbeta}) =\nabla_{\alpha}\mathcal{L}_{n}^{m}(\alpha_{0},\widehat{\vbeta})-\widehat{\bm{w}}^{\top}\nabla_{\vbeta}\mathcal{L}_{n}^{m}(\alpha_{0},\widehat{\vbeta})  \\
= & \nabla_{\alpha}\mathcal{L}_{n}^{m}(\alpha_{0},\vbeta^{*}) + \big[\nabla^{2}_{\alpha\vbeta}\mathcal{L}_{n}^{m}(\alpha_{0},\vbeta_{1}) \big]^{\top}(\widehat{\vbeta}-\vbeta^{*})-\widehat{\bm{w}}^{\top}\nabla_{\vbeta}\mathcal{L}_{n}^{m}(\alpha_{0},\vbeta^{*})-\widehat{\bm{w}}^{\top}\nabla^{2}_{\vbeta\vbeta}\mathcal{L}_{n}^{m}(\alpha_{0},\vbeta_{2})(\widehat{\vbeta}-\vbeta^{*}) \\
= & \nabla_{\alpha}\mathcal{L}_{n}^{m}(\alpha_{0},\vbeta^{*})-\bm{w}^{*\top}\nabla_{\vbeta}\mathcal{L}_{n}^{m}(\alpha_{0},\vbeta^{*})  + (\bm{w}^{*}-\widehat{\bm{w}})^{\top}\nabla_{\vbeta}\mathcal{L}_{n}^{m}(\alpha_{0},\vbeta^{*}) \\
&~~~~ + (\widehat{\vbeta}-\vbeta^{*})^{\top}\big(\nabla^{2}_{\alpha\vbeta}\mathcal{L}_{n}^{m}(\alpha_{0},\vbeta_{1})-\nabla^{2}_{\vbeta\vbeta}\mathcal{L}_{n}^{m}(\alpha_{0},\vbeta_{2})\bm{w}^{*} \big) + (\bm{w}^{*}-\widehat{\bm{w}})^{\top}\nabla^{2}_{\vbeta\vbeta}\mathcal{L}_{n}^{m}(\alpha_{0},\vbeta_{2})(\widehat{\vbeta}-\vbeta^{*}) \\
=: & E_{1}+E_{2}+E_{3}+E_{4},
\end{align*}
where  $\vbeta_{1}=u_{1}(\widehat{\vbeta}-\vbeta^{*})+\vbeta^{*}$ and $\vbeta_{2}=u_{2}(\widehat{\vbeta}-\vbeta^{*})+\vbeta^{*}$ for $u_{1},u_{2}\in [0,1]$ by Taylor's expansion. by Lemma \ref{4-1}, we have $\bm{v}^{\top}H^{*}\bm{v}=H^{*}_{\alpha|\vbeta}$ and thus $\sqrt{n}E_{1} \, \rightsquigarrow \, N(0,H^{*}_{\alpha|\vbeta})$ for term $E_{1}$, where $\bm{v}=(1,-\bm{w}^{*\top})^{\top}$. 
% By assumptions about the relations among $m$, $n$ and $p$, we have
% \begin{equation}\label{B.5-1}
% \sqrt{n}(\sqrt{\frac{\log p}{n}}+\sqrt{\frac{\log p}{m}})=\sqrt{\log p}+\sqrt{\frac{n\log p}{m}}\lesssim \sqrt{\log p},
% \end{equation}
% where we use the assumption $\frac{m}{n}\asymp\log p$.
For the term $E_{2}$, by the Lemma \ref{4-4} and \eqref{3-11}, we have
\begin{align*}
\sqrt{n}|E_{2}|\le \sqrt{n}\|\bm{w}^{*}-\widehat{\bm{w}}\|_{1}\|\nabla_{\vbeta}\mathcal{L}_{n}^{m}(\alpha_{0},\vbeta^{*})\|_{\infty}  =O_{p}\left((s+s^{\prime})\sqrt{n}\left(\sqrt{\frac{\log p}{n}}+\sqrt{\frac{\log p}{m}} + \frac{\log p}{m}\right)^{2}\right),
\end{align*}
%where we use the fact that if $X_{n}=O_{p}(a_{n})$ and $Y_{n}=O_{p}(b_{n})$, then $c_{n}X_{n}Y_{n}=O_{p}(c_{n}a_{n}b_{n})$ in the last step.
By assumptions about the relations among $m$, $n$ and $p$ the assumption $\frac{m}{n} \gtrsim \log p$, we have
\begin{equation}\label{B.5-1}
    \begin{aligned}
        \sqrt{n}\left(\sqrt{\frac{\log p}{n}}+\sqrt{\frac{\log p}{m}} + \frac{\log p}{m}\right) & = \sqrt{\log p}+\sqrt{\frac{n\log p}{m}} + \frac{\sqrt{n} \log p}{m}  \lesssim \sqrt{\log p} + \sqrt{\frac{n\log p}{n\log p}} +  \frac{\sqrt{n} \log p}{n\log p} \lesssim \sqrt{\log p},
    \end{aligned}
\end{equation}
and then,
\begin{equation*}
   (s+s^{\prime})\sqrt{n}\left(\sqrt{\frac{\log p}{n}}+\sqrt{\frac{\log p}{m}} + \frac{\log p}{m}\right)^{2} \lesssim \frac{\log p}{\sqrt{n}}+\frac{\log p}{\sqrt{m}} + \frac{\log^{3 / 2} p}{m}.
\end{equation*}
Therefore, $\sqrt{n}|E_{2}|=O_{p}\left( \frac{\log p}{\sqrt{n}}+\frac{\log p}{\sqrt{m}} + \frac{\log^{3 / 2} p}{m} \right) =o_{p}(1)$.
% \begin{equation}\label{B.5-2}
% \sqrt{n}|E_{2}|=O_{p}\left( \frac{\log p}{\sqrt{n}}+\frac{\log p}{\sqrt{m}} \right)\longrightarrow \sqrt{n}|E_{2}|=o_{p}(1),
% \end{equation}
%where we use our assumption $\frac{\log p}{\sqrt{n}}+\frac{\log p}{\sqrt{m}}=o(1)$.
For the term $E_{3}$, we have
\begin{equation*}
    \begin{aligned}
       \sqrt{n} |E_{3}| & \le \sqrt{n} \big\| \widehat{\vbeta}-\vbeta^{*} \big\|_{1} \big\| \nabla^{2}_{\alpha\vbeta}\mathcal{L}_{n}^{m}(\alpha_{0},\vbeta_{1}) -\nabla^{2}_{\vbeta\vbeta}\mathcal{L}_{n}^{m}(\alpha_{0},\vbeta_{2})\bm{w}^{*} \big\|_{\infty} \\
        & = \sqrt{n} O_{p}\left(s\left(\sqrt{\frac{\log p}{n}}+\sqrt{\frac{\log p}{m}} + \frac{\log p}{m} \right)\right) \cdot O_{p}\left( s\left( \sqrt{\frac{\log p}{n}}+\sqrt{\frac{\log p}{m}} + \frac{\log p}{m}\right) \right) = o_p(1)
    \end{aligned}
\end{equation*}
with the same proof for $E_{2}$ and the results in Corollary \ref{3-14}.
% Use the same method in the proof of the technical Lemma \ref{B.3-1}, we have
% \begin{equation*}
% \| \nabla^{2}_{\alpha\vbeta}\mathcal{L}_{n}^{m}(\alpha_{0},\vbeta_{1})-\nabla^{2}_{\vbeta\vbeta}\mathcal{L}_{n}^{m}(\alpha_{0},\vbeta_{2})\bm{w}^{*} \|_{\infty}=O_{p}\left( s\left( \sqrt{\frac{\log p}{n}}+\sqrt{\frac{\log p}{m}} \right) \right).
% \end{equation*}
% By the Corollary \ref{3-14}, we have $\sqrt{n}|E_{3}|=O_{p}\left( \sqrt{n} s^{2} \left( \sqrt{\frac{\log p}{n}}+\sqrt{\frac{\log p}{m}} \right)^{2} \right).$
% Use the same treatment for $E_{2}$, we can show
% \begin{equation}\label{B.5-3}
%     \sqrt{n}|E_{3}|=o_{p}(1).
% \end{equation}
Finally, for the term $E_{4}$, we have
\begin{align*}
    \big\| \nabla^{2}_{\vbeta\vbeta}\mathcal{L}_{n}^{m}(\alpha_{0},\vbeta_{2}) \big\|_{\infty}
    &\le \|H^{*}_{\vbeta\vbeta}\|_{\infty}+\| \nabla^{2}_{\vbeta\vbeta}\mathcal{L}_{n}^{m}(\alpha_{0},\vbeta_{2})-H^{*}_{\vbeta\vbeta} \|_{\infty} \\
    &\le \lambda_{\max} + \| \nabla^{2}_{\vbeta\vbeta}\mathcal{L}_{n}^{m}(\alpha_{0},\vbeta_{2})-H^{*}_{\vbeta\vbeta} \|_{\infty} \\
    &\le \lambda_{\max} +\|\nabla^{2}_{\vbeta\vbeta}\mathcal{L}_{n}^{m}(\alpha_{0},\vbeta_{2})-\nabla^{2}_{\vbeta\vbeta}\mathcal{L}_{n}^{m}(\vthe^{*})\|_{\infty}+\|\nabla^{2}_{\vbeta\vbeta}\mathcal{L}_{n}^{m}(\vthe^{*})-H^{*}_{\vbeta\vbeta}\|_{\infty}.
\end{align*}
By the same proofs for the terms $J_{1}$ and $J_{2}$ in the proof of Lemma \ref{B.3-1}, we have
\begin{align*}
    \| \nabla^{2}_{\vbeta\vbeta}\mathcal{L}_{n}^{m}(\alpha_{0},\vbeta_{2})-\nabla^{2}_{\vbeta\vbeta}\mathcal{L}_{n}^{m}(\vthe^{*}) \|_{\infty}\lesssim \|\widehat{\vbeta}-\vbeta^{*}\|_{1} &=o_{p}(1),\\
    \|\nabla^{2}_{\vbeta\vbeta}\mathcal{L}_{n}^{m}(\vthe^{*})-H^{*}_{\vbeta\vbeta}\|_{\infty} = O_{p}\left( s\left( \sqrt{\frac{\log p}{n}}+\sqrt{\frac{\log p}{m}} + \frac{\log p}{m}\right) \right)&=o_{p}(1).
\end{align*}
% and by the same treatment of the term $J_{2}$ in the proof of Lemma \ref{B.3-1}, we show
% $$\|\nabla^{2}_{\vbeta\vbeta}\mathcal{L}_{n}^{m}(\vthe^{*})-H^{*}_{\vbeta\vbeta}\|_{\infty}=O_{p}\left( s(\sqrt{\frac{\log p}{n}}+\sqrt{\frac{\log p}{m}}) \right)=o_{p}(1).$$
Finally, $|E_{4}|\le\|\nabla^{2}_{\vbeta\vbeta}\mathcal{L}_{n}^{m}(\alpha_{0},\vbeta_{2})\|_{\infty}\|\bm{w}^{*}-\widehat{\bm{w}}\|_{1}\|\widehat{\vbeta}-\vbeta^{*}\|_{1}$ and thus by the similar techniques in $E_{2}$,
\[
\begin{aligned}
\sqrt{n}|E_{4}|
&\le \sqrt{n}\|\nabla^{2}_{\vbeta\vbeta}\mathcal{L}_{n}^{m}(\alpha_{0},\vbeta_{2})\|_{\infty}\|\bm{w}^{*}-\widehat{\bm{w}}\|_{1}\|\widehat{\vbeta}-\vbeta^{*}\|_{1} \lesssim \sqrt{n}\|\bm{w}^{*}-\widehat{\bm{w}}\|_{1}\|\widehat{\vbeta}-\vbeta^{*}\|_{1}=o_{p}(1).
\end{aligned}
\]
Combining the above results, we get the desired asymptotic normality $\sqrt{n}\widehat{U}(\alpha_{0},\widehat{\vbeta}) = \sqrt{n} E_1 + o_p(1) \, \rightsquigarrow \, \mathcal{N}(0, H_{\alpha \mid \vbeta}^*).$
\end{proof}
\vspace{6ex}

\noindent \textbf{Proof of Lemma \ref{4-10}:}\label{App-B.5}

\begin{proof}
By the definition of $\widehat{H}_{\alpha|\vbeta}$ and $H^{*}_{\alpha|\vbeta}$, 
\begin{align*}
|\widehat{H}_{\alpha|\vbeta}-H^{*}_{\alpha|\vbeta}|
&\le |\nabla^{2}_{\alpha\alpha}\mathcal{L}_{n}^{m}(\widehat{\vthe})-H^{*}_{\alpha\alpha}| + |(\widehat{\bm{w}}-\bm{w}^{*})^{\top}H^{*}_{\alpha\vbeta}| + \big|\widehat{\bm{w}}^{\top}\big(\nabla^{2}_{\alpha\vbeta}\mathcal{L}_{n}^{m}(\widehat{\vthe})-H^{*}_{\alpha\vbeta}\big)\big|  \\
&=:F_{1}+F_{2}+F_{3}.
\end{align*}
For $F_{1}$, by the same argument for the term $J_{1}$ in the proof of Lemma \ref{B.3-1} and \eqref{3-11}, we have
\[
    \begin{aligned}
        F_{1} & \le | \nabla^{2}_{\alpha\alpha}\mathcal{L}_{n}^{m}(\widehat{\vthe})-\nabla^{2}_{\alpha\alpha}\mathcal{L}_{n}^{m}(\vthe^{*})| + |\nabla^{2}_{\alpha\alpha}\mathcal{L}_{n}^{m}(\vthe^{*})-H^{*}_{\alpha\alpha} | \\
        & \lesssim \| \widecheck{\vthe}\|_1 + O_p \left( s \left( \sqrt{\frac{\log p}{n}}+\sqrt{\frac{\log p}{m}} + \frac{\log p}{m}\right) \right) \asymp O_p \left( s \left( \sqrt{\frac{\log p}{n}}+\sqrt{\frac{\log p}{m}} + \frac{\log p}{m}\right) \right).
    \end{aligned}
\]
% \begin{equation*}
%     F_{1}\le | \nabla^{2}_{\alpha\alpha}\mathcal{L}_{n}^{m}(\widehat{\vthe})-\nabla^{2}_{\alpha\alpha}\mathcal{L}_{n}^{m}(\vthe^{*})| + |\nabla^{2}_{\alpha\alpha}\mathcal{L}_{n}^{m}(\vthe^{*})-H^{*}_{\alpha\alpha} |=:F_{11}+F_{12}.
% \end{equation*}
% Using the same treatment for the term $J_{1}$ in the proof of Lemma \ref{B.3-1}, we have $F_{11}\lesssim \|\widetilde{\vthe}\|_{1}$, and thus $F_{11}=O_{p}\left( s(\sqrt{\frac{\log p}{n}}+\sqrt{\frac{\log p}{m}}) \right)$. By the Remark \ref{3-11}, we have $F_{12}=O_{p}\left( s(\sqrt{\frac{\log p}{n}}+\sqrt{\frac{\log p}{m}}) \right)$. Therefore, $F_{1}=O_{p}\left( s(\sqrt{\frac{\log p}{n}}+\sqrt{\frac{\log p}{m}}) \right)$.
Next, for $F_{2}$, by the Assumption \ref{4-5} and Lemma \ref{4-4},
\[
     F_{2}\le \|\widehat{\bm{w}}-\bm{w}^{*}\|_{1}\|H^{*}\|_{\infty} = O_p \left( (s+s^{\prime})\left(\sqrt{\frac{\log p}{n}}+\sqrt{\frac{\log p}{m}} + \frac{\log p}{m}\right) \right).
\]
% $\|H^{*}\|_{\infty}\le C_{\emph{uni}}$, and thus $F_{2}\le \|\widehat{\bm{w}}-\bm{w}^{*}\|_{1}\|H^{*}\|_{\infty}$. Then by the Lemma \ref{4-4}, we have $F_{2}=O_{p}\left( (s+s^{\prime})(\sqrt{\frac{\log p}{n}}+\sqrt{\frac{\log p}{m}}) \right)$
Finally, for $F_{3}$, note that 
$
    \|\widehat{\bm{w}}\|_{1} \le \|\widehat{\bm{w}}-\bm{w}^{*}\|_{1} + \|\bm{w}^{*}\|_{1} \leq s' D + o_p(1),
$
% $\|\widehat{\bm{w}}\|_{1} \le \|\widehat{\bm{w}}-\bm{w}^{*}\|_{1} + \|\bm{w}^{*}\|_{1}$, $ \|\bm{w}^{*}\|_{1}\le s^{\prime}D$, and $\|\widehat{\bm{w}}-\bm{w}^{*}\|_{1}=o_{p}(1)$,
%according to the Lemma \ref{4-4}. 
we have
\[
    \begin{aligned}
        F_{3} &\le \|\widehat{\bm{w}}\|_{1} \big\| \nabla^{2}_{\alpha\vbeta}\mathcal{L}_{n}^{m}(\widehat{\vthe})-H^{*}_{\alpha\vbeta} \big\|_{\infty} \\
        & \lesssim \big\| \nabla^{2}_{\alpha\vbeta}\mathcal{L}_{n}^{m}(\widehat{\vthe})-\nabla^{2}_{\alpha\vbeta}\mathcal{L}_{n}^{m}(\vthe^{*}) \big\|_{\infty} + \big\| \nabla^{2}_{\alpha\vbeta}\mathcal{L}_{n}^{m}(\vthe^{*})-H^{*}_{\alpha\vbeta} \big\|_{\infty} \\
        & = O_p \left( (s+s^{\prime})\left(\sqrt{\frac{\log p}{n}}+\sqrt{\frac{\log p}{m}} + \frac{\log p}{m}\right) \right).
    \end{aligned}
\]
%by using the same argument for $F_2$.
% \begin{equation*}
% F_{3} 
% \le \|\widehat{\bm{w}}\|_{1} \| \nabla^{2}_{\alpha\vbeta}\mathcal{L}_{n}^{m}(\widehat{\vthe})-H^{*}_{\alpha\vbeta} \|_{\infty}
% \lesssim \| \nabla^{2}_{\alpha\vbeta}\mathcal{L}_{n}^{m}(\widehat{\vthe})-\nabla^{2}_{\alpha\vbeta}\mathcal{L}_{n}^{m}(\vthe^{*}) \|_{\infty}+\| \nabla^{2}_{\alpha\vbeta}\mathcal{L}_{n}^{m}(\vthe^{*})-H^{*}_{\alpha\vbeta} \|_{\infty}.
% \end{equation*}
% By the Remark \ref{3-11}, we have $\| \nabla^{2}_{\alpha\vbeta}\mathcal{L}_{n}^{m}(\vthe^{*})-H^{*}_{\alpha\vbeta} \|_{\infty}=O_{p}\left( s(\sqrt{\frac{\log p}{n}}+\sqrt{\frac{\log p}{m}}) \right)$. By the same argument for the treatment of $J_{1}$ in the proof of Lemma \ref{B.3-1}, we have $\| \nabla^{2}_{\alpha\vbeta}\mathcal{L}_{n}^{m}(\widehat{\vthe})-\nabla^{2}_{\alpha\vbeta}\mathcal{L}_{n}^{m}(\vthe^{*}) \|_{\infty}=O_{p}\left( s(\sqrt{\frac{\log p}{n}}+\sqrt{\frac{\log p}{m}}) \right)$, and therefore, we have $E_{3}=O_{p}\left( s(\sqrt{\frac{\log p}{n}}+\sqrt{\frac{\log p}{m}}) \right)$. Combine the results for $E_{1}$, $E_{2}$ and $E_{3}$, we get the desired conclusion.  
\end{proof}
\vspace{6ex}

\noindent \textbf{Proof of the Theorem \ref{5-2}:}\label{App-C}
\begin{proof}
According to the definition of $\widetilde{\alpha}$ in \eqref{one_step_est}, we have
\begin{align*}
\widetilde{\alpha}-\alpha^{*}
&=\widehat{\alpha}-\alpha^{*}-H^{*-1}_{\alpha|\vbeta}\widehat{U}(\widehat{\alpha},\widehat{\vbeta})+\widehat{U}(\widehat{\alpha},\widehat{\vbeta})\left(H^{*-1}_{\alpha|\vbeta}-\left( \frac{\partial \widehat{U}(\widehat{\alpha},\widehat{\vbeta})}{\partial\alpha} \right)^{-1}\right) \\
&=\widehat{\alpha}-\alpha^{*}-H^{*-1}_{\alpha|\vbeta}\left(\widehat{U}( \alpha^{*},\widehat{\vbeta})+\left( \widehat{\alpha}-\alpha^{*} \right)\frac{\partial \widehat{U}(\overline{\alpha},\widehat{\vbeta})}{\partial\alpha} \right)+\widehat{U}(\widehat{\alpha},\widehat{\vbeta})\left(H^{*-1}_{\alpha|\vbeta}-\left( \frac{\partial \widehat{U}(\widehat{\alpha},\widehat{\vbeta})}{\partial\alpha} \right)^{-1}\right) \\
&=-H^{*-1}_{\alpha|\vbeta}\widehat{U}( \alpha^{*},\widehat{\vbeta})+(\widehat{\alpha}-\alpha^{*})H^{*-1}_{\alpha|\vbeta}\left(H^{*}_{\alpha|\vbeta}-\frac{\partial \widehat{U}(\overline{\alpha},\widehat{\vbeta})}{\partial\alpha} \right)+\\
&~~~~~~~~~+\widehat{U}(\alpha^{*},\widehat{\vbeta})\left(H^{*-1}_{\alpha|\vbeta}-\left( \frac{\partial \widehat{U}(\widehat{\alpha},\widehat{\vbeta})}{\partial\alpha} \right)^{-1}\right)+\left( \widehat{\alpha}-\alpha^{*} \right)\frac{\partial \widehat{U}(\overline{\alpha},\widehat{\vbeta})}{\partial\alpha}\left(H^{*-1}_{\alpha|\vbeta}-\left( \frac{\partial \widehat{U}(\widehat{\alpha},\widehat{\vbeta})}{\partial\alpha} \right)^{-1}\right)  \\
&=:E_{1}+E_{2}+E_{3}+E_{4},
\end{align*}
where $\overline{\alpha}=u(\widehat{\alpha}-\alpha^{*})+\alpha^{*}$ for some $u\in[0,1]$. By Theorem \ref{4-11}, it is easy to verify that $\sqrt{n}E_{1} \, \rightsquigarrow \, \mathcal{N}(0,H^{*-1}_{\alpha|\vbeta})$. It remains to show that $E_2 + E_3 + E_4 = o_p(1 / \sqrt{n})$. Indeed, by the Lemma \ref{4-10},
\[
    H^{*}_{\alpha|\vbeta}-\frac{\partial \widehat{U}(\widehat{\alpha},\widehat{\vbeta})}{\partial\alpha}= H^{*}_{\alpha|\vbeta} - \widehat{H}_{\alpha|\vbeta} = O_{p}\left((s+s^{\prime})\left(\sqrt{\frac{\log p}{n}}+\sqrt{\frac{\log p}{m}} + \frac{\log p}{m}\right)\right).
\]
Thus, 
\[
    \begin{aligned}
        E_3 & = \widehat{U}(\alpha^{*},\widehat{\vbeta})\left(H^{*-1}_{\alpha|\vbeta}-\left( \frac{\partial \widehat{U}(\widehat{\alpha},\widehat{\vbeta})}{\partial\alpha} \right)^{-1}\right) = O_p \left( \frac{1}{\sqrt{n}}\right) O_{p}\left((s+s^{\prime})\left(\sqrt{\frac{\log p}{n}}+\sqrt{\frac{\log p}{m}} + \frac{\log p}{m}\right)\right) = o_p \big(1 / \sqrt{n} \big).
    \end{aligned}
\]
% \par Notice $\frac{\partial \widehat{U}(\widehat{\alpha},\widehat{\vbeta})}{\partial\alpha}=\widehat{H}_{\alpha|\vbeta}$, then by the Lemma \ref{4-10}, we have
% \begin{equation}\label{C-1}
%     H^{*}_{\alpha|\vbeta}-\frac{\partial \widehat{U}(\widehat{\alpha},\widehat{\vbeta})}{\partial\alpha}=O_{p}\left((s+s^{\prime})\left(\sqrt{\frac{\log p}{n}}+\sqrt{\frac{\log p}{m}}\right)\right)
% \end{equation}
% and thus,
% \begin{equation}\label{C-2}
%     H^{*-1}_{\alpha|\vbeta}-\left(\frac{\partial \widehat{U}(\widehat{\alpha},\widehat{\vbeta})}{\partial\alpha}\right)^{-1}=O_{p}\left((s+s^{\prime})\left(\sqrt{\frac{\log p}{n}}+\sqrt{\frac{\log p}{m}}\right)\right).
% \end{equation}
% Under the assumption $\left(\sqrt{\frac{\log p}{n}}+\sqrt{\frac{\log p}{m}}\right)=o(1)$, we have
% \begin{equation}\label{C-3}
% H^{*-1}_{\alpha|\vbeta}-\left(\frac{\partial \widehat{U}(\widehat{\alpha},\widehat{\vbeta})}{\partial\alpha}\right)^{-1}=o_{p}(1).
% \end{equation}
% \par For the term $E_{3}$, notice $\sqrt{n}\widehat{U}(\alpha^{*},\widehat{\vbeta})\stackrel{\emph{d}}{\longrightarrow}N(0,H^{*}_{\alpha|\vbeta})$, then we have $\sqrt{n}E_{3}=o_{p}(1)$ because of \eqref{C-3} and the Slutsky's theorem.
Similarly, by Corollary \ref{3-14},
\[
    \begin{aligned}
        |E_{2}| &\lesssim |\widehat{\alpha}-\alpha^{*}|\left| H^{*}_{\alpha|\vbeta}-\frac{\partial \widehat{U}(\overline{\alpha},\widehat{\vbeta})}{\partial\alpha} \right| \le \|\widehat{\vthe}-\vthe^{*}\|_{1} \left| H^{*}_{\alpha|\vbeta}-\frac{\partial \widehat{U}(\overline{\alpha},\widehat{\vbeta})}{\partial\alpha} \right|  = O_{p}\left(s(s+s^{\prime})\left(\sqrt{\frac{\log p}{n}}+\sqrt{\frac{\log p}{m}} + \frac{\log p}{m}\right)^{2}\right) = o_p \big(1 / \sqrt{n} \big),
    \end{aligned}
\]
by the assumptions for $m,n$, and $p$.
% \par For the term $E_{2}$, notice
% \begin{align*}
% |E_{2}|
% &\lesssim |\widehat{\alpha}-\alpha^{*}|\left| H^{*}_{\alpha|\vbeta}-\frac{\partial \widehat{U}(\overline{\alpha},\widehat{\vbeta})}{\partial\alpha} \right| \le \|\widehat{\vthe}-\vthe^{*}\|_{1} \left| H^{*}_{\alpha|\vbeta}-\frac{\partial \widehat{U}(\overline{\alpha},\widehat{\vbeta})}{\partial\alpha} \right|\\
% &=O_{p}\left(s(s+s^{\prime})\left(\sqrt{\frac{\log p}{n}}+\sqrt{\frac{\log p}{m}}\right)^{2}\right),
% \end{align*}
% where we use the Corollary \ref{3-14} and \eqref{C-1} in the last step. Under the assumptions $\frac{m}{n}\asymp \log p$ and $(\frac{\log p}{\sqrt{n}}+\frac{\log p}{\sqrt{m}})=o(1)$, using \eqref{B.5-1} we have
% \begin{equation*}
%     \sqrt{n}\left(\sqrt{\frac{\log p}{n}}+\sqrt{\frac{\log p}{m}}\right)^{2}=o(1),
% \end{equation*}
% from which we have
% \begin{equation*}
% \sqrt{n}|E_{2}|=o_{p}(1).
% \end{equation*}
Finally, for the term $E_{4}$, with the same proof of the Lemma \ref{4-10}, one can show that 
\[
    \frac{\partial \widehat{U}(\overline{\alpha},\widehat{\vbeta})}{\partial\alpha}-\widehat{H}_{\alpha|\vbeta}=O_{p}((1-u)(\widehat{\alpha}-\alpha^{*}))=o_{p}(1)
\]
and then $\frac{\partial \widehat{U}(\overline{\alpha},\widehat{\vbeta})}{\partial\alpha} = O_p(1)$. Thus,
\[
    \begin{aligned}
        |E_{4}| &\lesssim |\widehat{\alpha}-\alpha^{*}|\left| H^{*-1}_{\alpha|\vbeta}-\left(\frac{\partial \widehat{U}(\widehat{\alpha},\widehat{\vbeta})}{\partial\alpha}\right)^{-1} \right| \le  \|\widehat{\vthe}-\vthe^{*}\|_{1} \left| H^{*-1}_{\alpha|\vbeta}-\left(\frac{\partial \widehat{U}(\widehat{\alpha},\widehat{\vbeta})}{\partial\alpha}\right)^{-1} \right|\\
        &=O_{p}\left(s(s+s^{\prime})\left(\sqrt{\frac{\log p}{n}}+\sqrt{\frac{\log p}{m}} + \frac{\log p}{m}\right)^{2}\right).
    \end{aligned}
\]
which leads to the desired result.
% with the same proof of the Lemma \ref{4-10}, we have $\frac{\partial \widehat{U}(\overline{\alpha},\widehat{\vbeta})}{\partial\alpha}=O(1)$ after we notice
% $$\frac{\partial \widehat{U}(\overline{\alpha},\widehat{\vbeta})}{\partial\alpha}-\widehat{H}_{\alpha|\vbeta}=O_{p}((1-u)(\widehat{\alpha}-\alpha^{*}))=o_{p}(1)\quad\text{and}\quad \widehat{H}_{\alpha|\vbeta}=O_{p}(1).$$
% By \eqref{C-2} we have
% \begin{align*}
% |E_{4}|
% &\lesssim |\widehat{\alpha}-\alpha^{*}|\left| H^{*-1}_{\alpha|\vbeta}-\left(\frac{\partial \widehat{U}(\widehat{\alpha},\widehat{\vbeta})}{\partial\alpha}\right)^{-1} \right| \le  \|\widehat{\vthe}-\vthe^{*}\|_{1} \left| H^{*-1}_{\alpha|\vbeta}-\left(\frac{\partial \widehat{U}(\widehat{\alpha},\widehat{\vbeta})}{\partial\alpha}\right)^{-1} \right|\\
% &=O_{p}\left(s(s+s^{\prime})\left(\sqrt{\frac{\log p}{n}}+\sqrt{\frac{\log p}{m}}\right)^{2}\right).
% \end{align*}
% We have $\sqrt{n}|E_{4}|=o_{p}(1)$ by the same treatment for $E_{2}$.
% \par Combine the result $\sqrt{n}E_{1}\stackrel{\emph{d}}{\longrightarrow} N(0,H^{*-1}_{\alpha|\vbeta})$ and $\sqrt{n}|E_{i}|=o_{p}(1)\,(i=2,3,4)$, we get the Theorem \ref{5-2}. 
\end{proof}

\section{Proofs of Theorem and Lemmas in Section \ref{sec-6}}\label{App-D}

We first need the following lemma stating the rate of correlation between any elements in $\widehat{\vthe}$. Denote $S_0 := \{ j \in [p] : \theta_j^* = 0\}$.

\begin{lemma}\label{lem6-1}

    Suppose the assumptions in Theorem \ref{5-2} hold. For any $j ,k \in S_0$, we have $\cov(\widehat{\theta}_j, \widehat{\theta}_k) = O(n^{-1})$, where  $\widehat{\vthe}$ is the elastic-net estimator in \eqref{2-8}.  
    %if Assumption \ref{ass6} and Assumption \ref{ass7} holds, then we have
    % \begin{equation*}
    %     \operatorname{cov} (\widehat{\theta}_j, \widehat{\theta}_k) = O(n^{-1}), \qquad j, k \in S_0.
    % \end{equation*}
\end{lemma}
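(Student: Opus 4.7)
By Cauchy--Schwarz, $|\cov(\widehat{\theta}_j, \widehat{\theta}_k)| \le \sqrt{\var(\widehat{\theta}_j)\var(\widehat{\theta}_k)}$, so it suffices to show $\var(\widehat{\theta}_j) = O(n^{-1})$ uniformly over $j \in S_0$. The plan is to compare $\widehat{\theta}_j$ with the decorrelated one-step estimator $\widetilde{\theta}_j$ of Section~\ref{sec-4}, whose $\sqrt{n}$-asymptotic normality in Theorem~\ref{5-2} already delivers a variance of order $n^{-1}$.

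Fix $j \in S_0$ and treat $\theta_j$ as the target parameter with $\vthe_{-j}$ as nuisance. Under $\theta_j^* = 0$, Theorem~\ref{5-2} gives $\sqrt{n}\,\widetilde{\theta}_j \rightsquigarrow \mathcal{N}(0, H^{*-1}_{\theta_j | \vthe_{-j}})$, whose limiting variance is uniformly bounded by Assumption~\ref{4-5}; combined with uniform integrability of $n\widetilde{\theta}_j^2$, this yields $\var(\widetilde{\theta}_j) = O(n^{-1})$. Next, the one-step formula~\eqref{one_step_est} produces
\[
    \widehat{\theta}_j - \widetilde{\theta}_j = \bigl[\widehat{H}_{\theta_j | \vthe_{-j}}\bigr]^{-1}\widehat{U}(\widehat{\theta}_j, \widehat{\vthe}_{-j}),
\]
and a first-order Taylor expansion of $\widehat{U}$ in its first argument about $\theta_j = 0$, parallel to the $E_1$--$E_4$ decomposition in the proof of Theorem~\ref{5-2}, represents this difference as $\widehat{H}_{\theta_j|\vthe_{-j}}^{-1}\,\widehat{U}(0,\widehat{\vthe}_{-j})$ plus a remainder. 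The leading term has variance $O(n^{-1})$ by the concentration inequality of Lemma~\ref{3-9} together with the argument underlying Theorem~\ref{4-11}, while the remainder is $o_p(n^{-1/2})$ exactly as in the $E_2$--$E_4$ analysis of the proof of Theorem~\ref{5-2}. Combining, $\var(\widehat{\theta}_j) \le 2\var(\widetilde{\theta}_j) + 2\var(\widehat{\theta}_j - \widetilde{\theta}_j) = O(n^{-1})$, and Cauchy--Schwarz closes the argument.

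The main obstacle will be promoting the weak convergence statements of Theorems~\ref{4-11} and~\ref{5-2} into the $L^2$ bounds used above, and controlling the $o_p(n^{-1/2})$ remainders in mean square. I plan to split the expectation onto the high-probability ``good event'' on which the concentration inequalities of Lemmas~\ref{3-9}, \ref{4-4}, and~\ref{4-10} simultaneously hold; on this event the deterministic oracle bounds from Theorem~\ref{thm-1} provide polynomial control of every random quantity appearing in the expansion. On its complement, Assumptions~\ref{3-4} and~\ref{3-6} together with the coercivity of the Elastic-net objective (supplied by the $\lambda_2\|\vthe\|_2^2$ ridge term) keep $\widehat{\theta}_j$ uniformly bounded, and the exponentially decaying probability of the complement, tabulated in Theorem~\ref{thm-1}, renders its second-moment contribution negligible.
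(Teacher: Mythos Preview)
Your reduction via Cauchy--Schwarz to bounding $\var(\widehat{\theta}_j)$ is the same as the paper's, but the core argument diverges and, as written, contains a genuine gap.

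The problem is your treatment of the difference $\widehat{\theta}_j-\widetilde{\theta}_j=\widehat{H}_{j|-j}^{-1}\widehat{U}(\widehat{\vthe})$. When you Taylor-expand $\widehat{U}(\widehat{\theta}_j,\widehat{\vthe}_{-j})$ in its first argument about $0$, the linear part is $\partial_\alpha\widehat{U}(\overline{\theta},\widehat{\vthe}_{-j})\cdot\widehat{\theta}_j$, and since $\widehat{H}_{j|-j}^{-1}\,\partial_\alpha\widehat{U}(\overline{\theta},\widehat{\vthe}_{-j})=1+o_p(1)$ by Lemma~\ref{4-10}, your ``remainder'' is essentially $\widehat{\theta}_j$ itself, not an $E_2$--$E_4$ type term of order $o_p(n^{-1/2})$. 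The $E_2$--$E_4$ analysis in the proof of Theorem~\ref{5-2} concerns the expansion of $\widetilde{\alpha}-\alpha^*$, in which the factor $(\widehat{\alpha}-\alpha^*)$ always multiplies a quantity that is already $o_p(1)$; here the analogous factor multiplies something converging to $1$, so the argument becomes circular. If instead you bound $\widehat{U}(\widehat{\vthe})$ directly via the KKT conditions of \eqref{2-8}, you obtain $|\widehat{U}(\widehat{\vthe})|\lesssim(1+\|\widehat{\bm w}\|_1)(\lambda_1+2\lambda_2\|\widehat{\vthe}\|_\infty)\asymp\sqrt{\log p/n}$ on the good event, which delivers only $\var(\widehat{\theta}_j)=O(\log p/n)$, short of $O(n^{-1})$ by a $\log p$ factor.

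The paper's argument closes this gap by a completely different device: it never invokes $\widetilde{\theta}_j$ at all, but instead combines the oracle inequality of Theorem~\ref{thm-1} with the exchangeability Assumption~\ref{ass6}. Since all coordinates $j\in S_0$ have the same law, $\E[(\widehat{\theta}_j-\theta_j^*)^2\mid\mathscr{A}]$ coincides with the average over $S_0$ and is therefore bounded by $p^{-1}\E[\|\widehat{\vthe}-\vthe^*\|_1^2\mid\mathscr{A}]\lesssim p^{-1}s^2\lambda_1^2$. It is this extra $1/p$ (not available from any per-coordinate expansion) that, together with $m/n\gtrsim\log p$, reduces $s^2\lambda_1^2\asymp\log p/n$ to $O(n^{-1})$. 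Your plan makes no use of Assumption~\ref{ass6}, and without it the target rate is not reachable from the one-step estimator alone.
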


\begin{proof}
    By Cauchy-Schwartz inequality, it suffices to show that $\operatorname{var}(\widehat{\theta}_j) = O(n^{-1})$ for any $j \in S_0$. Define event $\mathscr{A} := \big\{ \|\widehat{\vthe} - \vthe^* \|_1 \leq \e(\zeta + 1) s \lambda_1 / C_{\min}\big\}$.
    By $\theta^*_j = 0$ and the exchangeable property in Assumption \ref{ass6}, on the event $\mathscr{A}$, we have
    \begin{align*}
             \E \big[\widehat{\theta}_j \mid \mathscr{A}\big] &\leq  \E \big[ |\widehat{\theta}_j - \theta^*_j| \, | \, \mathscr{A} \big]  \leq \frac{1}{p} \E \big[ \| \widehat{\vthe} - \vthe\|_1 \, | \, \mathscr{A}\big] \leq \frac{\e (\zeta + 1) s \lambda_1}{p C_{\min}},\\
            \var \big( \widehat{\theta}_j \, | \, \mathscr{A}\big) &\leq \E \big[ \widehat{\theta}_j^2 \, | \, \mathscr{A}\big]  = \E \big[ (\widehat{\theta}_j - \theta^*_j)^2 \, | \, \mathscr{A}\big] \leq \frac{1}{p} \E \big[ \| \widehat{\vthe} - \vthe^* \|_1^2 \, | \, \mathscr{A}\big] \leq \frac{\e^2 (\zeta + 1)^2 s^2 \lambda_1}{p C^2_{\min}}.
    \end{align*}
    By Theorem \ref{thm-1}, we know that $1 - \pr (\mathscr{A}) \leq \delta_1 + \delta_2 \to 0$. Note that $\| \vthe^*\|_{\infty} \leq B$, then 
    \begin{equation*}
        \begin{aligned}
            \var \big(\widehat{\theta}_j\big) & %= \E \big[ \var \big( \widehat{\theta}_j \, | \, \mathscr{A} \big)\big] + \var \big[ \E \big( \widehat{\theta}_j \, | \, \mathscr{A}\big) \big] 
            \leq \frac{\e^2 (\zeta + 1)^2 s^2 \lambda_1^2}{p C_{\min}^2} + \left( \frac{\e (\zeta + 1) s \lambda_1}{p C_{\min}} \pr (\mathscr{A}) + \| \vthe^*\|_{\infty} \big( 1 - \pr (\mathscr{A}) \big) \right)^2\\
            & \lesssim \frac{s^2 \lambda_1^2}{p} \asymp \frac{1}{p}\left(\sqrt{\frac{\log p}{n}}+\sqrt{\frac{\log p}{m}} + \frac{\log p}{m}\right)^2.
        \end{aligned}
    \end{equation*}
    Finally, by using the same argument in \eqref{B.5-1}, we conclude that 
    \[
        \begin{aligned}
            \var \big(\widehat{\theta}_j\big) & \lesssim \frac{1}{p}\left(\sqrt{\frac{\log p}{n}}+\sqrt{\frac{\log p}{m}} + \frac{\log p}{m}\right)^2 \\
            & \lesssim \frac{1}{p} \sqrt{\frac{\log p}{n}} \left(\sqrt{\frac{\log p}{n}}+\sqrt{\frac{\log p}{m}} + \frac{\log p}{m}\right) \\
            &\lesssim \frac{1}{n} + \frac{1}{p} \sqrt{\frac{\log p}{n}} \left(\sqrt{\frac{\log p}{m}} + \frac{\log p}{m}\right) \lesssim  \frac{1}{n},
        \end{aligned}
    \]
    where the last step is by $\frac{m}{n} \gtrsim \log p$. Thus, we complete the proof of the lemma.
\end{proof}
\vspace{6ex}

\noindent \textbf{Proof of the Theorem \ref{thm6-1}}\label{App-D.2}

\begin{proof}
    Since $M_j$ is naturally symmetric about $0$ for any $j \in S_0$, from Proposition 1 in \cite{dai2022false}, we only need to verify that there exists some constants $C > 0$ and $\alpha \in (0, 2)$ such that
    $
        \operatorname{var} \left( \sum_{j \in S_0} \mathds{1}(M_j > t)\right) \leq C p_0^{\alpha},
    $
    for any $t \in \mathbb{R}$, where $p_0 = |S_0|$. 
    It is suffices to show
    \begin{equation}\label{D-1}
        \sup_{t \in \mathbb{R}} \operatorname{var} \left( \frac{1}{p_0}\sum_{j \in S_0} \mathds{1}(M_j > t)\right) \to 0.
    \end{equation}
    %Indeed, denote the correlated set as $\mathscr{C} := \{(j, k) :  j, k \in S_0, \, \Theta_{j k}^* \neq 0\}$ and the uncorrelated set as $\mathscr{C}^c = S_0 \times S_0 \setminus \mathscr{C} = \{(j, k) : j, k \in S_0, \, \Theta_{jk}^* = 0\}$, where $\Theta^* = \big(H^*\big)^{-1}$ is the precision matrix. Then
    Indeed, we decompose this quantity as
    \begin{equation}\label{D-2}
        \begin{aligned}
            \var \left( \frac{1}{p_0}\sum_{j \in S_0} \mathds{1}(M_j > t)\right) & \leq \max_{(j, k) \in S_0 \times S_0 } \big| \mathrm{P} (M_j > t) \mathrm{P} (M_k > t) - H^2(t) \big| + \frac{1}{p_0^2} \sum_{j, k \in  S_0 \times S_0}\big| \mathrm{P} (M_j > t, M_k > t) - H^2(t) \big|,
        \end{aligned}
    \end{equation}
    % \begin{equation}\label{D-2}
    %     \begin{aligned}
    %         \operatorname{var} \left( \frac{1}{p_0}\sum_{j \in S_0} \mathds{1}(M_j > t)\right) & = \frac{1}{p_0^2}  \sum_{(j, k) \in \mathscr{C}} \operatorname{cov} \big( \mathds{1}(M_j > t), \mathds{1}(M_k > t) \big) \\
    %         & \qquad + \frac{1}{p_0^2} \sum_{(j, k) \in \mathscr{C}^c} \operatorname{cov} \big( \mathds{1}(M_j > t), \mathds{1}(M_k > t) \big) \\
    %         & \leq \frac{|\mathscr{C}|}{p_0^2} + \max_{(j, k) \in \mathscr{C}^c } \big| \mathrm{P} (M_j > t) \mathrm{P} (M_k > t) - H^2(t) \big|  \\
    %         & \qquad + \frac{1}{p_0^2} \sum_{j, k \in \mathscr{C}^c}\big| \mathrm{P} (M_j > t, M_k > t) - H^2(t) \big|.
    %     \end{aligned}
    % \end{equation}
    where $H(t) = \mathrm{P} \big( \operatorname{sgn} (Z_1 Z_2) f(Z_1, Z_2) > t\big)$ with $Z_1$ and $Z_2$ are independent standard normal distribution. 
    %First, note that $\Theta_{jk}^* \neq 0$ implies $X_j \ind X_k \mid X_{S_0 \setminus \{ j, k\}}$,
    %by Assumption \ref{ass8},
    % \begin{equation*}
    %     \frac{|\mathscr{C}|}{p_0^2} \leq \frac{v^* p_0}{p_0^2} = o \left( \frac{\sqrt{n}}{p_0 \log p } \right) = \frac{1}{\log^{3/2} p} \longrightarrow 0.
    % \end{equation*}
    From Theorem \ref{5-2}, for whole-data based statistic $T_j$, denoted by $\widetilde{T}_j := \widetilde{\theta}_j \sqrt{n H_{j | -j}^*}$, we have
    \begin{equation*}
        \begin{aligned}
            \sup_{t \in \mathbb{R}} \big| \mathrm{P} (T_j \leq t) - \Phi(t)\big| & \leq \sup_{t \in \mathbb{R}}\big| \mathrm{P} (T_j \leq t) - \mathrm{P} (\widetilde{T}_j \leq t) \big| +  \sup_{t \in \mathbb{R}}\big| \mathrm{P} (\widetilde{T}_j \leq t) - \Phi(t)\big| \\
            & = \sup_{t \in \mathbb{R}} \mathrm{E} \big[ \mathrm{P} (T_j \leq t \, | \, \widehat{H}_{j | -j}) - \mathrm{P} (\widetilde{T}_j \leq t \, | \, \widehat{H}_{j | -j})\big] + o(1)\\
            & = \sup_{t \in \mathbb{R}} \left| \mathrm{E} \int_t^{t\sqrt{{H_{j | -j}^*}/{\widehat{H}_{j | -j}}}} \, \phi(x) \, dx \right| + o(1)\\
            & \leq \sup_{t \in \mathbb{R}} \mathrm{E} \left| t \big( \sqrt{{H_{j | -j}^*}/{\widehat{H}_{j | -j}}} - 1\big)\right| + o(1) \\
            & = \lim_{n \rightarrow \infty} \sup_{t \in [-n, n]} |t| \mathrm{E} \left|\big( \sqrt{{H_{j | -j}^*}/{\widehat{H}_{j | -j}}} - 1\big)\right| + o(1) = 0,
        \end{aligned}
    \end{equation*}
    where $\phi(\cdot)$ is the density of standard normal density and the last equality holds by the uniform integrability of $\widehat{H}_{j | -j}$. By Lemma A.5 in \cite{dai2022false}, we have $\sup_{t \in \mathbb{R}, \, j \in S_0} |\mathrm{P} (M_j > t) - H(t)| \longrightarrow 0$, which implies
    \begin{equation*}
        \sup_{t \in \mathbb{R}} \max_{(j, k) \in S_0 \times S_0} \big| \mathrm{P} (M_j > t) \mathrm{P} (M_k > t) - H^2(t) \big| \, \to \, 0.
    \end{equation*}
    For the last term in \eqref{D-2}, we make the following decomposition
    \begin{equation*}
        \begin{aligned} 
            \mathrm{P}(M_{j}>t, M_{k}>t) &=\mathrm{P} \left(T_{j}^{(2)}>I_{t}(T_{j}^{(1)}), \,  T_{k}^{(2)}>I_{t}(T_{k}^{(1)}), \, T_{i}^{(1)}>0, \,  T_{j}^{(1)}>0\right) \\ & ~~~~~~ +\mathrm{P}\left(T_{j}^{(2)}>I_{t}(T_{j}^{(1)}), \,  T_{k}^{(1)}<-I_{t}(T_{j}^{(1)}), \, T_{j}^{(1)}>0, \,  T_{k}^{(1)}<0\right) \\ &~~~~~~ +\mathrm{P}\left(T_{j}^{(2)}<-I_{t}(T_{j}^{(1)}), \,  T_{k}^{(2)}>I_{t}(T_{k}^{(1)}), \, T_{j}^{(1)}<0, \, T_{k}^{(1)}>0\right) \\ &~~~~~~ +\mathrm{P}\left(T_{j}^{(2)}<-I_{t}(T_{i}^{(1)}), \,  T_{j}^{(2)}<-I_{t}(T_{j}^{(1)}), \, T_{j}^{(1)}<0, \,  T_{k}^{(1)}<0\right) \\ 
            &:=D_{1}+D_{2}+D_{3}+D_{4},
        \end{aligned}
    \end{equation*}
    where $I_t(v) = \inf\{u \geq 0 \, : \, f(u, v) > t\}$. For $D_1$, 
    \begin{equation*}
        \begin{aligned}
            D_1 & = \mathrm{E} \left[ \mathrm{P} \big( T_j^{(2)} > I_t(x), \, T_k^{(2)} > I_t(y)\big) \, \big| \, T_j^{(1)} = x, \, T_k^{(1)} = y \right] \\
            & = \mathrm{E} \left[ \mathrm{P} \big( \widetilde{T}_j^{(2)} > I_t(x), \, \widetilde{T}_k^{(2)} > I_t(y)\big) \, \big| \, T_j^{(1)} = x, \, T_k^{(1)} = y \right] + o(1) \\
            & \leq \mathrm{E} \left[ Q(I_t(x)) Q(I_t(y)) \, \big| \, T_j^{(1)} = x, \, T_k^{(1)} = y \right] + c_1 \big|\operatorname{cov} (\widetilde{T}_j^{(2)}, \widetilde{T}_k^{(2)})\big| + o(1),
        \end{aligned}
    \end{equation*}
    where $Q(t) = 1 - \Phi(t)$, $c_1$ is some positive number, and the last ``$\leq$" follows from Mehler's identity and Lemma 1 in \cite{azriel2015}. For the zero mean bivariate normal distribution function $\Phi_{\rho}(t_1, t_2)$ with covariance matrix $\left[ \begin{matrix} 1 & \rho \\ \rho & 1 \end{matrix} \right]$, Mehler's identity ensures that
    \begin{equation*}
        \begin{aligned}
            \Phi_{\rho}(t_1, t_2) & = \Phi(t_1) \Phi(t_2) + \sum_{n = 1}^{\infty} \frac{\rho^n}{n!} \phi^{(n - 1)} (t_1) \phi^{(n - 1)} (t_2) \\
            & = \Phi(t_1) \Phi(t_2) + \rho \phi(t_1) \phi(t_2) + \sum_{n = 2}^{\infty} \frac{\rho^n}{n!} \phi^{(n - 1)} (t_1) \phi^{(n - 1)} (t_2) \\
            & \leq \Phi(t_1) \Phi(t_2) + \rho \phi(t_1) \phi(t_2) + c_2 \rho \leq \Phi(t_1) \Phi(t_2) + (1 / (2 \pi) + c_2) \rho.
        \end{aligned} 
    \end{equation*}
    where the last inequality holds due to 
   $
        \sum_{n = 2}^{\infty} \frac{[\sup_{t \in \mathbb{R}} \phi^{(n - 1)} (t)]^2}{n!} < \infty
   $
    by Lemma 1 in \cite{azriel2015}. 
    %In the next step, we will prove that for any $(j ,k) \in \mathscr{C}^c$, $ \operatorname{cov} (\widetilde{\theta}_j, \widetilde{\theta}_k) = O(n^{-1})$, which implies $\operatorname{cov} (\widetilde{T}_j^{(2)}, \widetilde{T}_k^{(2)}) = O\big( \sqrt{H_{j|-j}^* H_{k|-k}^*}\big)$ in turn.
    %\vspace{2ex}
    We first deal with the covariance $\operatorname{cov} (\widetilde{T}_j^{(2)}, \widetilde{T}_k^{(2)})$. Denote $\nabla_j = \nabla_{\theta_j}$,
    by the definition of the one-step estimator 
    \begin{equation}\label{D-3}
        \begin{aligned}
            \operatorname{cov} (\widetilde{\theta}_j, \widetilde{\theta}_k) & = \operatorname{cov} \Big( \widehat{\theta}_j - \big( \nabla_j \, \widehat{U} (\widehat{\theta}_j, \widehat{\vthe}_{-j})\big)^{-1} \widehat{U} (\widehat{\vthe}), \ \widehat{\theta}_k - \big( \nabla_k \, \widehat{U} (\widehat{\theta}_k, \widehat{\vthe}_{-k})\big)^{-1} \widehat{U} (\widehat{\vthe}) \Big) \\
            & = \cov(\widehat{\theta}_j, \widehat{\theta}_k) - \operatorname{cov} \Big( \widehat{\theta}_j, \ \big( \nabla_k \, \widehat{U} (\widehat{\theta}_k, \widehat{\vthe}_{-k})\big)^{-1} \widehat{U} (\widehat{\vthe}) \Big) - \operatorname{cov} \Big( \big( \nabla_j \, \widehat{U} (\widehat{\theta}_j, \widehat{\vthe}_{-j})\big)^{-1} \widehat{U} (\widehat{\vthe}), \ \widehat{\theta}_k \Big) \\
            & \qquad + \operatorname{cov} \Big( \big( \nabla_j \, \widehat{U} (\widehat{\theta}_j, \widehat{\vthe}_{-j})\big)^{-1} \widehat{U} (\widehat{\vthe}), \ \big( \nabla_k \, \widehat{U} (\widehat{\theta}_k, \widehat{\vthe}_{-k})\big)^{-1} \widehat{U} (\widehat{\vthe}) \Big).
        \end{aligned}
    \end{equation}
    %The $U$ function above is not equal for each term, but it does not matter. 
    In the above decomposition, the first term is $O(n^{-1})$ by Lemma \ref{lem6-1}, thus it is sufficient to show the last term above is $O(n^{-1})$. Note that 
    $
        \widehat{U} (\widehat{\vthe}) = \widehat{U} (\theta_j^*, \widehat{\vthe}_{-j}) + \nabla_j \widehat{U} (\overline{\theta}_j, \widehat{\vthe}_{-j}) (\widehat{\theta}_j - \theta_j^*),
    $
    where $\overline{\theta}_j$ lies between $\theta_j$ and $\widehat{\theta}_j$. By the similar proof in Theorem \ref{5-2}, one can show that $\nabla_j \widehat{U} (\widehat{\theta}_j, \widehat{\vthe}_{-j}) = H_{j | -j}^* + o_p(1)$ and $\nabla_j \widehat{U} ({\theta}_j^*, \widehat{\vthe}_{-j}) = \nabla_j \widehat{U} (\widehat{\theta}_j, \widehat{\vthe}_{-j}) + O_p (|\widehat{\theta}_j - \theta_j^* |) = H_{j | -j}^* + o_p(1)$. Thus, by the continuity and Assumption \ref{4-5}, we know that
    $
        \nabla_j \widehat{U} (\cdot, \widehat{\vthe}_{-j}) \in [c_2, c_3]$,

    where $c_2$ and $c_3$ are some positive number independent with $n$, $m$, and $p$.
    % Since Assumption \ref{4-5} confirms that $H^*_{j | -j}$ must be in an interval away from zero point. By the continuity of $\nabla_j \widehat{U} (\cdot, \widehat{\vthe}_{-j})$, we know that
    % \begin{equation*}
    %     \nabla_j \widehat{U} (v, \widehat{\vthe}_{-j}) \in [c_2, c_3]
    % \end{equation*}
    % for $v = \theta_j^*$, $\widehat{\theta}_j$, and $\overline{\theta}_j$, in which $c_2$ and $c_3$ are some positive number independent with $n$, $m$, and $p$.
    Then, 
    \begin{equation*}
        \begin{aligned}
            & \operatorname{cov} \Big( \big( \nabla_j \, \widehat{U} (\widehat{\theta}_j, \widehat{\vthe}_{-j})\big)^{-1} \widehat{U} (\widehat{\vthe}), \ \big( \nabla_k \, \widehat{U} (\widehat{\theta}_k, \widehat{\vthe}_{-k})\big)^{-1} \widehat{U} (\widehat{\vthe}) \Big) \\
            \leq &  c_2^{-2} \operatorname{cov} \big( \widehat{U} (\widehat{\theta}_j, \widehat{\vthe}_{-j}), \ \widehat{U} (\widehat{\theta}_k, \widehat{\vthe}_{-k}) \big) \\
            = & c_2^{-2} \operatorname{cov} \big( \widehat{U} (\theta_j^*, \widehat{\vthe}_{-j}) + \nabla_j \widehat{U}(\overline{\theta}_j, \widehat{\vthe}_{-j}) (\widehat{\theta}_j - \theta_j^*), \ \widehat{U} (\theta_k^*, \widehat{\vthe}_{-k}) + \nabla_k \widehat{U}(\overline{\theta}_k, \widehat{\vthe}_{-k}) (\widehat{\theta}_k - \theta_k^*) \big) \\
            \lesssim & \operatorname{cov} \big( \widehat{U}(\theta_j^*, \widehat{\vthe}_{-j}), \ \widehat{U}(\theta_k^*, \widehat{\vthe}_{-k}) \big) + \operatorname{cov} (\widehat{\theta}_j, \widehat{\theta}_k) + \operatorname{cov} \big( \widehat{U}(\theta_j^*, \widehat{\vthe}_{-j}),  \widehat{\theta}_k \big).
        \end{aligned}
    \end{equation*}
    Theorem \ref{4-11} implies that the first term is $O\left( n^{-1} \sqrt{H_{j | -j}^*  H_{k | -k}^*}\right)$,  Lemma \ref{lem6-1} ensures the second term above is $O(n^{-1})$, and the Cauchy's inequality gives the third term above is also $O(n^{-1})$.
    %Cauchy-Schwartz inequality that $\operatorname{cov}(\xi_1, \xi_2) \leq \sqrt{\var (\xi_1) \var(\xi_2)}$ for any random variables $\xi_1$ and $\xi_2$, together with Lemma \ref{lem6-1} gives the second term above is also $O(n^{-1})$; 
    Hence, for $D_1$, we have
    \begin{equation*}
        D_1 \leq \mathrm{E} \left[ Q(I_t(x)) Q(I_t(y)) \, \big| \, T_j^{(1)} = x, \, T_k^{(1)} = y \right] + c_4 \sqrt{H_{j | -j}^*  H_{k | -k}^*} + o(1),
    \end{equation*}
    where $c_4$ is some positive number free of $n, m, p$. 
    %Similarly, we can upper bound $D_2$, $D_3$, $D_4$, and combine them together to
    By applying the similar techniques for $D_2$, $D_3$, and $D_4$, we get an upper bound on $\mathrm{P} (M_i > t, M_j > t)$ specified as below
    \begin{equation*}
        \mathrm{P} \big( \operatorname{sgn} (Z_j^{(2)} T_j^{(1)}) f(Z_j^{(2)} T_j^{(1)}) > t, \operatorname{sgn} (Z_k^{(2)} T_k^{(1)}) f(Z_k^{(2)} T_k^{(1)}) > t \big) + c_5 \sqrt{H_{j | -j}^*  H_{k | -k}^*} + o(1)
    \end{equation*}
    with some positive $c_5$, in which $Z_j^{(2)}$ and $Z_k^{(2)}$ are two independent random variables following the standard normal distribution. By conditioning on the signs of $Z_j^{(2)}$ and $Z_k^{(2)}$ and decomposing the above display as previously, one can show 
    $
        \mathrm{P} (M_j > t, M_k > t) \leq H^2(t) + c_6 \sqrt{H_{j | -j}^*  H_{k | -k}^*} + o(1)
    $
    with $c_6 > 0$. 
    %Similarly, we can establish the corresponding lower bound. 
    Therefore, 
    \begin{equation*}
        \begin{aligned}
            \frac{1}{p_0^2} \sum_{j, k \in S_0} \big|  \mathrm{P} (M_j > t, M_k > t) - H^2(t) \big| & \leq  \frac{c_6}{p_0^2} \sum_{j, k \in S_0} \sqrt{H_{j | -j}^*  H_{k | -k}^*} + o(1) \\
            & \leq \frac{c_6}{p_0^2} \sum_{j, k \in S_0} \sqrt{H_{jj}^*  H_{kk}^*} + o(1 ) \\
            & \leq \frac{c_6}{p_0^2} p_0 \lambda_{\max}(H^*) + o(1) %\to 0
        \end{aligned}
    \end{equation*}
    uniformly on $t \in \mathbb{R}$ by Assumption \ref{ass6}. As a result, \ref{D-1} is valid, and we complete the proof.
\end{proof}
\vspace{6ex}

\noindent \textbf{Proof of the Theorem \ref{thm6-2}:}\label{App-D.3}

\begin{proof}
    By Theorem \ref{thm6-1}, we know
         $\mathop{\lim \sup}_{n, p \rightarrow \infty} \sum_{j \in S_0} I_j \leq q$.
    Therefore, $\mathop{\lim \sup}_{n, p \rightarrow \infty} \sum_{k \in S_1} I_k \geq 1- q$, which implies
    \begin{equation*}
        \mathop{\lim \sup}_{n, p \rightarrow \infty} \max_{k \in S_1} I_k \geq \frac{1 - q}{s}.
    \end{equation*}
    And for any $\alpha \in (0, 1)$, note that $p_0 \rightarrow \infty$, we have $(1 - q) / s > \alpha / p_0 $. Then
    \begin{equation*}
        \begin{aligned}
            \mathop{\lim \sup}_{n, p \rightarrow \infty} \frac{1}{s} \sum_{k \in S_1} \mathds{1} \bigg( I_k \leq \frac{\alpha}{p_0}\bigg) &= \mathop{\lim \sup}_{n, p \rightarrow \infty} \frac{1}{s} \mathds{1} \Big( \max_{k \in S_1} I_k \leq \frac{\alpha}{p_0}\Big) \leq \mathop{\lim \sup}_{n, p \rightarrow \infty} \frac{1}{s} \mathds{1} \Big( \max_{k \in S_1} I_k < \frac{1 - q }{s}\Big) = 0.
        \end{aligned}
    \end{equation*}
    So {rank faithfulness} in \cite{dai2022false} holds. Finally, since for any $j \in S_0$, $M_j$ has the same limit distribution, then
    $
        j \in \widehat{S} \, \Leftrightarrow \, M_j > \tau_q
   $
    shares with the same law by Assumption \ref{ass6}, the {null exchangeability} in \cite{dai2022false} are also satisfied. By Proposition 2 in \cite{dai2022false}, we obtain this theorem. %{\color{red}[Could you please explain the emph?]}
\end{proof}
\vspace{6ex}

\noindent \textbf{Proof of the Theorem \ref{6-3}:}\label{App-D}

\begin{proof}
Let $N_{1}(t)$ be the number of true null hypotheses with e-values larger than or equal to $t$ and $N_{2}(t)$ be the number of all hypotheses with e-values larger than or equal to $t$. Define $t_{\epsilon}:=\inf \left\{ t\ge 0: \frac{N_{2}(t)t}{p} \ge \frac{1}{\epsilon}\right\}.$
Note that
$
    \frac{k e_{(k)}}{p}=\frac{N_{2}(e_{(k)})e_{(k)}}{p},
$
%therefore in Step \ref{step-2} of the algorithm, 
we reject all hypotheses with e-values larger than or equal to $t_{\epsilon}$, from which the FDR can be written as
\begin{equation*}
    \fdr=\E\left( \frac{N_{1}(t_{\epsilon})}{N_{2}(t_{\epsilon})}\right)=  \frac{\epsilon}{p}\cdot \E(t_{\epsilon}N_{1}(t_{\epsilon})),
\end{equation*}
where $N_{2}(t_{\epsilon})=p/ \epsilon t_{\epsilon}$. Note 
    $N_{1}(t_{\epsilon})=\sum_{k\in \mathscr{N}} 1(E_{k} \ge t_{\epsilon}),
$
thus we have
$
    \E(t_{\epsilon}N_{1}(t_{\epsilon})) = \sum_{k\in \mathscr{N}} \E(t_{\epsilon} 1(E_{k}\ge t_{\epsilon}) ) \le \sum_{k\in \mathscr{N}} \E(E_{k}),
$
where the inequality hold due to $t_{\epsilon} 1(E_{k}\ge t_{\epsilon}) \le E_{k}$ a.s.. Now, we bound the FDR by
\begin{equation}\label{App-D-1}
    \fdr \le \epsilon \cdot \frac{|\mathscr{N}|}{p} \cdot\frac{1}{|\mathscr{N}|}\sum_{k\in \mathcal{N}} \E(E_{k}) \le \epsilon\cdot \frac{1}{|\mathscr{N}|}\sum_{k\in \mathscr{N}} \E(E_{k}).
\end{equation}
By taking the limit superior on both sides of \eqref{App-D-1} and Assumption \ref{6-1}, we conclude the result.
% \begin{equation*}
%     \varlimsup_{n \rightarrow \infty} \fdr \le \epsilon \varlimsup_{n \rightarrow \infty}\frac{1}{|\mathcal{N}|}\sum_{k\in \mathcal{N}} \E(E_{k}) \le \epsilon.
% \end{equation*}
\end{proof}

\end{appendix}

\end{document}